\documentclass{article} % For LaTeX2e
\usepackage{iclr2024,times}

\usepackage[utf8]{inputenc} % allow utf-8 input
\usepackage[T1]{fontenc}    % use 8-bit T1 fonts
\usepackage{hyperref}       % hyperlinks
\usepackage{url}            % simple URL typesetting
\usepackage{booktabs}       % professional-quality tables
\usepackage{amsfonts}       % blackboard math symbols
\usepackage{nicefrac}       % compact symbols for 1/2, etc.
\usepackage{microtype}      % microtypography
\usepackage{xcolor}         % colors

% Optional math commands from https://github.com/goodfeli/dlbook_notation.
% \input{math_commands.tex}
\usepackage[backgroundcolor=white, textsize=small, textwidth=20mm, disable]{todonotes}
\usepackage{proj}

\usepackage{wrapfig}

\title{Goodhart's Law in Reinforcement Learning}

% Authors must not appear in the submitted version. They should be hidden
% as long as the \iclrfinalcopy macro remains commented out below.
% Non-anonymous submissions will be rejected without review.

\author{Jacek Karwowski\\
Department of Computer Science\\
University of Oxford\\
\texttt{jacek.karwowski@cs.ox.ac.uk}
\And
Oliver Hayman\\
Department of Computer Science\\
University of Oxford\\
\texttt{oliver.hayman@linacre.ox.ac.uk}
\And
Xingjian Bai\\
Department of Computer Science\\
University of Oxford\\
\texttt{xingjian.bai@sjc.ox.ac.uk}
\And
Klaus Kiendlhofer\\
Independent\\
\texttt{klaus.kiendlhofer@gmail.com}
\And
Charlie Griffin\\
Department of Computer Science\\
University of Oxford\\
\texttt{charlie.griffin@cs.ox.ac.uk}
\And
Joar Skalse\\
Department of Computer Science\\
Future of Humanity Institute\\
University of Oxford\\
\texttt{joar.skalse@cs.ox.ac.uk}
}

% The \author macro works with any number of authors. There are two commands
% used to separate the names and addresses of multiple authors: \And and \AND.
%
% Using \And between authors leaves it to \LaTeX{} to determine where to break
% the lines. Using \AND forces a linebreak at that point. So, if \LaTeX{}
% puts 3 of 4 authors names on the first line, and the last on the second
% line, try using \AND instead of \And before the third author name.

\iclrfinalcopy % Uncomment for camera-ready version, but NOT for submission.
\begin{document}

\maketitle

\begin{abstract}
Implementing a reward function that perfectly captures a complex task in the real world is impractical. As a result, it is often appropriate to think of the reward function as a \emph{proxy} for the true objective rather than as its definition. We study this phenomenon through the lens of \emph{Goodhart’s law}, which predicts that increasing optimisation of an imperfect proxy beyond some critical point decreases performance on the true objective. First, we propose a way to \emph{quantify} the magnitude of this effect and \emph{show empirically} that optimising an imperfect proxy reward often leads to the behaviour predicted by Goodhart’s law for a wide range of environments and reward functions. We then provide a \emph{geometric explanation} for why Goodhart's law occurs in Markov decision processes. We use these theoretical insights to propose an \emph{optimal early stopping method} that provably avoids the aforementioned pitfall and derive theoretical \emph{regret bounds} for this method. Moreover, we derive a training method that maximises worst-case reward, for the setting where there is uncertainty about the true reward function. Finally, we evaluate our early stopping method experimentally. Our results support a foundation for a theoretically-principled study of reinforcement learning under reward misspecification.
\end{abstract}

\section{Introduction}

To solve a problem using Reinforcement Learning (RL), it is necessary first to formalise that problem using a reward function \citep{sutton2018}. 
However, due to the complexity of many real-world tasks, it is exceedingly difficult to directly specify a reward function that fully captures the task in the intended way.
However, misspecified reward functions will often lead to undesirable behaviour \citep{paulus2017deep, ibarz2018reward, knox2022reward, reward_misspecification}. This makes designing good reward functions a major obstacle to using RL in practice, especially for safety-critical applications.

An increasingly popular solution is to \emph{learn} reward functions from mechanisms such as human or automated feedback \cite[e.g.\ ][]{NIPS2017_d5e2c0ad, ng2000}.
However, this approach comes with its own set of challenges: 
the right data can be difficult to collect \cite[e.g.\ ][]{paulus2017deep}, and it is often challenging to interpret it correctly \cite[e.g.\ ][]{armstrong2019occams, skalse2023misspecification}.
Moreover, optimising a policy against a learned reward model effectively constitutes a distributional shift \citep{gao_scaling_2022}; i.e., even if a reward function is accurate under the training distribution, it may fail to induce desirable behaviour from the RL agent.

Therefore in practice it is often more appropriate to think of the reward function as a \emph{proxy} for the true objective rather than \emph{being} the true objective. %, and it should not be assumed to always be accurate.
This means that we need a more principled and comprehensive understanding of what happens when a proxy reward is maximised, in order to know how we should expect RL systems to behave, and in order to design better and more stable algorithms.
For example, we aim to answer questions such as: 
\emph{When is a proxy safe to maximise without constraint?} 
\emph{What is the best way to maximise a misspecified proxy?} 
\emph{What types of failure modes should we expect from a misspecified proxy?}
Currently, the field of RL largely lacks rigorous answers to these types of questions.

In this paper,
we study the effects of proxy misspecification through the lens of \emph{Goodhart's law}, which is an informal principle often stated as \enquote{any observed statistical regularity will tend to collapse once pressure is placed upon it for control purposes} \citep{goodhart_problems_1975}, or more simply: \enquote{when a measure becomes a target, it ceases to be a good measure}.
For example, a student's knowledge of some subject may, by default, be correlated with their ability to pass exams on that subject. However, students who have sufficiently strong incentives to do well in exams may also include strategies such as cheating for increasing their test score without increasing their understanding.
In the context of RL, we can think of a misspecified proxy reward as a measure that is correlated with the true objective across some distribution of policies, but without being robustly aligned with the true objective.
Goodhart's law then says, informally, that we should expect optimisation of the proxy to initially lead to improvements on the true objective, up until a point where the correlation between the proxy reward and the true objective breaks down,\oliver{don't want to delete this, but I feel unusre as to what this means and think it should be changed} after which further optimisation should lead to worse performance according to the true objective.
A cartoon depiction of this dynamic is given in Figure~\ref{fig:goodhart_cartoon}.

\begin{wrapfigure}{r}{0.265\textwidth}\centering
\vspace{-6mm}  % 4
    \includegraphics[width=\linewidth]{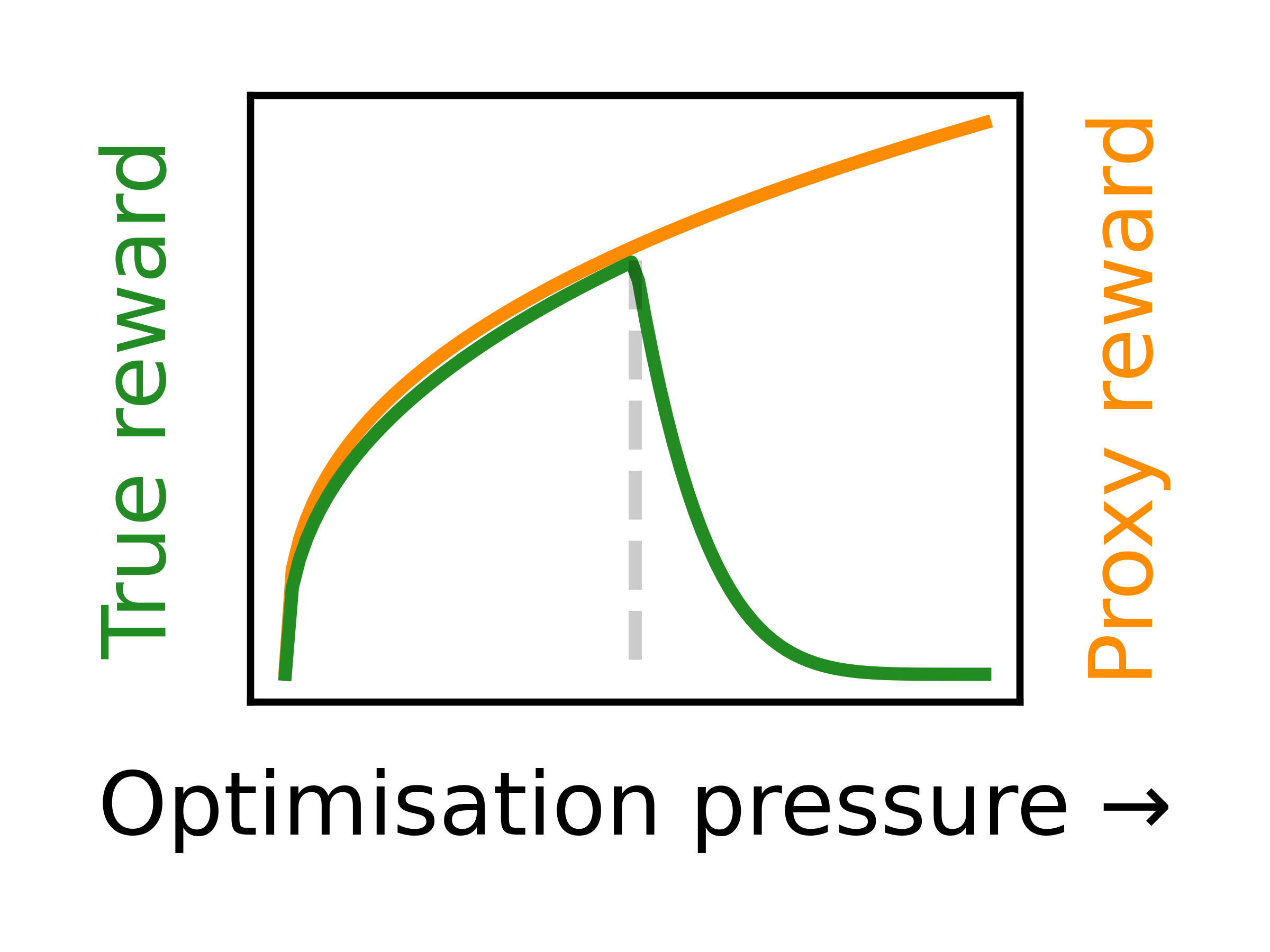}
    \vspace{-6mm}  % 10
    \caption{A cartoon of Goodharting.}
    \label{fig:goodhart_cartoon}
    \vspace{-7mm}  % 10
\end{wrapfigure}

In this paper, we present several novel contributions. 
First, we show that \enquote{Goodharting} occurs with high probability for a wide range of environments and pairs of true and proxy reward functions. Next, we provide a mechanistic explanation of \emph{why} Goodhart's law emerges in RL. 
We use this to derive two new policy optimisation methods and show that they \emph{provably} avoid Goodharting. 
Finally, we evaluate these methods empirically.
We thus contribute towards building a better understanding of the dynamics of optimising towards imperfect proxy reward functions, and show that these insights may be used to design new algorithms.

\subsection{Related Work}\label{sec:related-work}

Goodhart’s law  was first introduced by \citet{goodhart_problems_1975}, and has later been elaborated upon by works such as \citet{manheim_categorizing_2019}.
Goodhart's law has also previously been studied in the context of machine learning. 
In particular, \citet{hennessy_goodharts_2023} investigate Goodhart's law analytically in the context where a machine learning model is used to evaluate an agent’s actions -- unlike them, we specifically consider the RL setting\oliver{I'm guessing what we're doing is different in other, more useful ways as well?}. \citet{ashton_causal_2021} shows by example that RL systems can be susceptible to Goodharting in certain situations. In contrast, we show that Goodhart's law is a robust phenomenon across a wide range of environments, explain why it occurs in RL, and use it to devise new solution methods.

In the context of RL, Goodhart's law is closely related to \emph{reward gaming}. Specifically, if reward gaming means an agent finding an unintended way to increase its reward, then Goodharting is an \emph{instance} of reward gaming where optimisation of the proxy initially leads to desirable behaviour, followed by a decrease after some threshold.
\Citet{krakovna_specification_2020} list illustrative examples of reward hacking, while \citet{reward_misspecification} manually construct proxy rewards for several environments and then demonstrate that most of them lead to reward hacking.
\citet{subset_features} consider proxy rewards that depend on a strict subset of the features which are relevant to the true reward and then show that optimising such a proxy in some cases may be arbitrarily bad, given certain assumptions.
\cite{skalse_defining_2022} introduce a theoretical framework for analysing reward hacking. They then demonstrate that, in any environment and for any true reward function, it is impossible to create a non-trivial proxy reward that is guaranteed to be unhackable. 
% (unless the proxy is equivalent to the true reward)
Also relevant, \citet{reward_corruption} study the related problem of reward corruption, \citet{song_observational_2019} investigate overfitting in model-free RL due to faulty implications from correlations in the environment, and
\citet{pang_reward_2023} examine reward gaming in language models. 
Unlike these works, we analyse reward hacking through the lens of \emph{Goodhart's law} and show that this perspective provides novel insights.

\cite{gao_scaling_2022} consider the setting where a large language model is optimised against a reward model that has been trained on a \enquote{gold standard} reward function, and investigate how the performance of the language model according to the gold standard reward scales in the size of the language model, the amount of training data, and the size of the reward model.
They find that the performance of the policy follows a Goodhart curve, where the slope gets less prominent %and the performance gets better 
for larger reward models and larger amounts of training data.
Unlike them, we do not only focus on language, but rather, aim to establish to what extent Goodhart dynamics occur for a wide range of RL environments. Moreover, we also aim to \emph{explain} Goodhart's law, and use it as a starting point for developing new algorithms.

\section{Preliminaries}\label{sec:preliminaries}

A \emph{Markov Decision Process} (MDP) is a tuple $\FullMDP$, where $\S$ is a set of \emph{states}, $\A$ is a set of \emph{actions}, $\T: \SxA \to \dist{S}$ is a transition function describing the outcomes of taking actions at certain states, $\m \in \dist{S}$ is the distribution of the initial state, $\R \in \RR^{|\SxA|}$ gives the reward for taking actions at each state, and $\gamma \in [0, 1]$ is a time discount factor.
In the remainder of the paper, we consider $\A$ and $\S$ to be finite. Our work will mostly be concerned with rewardless MDPs, denoted by \MDPR = $\FullMDPR$, where the true reward $\R$ is unknown.
A \emph{trajectory} is a sequence $\trajectory = (s_0, a_0, s_1, a_1, \ldots)$ such that $a_i \in \A$, $s_i \in \S$ for all $i$.
We denote the space of all trajectories by $\Trajectories$.
A \emph{policy} is a function $\policy :\S \to \dist{A}$. We say that the policy $\policy$ is deterministic if for each state $s$ there is some $a \in \A$ such that $\policy(s) = \delta_{a}$. We denote the space of all policies by $\Policies$ and the set of all deterministic policies by $\DetPolicies$.
Each policy $\policy$ on an \MDPR induces a probability distribution over trajectories $\Prob{\trajectory | \policy}$; drawing a trajectory $(s_0, a_0, s_1, a_1, \ldots)$ from a policy $\policy$ means that $s_0$ is drawn from $\m$, each $a_i$ is drawn from $\policy(s_i)$, and $s_{i+1}$ is drawn from $\T(s_i, a_i)$ for each $i$. 
For a given \MDP, the \emph{return} of a trajectory $\trajectory$ is defined to be $\Returns(\trajectory) := \sum_{t=0}^\infty \discount^t \R(s_t, a_t)$ and the expected return of a policy $\policy$ to be $\J(\policy) = \Expect{\trajectory \sim \policy}{G(\trajectory)}$.

An \emph{optimal policy} is one that maximizes expected return; the set of optimal policies is denoted by $\OptimalPolicy$. There might be more than one optimal policy, but the set $\OptimalPolicy$ always contains at least one deterministic policy~\citep{sutton2018}. We define the value-function $\Vfor{\policy}:\S \to \RR$ such that $\Vfor{\policy}[s] = \Expect{\trajectory \sim \policy}{G(\trajectory)|s_0 = s}$, and define the $Q$-function $\Qfor{\policy}:\SxA \to \RR$ to be $\Qfor{\policy}(s, a) = \Expect{\trajectory \sim \policy}{G(\trajectory)|s_0 = s, a_0=a}$. $\VStar, \QStar$ are the value and $Q$ functions under an optimal policy. Given an \MDPR, each reward $R$ defines a separate $\Vfor{\pi}_\R$, $\Qfor{\pi}_\R$, and $\J_\R(\pi)$. In the remainder of this section, we fix a particular {\MDPR = $\FullMDPR$}.

\subsection{The Convex Perspective}\label{subsec:convex-perspective}

In this section, we introduce some theoretical constructs that are needed to express many of our results. We first need to familiarise ourselves with the \emph{occupancy measures} of policies:

\begin{definition}[State-action occupancy measure]\label{def:measure}
    We define a function $\occmeas{-} : \Policies \to \RR^{|\SxA|}$, assigning, to each $\policy \in \Policies$, a vector of \emph{occupancy measure} describing the discounted frequency that a policy takes each action in each state. Formally,
    \[
        \occmeas{\policy}(s,a) = \sum_{t = 0}^{\infty} \gamma^t \Prob{s_t = s, a_t = a \mid \trajectory \sim \policy}
    \]
\end{definition}
%Note also that 
We can recover $\policy$ from $\occmeas{\policy}$ on all visited states by $\pi(s, a) = (1 - \gamma)\occmeas{\policy}(s,a)/\left(\sum_{a' \in \A} \occmeas{\policy}(s, a')\right)$. If $\sum_{a' \in \A} \occmeas{\policy}(s, a') = 0$, we can set $\pi(s, a)$ arbitrarily.
This means that we often can decide to work with the set of possible occupancy measures, rather than the set of all policies.\oliver{should clear this part up}
Moreover:

\begin{proposition}\label{prop:convex_hull}
The set $\Polytope = \{\occmeas{\policy}: \policy \in \Policies\}$ is the convex hull of the finite set of points corresponding to the deterministic policies $\{\occmeas{\policy}: \policy \in \DetPolicies\}$. It lies in an affine subspace of dimension $|\S|(|\A| - 1)$.
\end{proposition}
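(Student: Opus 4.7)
The plan is to realise $\Polytope$ as the solution set of an explicit linear system---the Bellman flow equations---and then invoke standard polytope theory. Specifically, I would show that $\mu \in \Polytope$ if and only if $\mu \in \RR^{|\SxA|}$ satisfies $\mu(s,a) \geq 0$ for all $(s,a)$, together with
\[
\sum_{a \in \A} \mu(s,a) \;=\; \m(s) + \gamma \sum_{s' \in \S}\sum_{a' \in \A} \T(s',a')(s)\, \mu(s',a') \qquad \text{for every } s \in \S.
\]
From this characterisation the remaining work is to (i) identify the vertices with deterministic policies and (ii) count the dimension of the affine hull of the constraints.

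For the forward direction of the characterisation, I would verify the flow equations for $\occmeas{\pi}$ by splitting the defining sum into $t=0$ and $t \geq 1$ and marginalising over the previous state-action pair. For the converse---which I expect to be the main technical step---given $\mu \geq 0$ satisfying the system I would define $\pi(a\mid s) \propto \mu(s,a)$ on states with $\sum_{a'} \mu(s,a') > 0$ and arbitrarily elsewhere; a short calculation shows $\occmeas{\pi}$ satisfies the same linear system with the same right-hand side, and invertibility of $I - \gamma P^{\pi}$ (for $\gamma < 1$) then forces $\occmeas{\pi} = \mu$. The solution set is clearly bounded, as $\mu(s,a) \leq 1/(1-\gamma)$ follows from the definition, so $\Polytope$ is a polytope.

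Next, I would identify its vertices. At a vertex of $\Polytope$, at least $|\SxA| - |\S|$ of the non-negativity constraints must be active (the $|\S|$ flow equalities account for the rest of the codimension), so each state carries positive occupancy mass on at most one action; reconstructing $\pi$ from $\mu$ as above then yields a deterministic policy. Conversely, each $\occmeas{\pi_d}$ with $\pi_d \in \DetPolicies$ is extreme: in any decomposition $\occmeas{\pi_d} = \lambda \occmeas{\pi} + (1-\lambda)\occmeas{\pi'}$ with $\lambda \in (0,1)$, the zero coordinates of $\occmeas{\pi_d}$ must also be zero in $\occmeas{\pi}$ and $\occmeas{\pi'}$, forcing $\pi = \pi' = \pi_d$ on every reachable state. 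Hence $\Polytope$ is the convex hull of $\{\occmeas{\pi} : \pi \in \DetPolicies\}$.

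Finally, for the dimension, the affine hull is cut out by the $|\S|$ Bellman flow equalities; to see these are linearly independent, fix any $\pi_0 \in \DetPolicies$ and restrict the coefficient matrix to the $|\S|$ columns indexed by $\{(s, \pi_0(s))\}_{s \in \S}$, obtaining $I - \gamma P^{\pi_0}$, which is invertible for $\gamma < 1$. The codimension is therefore exactly $|\S|$, giving $\dim \Polytope = |\SxA| - |\S| = |\S|(|\A|-1)$. The hardest part, as noted, is the backward direction of the characterisation: one must exhibit the policy and rigorously verify it recovers $\mu$; everything else reduces to standard linear algebra and polytope theory.
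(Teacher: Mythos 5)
Your overall strategy is sound and shares its starting point with the paper's proof: both reduce $\Polytope$ to the feasible region of the Bellman-flow linear program (the paper cites \citet[Theorem~6.9.1]{puterman1994} for this equivalence, which you re-derive), and both must then show that this polytope is generated by the deterministic occupancy measures. Where the paper proceeds by an explicit inductive decomposition --- repeatedly splitting off a ``deterministic reduction'' of a stochastic occupancy measure until only deterministic pieces remain --- you instead invoke the fact that a bounded polyhedron is the convex hull of its vertices and try to identify the vertices directly. That route can be made to work, but your vertex characterisation contains a genuine gap.

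Specifically, the inference ``at least $|\SxA| - |\S|$ of the non-negativity constraints must be active, \emph{so} each state carries positive occupancy mass on at most one action'' is a non sequitur. Constraint counting only bounds the \emph{total} support size of $\mu$ by $|\S|$; it does not distribute that bound as one action per state. If some state is unvisited (which happens whenever $\m$ lacks full support and some state is unreachable), the support can have size at most $|\S|$ while a visited state still carries two positive actions; nor is basic feasibility immediately contradicted, since the two corresponding columns of the constraint matrix, $e_s - \gamma\,\T(s,a_1,\cdot)$ and $e_s - \gamma\,\T(s,a_2,\cdot)$, are in general linearly independent. To close the gap you can either (a) observe that every column indexed by the support lies in the span of $\{e_{s'} : s' \text{ visited}\}$, so at most $|V|$ of them are independent, which combined with $|\mathrm{supp}(\mu)| \geq |V|$ forces exactly one positive action per visited state; or (b) drop the vertex argument and argue as the paper does, via a splitting lemma: any occupancy measure with two positive actions at a visited state is a proper convex combination of two distinct occupancy measures and hence not extreme. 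A minor final point: your rank computation shows that $\Polytope$ is \emph{contained} in an affine subspace of dimension $|\S|(|\A|-1)$, which is all the proposition asserts; but $\dim\Polytope$ itself can be strictly smaller (unreachable states force coordinates to vanish identically on all of $\Polytope$), so the claim that ``the codimension is exactly $|\S|$'' slightly overstates what your argument establishes.
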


Note that $\J_\R(\policy) = \occmeas{\policy} \cdot \VecR$, meaning that each reward $\R$ induces a linear function on the convex polytope $\Polytope$, which reduces finding the optimal policy to solving a linear programming problem in $\Polytope$. Many of our results crucially rely on this insight.
We denote the orthogonal projection map from $\RR^{|\SxA|}$ to $\linearspan{\Polytope}$ by $\QProj_\T$, which means $\J_\R(\policy) = \occmeas{\policy} \cdot \QProj_\T\VecR$, The proof of Proposition~\ref{prop:convex_hull}, and all other proofs, are given in the appendix.

\subsection{Quantifying Goodhart's law}

Our work is concerned with \emph{quantifying} the Goodhart effect. To do this, we need a way to quantify the \emph{distance between rewards}. We do this using the \emph{projected angle between reward vectors}.

\begin{definition}[Projected angle]\label{def:angle-distance}
    Given two reward functions $\TrueR, \ProxR$, we define $\Angle{\TrueR}{\ProxR}$ to be the angle between $\QProj_\T \TrueR$ and $\QProj_\T \ProxR$.
\end{definition}

The projected angle distance is an instance of a \emph{STARC metric}, introduced by \citet{skalse2023starc}.\footnote{In their terminology, the \emph{canonicalisation function} is $\QProj_\T$, and measuring the angle between the resulting vectors is (bilipschitz) equivalent to normalising and measuring the distance with the $\ell^2$-norm.} Such metrics enjoy strong theoretical guarantees and satisfy many desirable desiderata for reward function metrics. For details, see \citet{skalse2023starc}. In particular:

\begin{proposition}
We have $\Angle{\TrueR}{\ProxR} = 0$ if and only if $\TrueR, \ProxR$ induce the same ordering of policies, or, in other words, $\J_{\TrueR}(\policy) \leq \J_{\TrueR}(\policy') \iff \J_{\ProxR}(\policy) \leq \J_{\ProxR}(\policy')$ for all policies $\policy, \policy'$.
\end{proposition}

We also need a way to quantify \emph{optimisation pressure}. We do this using two different training methods. Both are parametrised by \emph{regularisation strength} $\alpha \in (0, \infty)$: Given a reward $\R$, they output a regularised policy $\policy_\alpha$. 
For ease of discussion and plotting, it is often more appropriate to refer to the (bounded) inverse of the regularisation strength: the \emph{optimisation pressure} $\lambda_\alpha = e^{-\alpha}$. As the optimisation pressure increases, $\J(\policy_\alpha)$ also increases.

\begin{definition}[Maximal Causal Entropy]\label{def:mce}
    We denote by $\pi_{\alpha}$ the optimal policy according to the regularised objective $\tilde{\R}(s, a) := \R(s, a) +\alpha H(\pi(s))$ where $H(\pi(s))$ is the Shannon entropy.
\end{definition}

\begin{definition}[Boltzmann Rationality]\label{def:br}
    The Boltzmann rational policy $\policy_\alpha$ is defined as $\Prob{\policy_\alpha(s) = a} \propto e^{\frac{1}{\alpha} Q^\star(s, a)}$, where $Q^\star$ is the optimal $Q$-function.
\end{definition}

We perform experiments to verify that our key results hold for either way of quantifying optimisation pressure. In both cases, the optimisation algorithm is Value Iteration \citep[see e.g.\ ][]{sutton2018}.

Finally, we need a way to quantify the \emph{magnitude of the Goodhart effect}. Assume that we have a true reward $R_0$ and a proxy reward $R_1$, that $R_1$ is optimised according to one of the methods in Definition~\ref{def:mce}-\ref{def:br}, and that $\pi_\lambda$ is the policy that is obtained at optimisation pressure $\lambda$. Suppose also that $R_0, R_1$ are normalised, so that $\min_\pi \J(\pi) = 0$ and $\max_\pi \J(\pi) = 1$ for both $R_0$ and $R_1$.

\begin{definition}[Normalised drop height]
We define the \emph{normalised drop height} (NDH) as $\J_{\TrueR}(\pi_1) - \max_{\lambda \in [0,1]} \J_{\TrueR}(\pi_\lambda)$, i.e.\ as the loss of true reward throughout the optimisation process.
\end{definition}
For an illustration of the above definition, see the grey dashed line in~\Cref{fig:goodhart_cartoon}. We observe that NDH is non-zero if and only if, over increasing optimisation pressure, the proxy and true rewards are initially correlated, and then become anti-correlated (we will see later that as long as the angle distance is less than $\pi/2$, their returns will almost always be initially correlated)\jacek{Rephrase this sentence about $<\pi/2$}. In the Appendix~\ref{appendix:measuring_goodharting}, we introduce more complex measures which quantify Goodhart's law differently. Since our experiments indicate that they are all are strongly correlated, we decided to focus on NDH as the simplest one.

\section{Goodharting is Pervasive in Reinforcement Learning}\label{sec:quantifying_goodharting}

\begin{figure}
    \centering
    \includegraphics[width=0.78\linewidth]{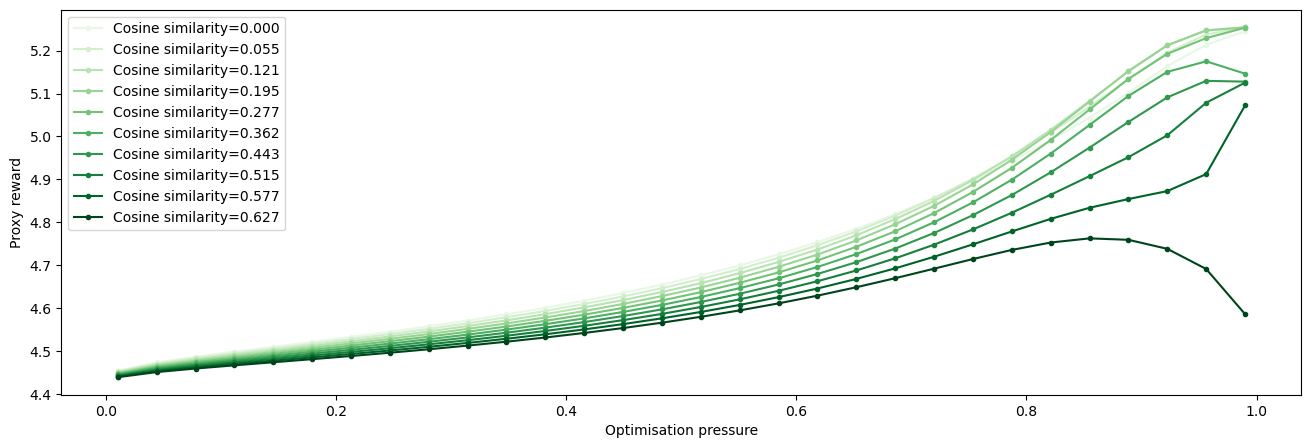}
    \caption{Depiction of Goodharting in \emph{RandomMDP}. 
    Compare to~\Cref{fig:goodhart_cartoon} -- here we only show the \emph{true} reward obtained by a policy trained on each proxy. Darker color means a more distant proxy.\label{fig:RandomMDP}
    }
\end{figure}

In this section, we empirically demonstrate that Goodharting occurs pervasively across varied environments by showing that, for a given true reward $\TrueR$ and a proxy reward $\ProxR$, beyond a certain optimisation threshold, the performance on $\TrueR$ \emph{decreases} when the agent is trained towards $\ProxR$. We test this claim over different kinds of environments (varying number of states, actions, terminal states and $\gamma$), reward functions (varying rewards' types and sparsity) and optimisation pressure definitions.

\subsection{Environment and reward types}

\emph{Gridworld} is a deterministic, grid-based environment, with the state space of size $n \times n$ for parameter $n \in \mathbb{N}^+$, with a fixed set of five actions: $\uparrow, \rightarrow, \downarrow, \leftarrow$, and \text{WAIT}. The upper-left and lower-right corners are designated as terminal states. Attempting an illegal action $a$ in state $s$ does not change the state. \emph{Cliff}~\citep[Example 6.6]{sutton2018} is a \emph{Gridworld} variant where an agent aims to reach the lower right terminal state, avoiding the cliff formed by the bottom row's cells. Any cliff-adjacent move has a slipping probability $p$ of falling into the cliff.

\emph{RandomMDP} is an environment in which, for a fixed number of states $|S|$, actions $|A|$, and terminal states $k$, the transition matrix $\T$ is sampled uniformly across all stochastic matrices of shape $|\SxA|\times |S|$, satisfying the property of having exactly $k$ terminal states.

\emph{TreeMDP} is an environment corresponding to nodes of a rooted tree with branching factor $b = |\A|$ and depth $d$. The root is the initial state and each action from a non-leaf node results in states corresponding to the node's children. Half of the leaf nodes are terminal states and the other half loop back to the root, which makes it isomorphic to an infinite self-similar tree.

In our experiments, we only use reward functions that depend on the next state $R(s, a) = R(s)$. In \textbf{Terminal}, the rewards are sampled iid from $U(0, 1)$ for terminal states and from $U(-1, 0)$ for non-terminal states. In \textbf{Cliff}, where the rewards are sampled iid from $U(-5, 0)$ for cliff states, from $U(-1, 0)$ for non-terminal states, and from $U(0, 1)$ for the goal state. In \textbf{Path}, where we first sample a random walk $P$ moving only $\rightarrow$ and $\downarrow$ between the upper-left and lower-right terminal state, and then the rewards are constantly 0 on the path $P$, sampled from $U(-1, 0)$ for the non-terminal states, and from $U(0, 1)$ for the terminal state.

\subsection{Estimating the prevalence of Goodharting}\label{subsect:estimating-prevalence}

To get an estimate of how prevalent Goodharting is, we run an experiment where we vary all hyperparameters of MDPs in a grid search manner. Specifically, we sample:
\emph{Gridworld} for grid lengths $n \in \{2, 3, \dots, 14\}$ %$n \in \{2, 3, 4, 5, 6, 7, 8, 9, 10, 11, 12, 13, 14\}$ 
and either \textbf{Terminal} or \textbf{Path} rewards;
\emph{Cliff} with tripping probability $p = 0.5$ and grid lengths $n \in \{2, 3, \dots, 9\}$ %$n \in \{2, 3, 4, 5, 6, 7, 8, 9\}$ 
and \textbf{Cliff} rewards; 
\emph{RandomMDP} with number of states $|S| \in \{2, 4, 8, 16, \dots, 512\}$, %$|S| \in \{2, 4, 8, 16, 32, 64, 128, 256, 512\}$,
number of actions $|A| \in \{2, 3, 4\}$, a fixed number of terminal states $=2$, and \textbf{Terminal} rewards;
\emph{TreeMDP} with branching factor $2$ and depth $d \in [2, 3, \dots, 9]$, %$d \in [2, 3, 4, 5, 6, 7, 8, 9]$, 
for two different kinds of trees: (1) where the first half of the leaves are terminal states, and (2) where every second leaf is a terminal state, both using \textbf{Terminal} rewards.

For each of those, we also vary temporal discount factor $\gamma \in \{0.5, 0.7, 0.9, 0.99\}$, sparsity factor $\sigma \in \{0.1, 0.3, 0.5, 0.7, 0.9\}$, optimisation pressure $\lambda = -\log(x)$ for $7$ values of $x$ evenly spaced on $[0.01, 0.75]$ and $20$ values evenly spaced on $[0.8, 0.99]$.

After sampling an MDP\textbackslash R, we randomly sample a pair of reward functions $\R_0$ and $\R_1$ from a chosen distribution. These are then sparsified (meaning that a random $\sigma$ fraction of values are zeroed) and linearly interpolated, creating a sequence of proxy reward functions $\R_t = (1-t)\R_0 + t\R_1$ for $t \in [0,1]$. 
Note that this scheme is valid because for any environment, reward sampling scheme and fixed parameters, the sample space of rewards is convex. In high dimensions, two random vectors are approximately orthogonal with high probability, so the sequence $R_t$ spans a range of distances.

Each run consists of $10$ proxy rewards; we use threshold $\theta = 0.001$ for value iteration. We get a total of 30400 data points. An initial increase, followed by a decline in value with increasing optimisation pressure, indicates Goodharting behaviour. Overall, we find that a Goodhart drop occurs (meaning that the NDH > 0) for \textbf{19.3\% of all experiments} sampled over the parameter ranges given above. This suggests that Goodharting is a common (albeit not universal) phenomenon in RL and occurs in various environments and for various reward functions. We present additional empirical insights, such as that training \emph{myopic} agents makes  Goodharting less severe, in \Cref{appendix: results}.

For illustrative purposes, we present a single run of the above experiment in ~\Cref{fig:RandomMDP}. We can see that, as the proxy $\ProxR$ is maximised, the true reward $\TrueR$ will typically either increase monotonically or increase and then decrease. This is in accordance with the predictions of Goodhart's law.

\section{Explaining Goodhart's Law in Reinforcement Learning}\label{sec:explain_goodhart}

\begin{figure}[t]
    % \vspace{-3mm}
    \centering
    \begin{subfigure}[b]{0.33\textwidth}
        \centering
        \includegraphics[width=1\textwidth]{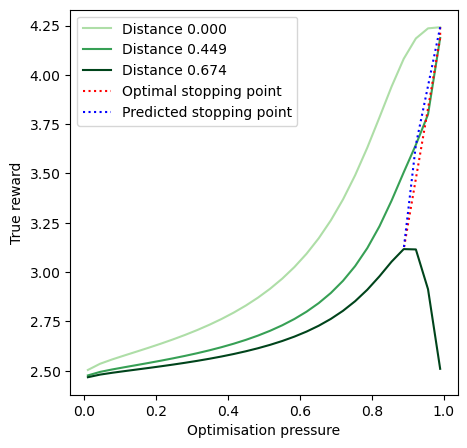}
        \caption{Goodharting behavior in $\mathcal{M}_{2, 2}$ over three reward functions. Our method is able to predict the optimal stopping time (in blue).}
        \label{subfig:2-2-mce}
    \end{subfigure}
    \hfill
    \begin{subfigure}[b]{0.63\textwidth}
        \centering
        \includegraphics[width=1\textwidth]{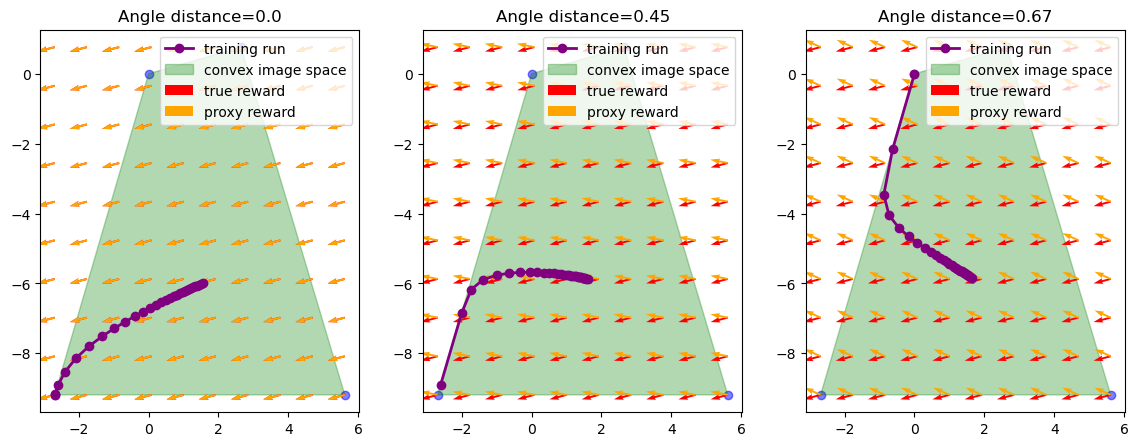}
        \caption{Training runs for each of the reward functions embedded in the state-action occupancy measure space. Even though the full frequency space is {$|S||A| = 4$-dimensional}, the image of the policy space occupies only a {$|S|(|A| - 1) = 2$-dimensional} linear subspace. Goodharting occurs when the cosine distance between rewards passes the critical threshold and the policy snaps to a different endpoint.}
        \label{subfig:2-2-mce-image}
    \end{subfigure}
    \caption{Visualisation of Goodhart's law in case of $\mathcal{M}_{2, 2}$.}
    \label{fig:2-2-mce-example}
\end{figure}

In this section, we provide an intuitive, mechanistic account of \emph{why} Goodharting happens in MDPs, that explains some of the results in Section~\ref{sec:quantifying_goodharting}. An extended discussion is also given in Appendix~\ref{appendix:extra_explanation}.

First, recall that $\J_\R(\policy) = \occmeas{\policy} \cdot \R$, where $\occmeas{\policy}$ is the occupancy measure of $\pi$. Recall also that $\Polytope$ is a convex polytope.
Therefore, the problem of finding an optimal policy can be viewed as maximising a linear function $\R$ within a convex polytope $\Polytope$, which is a linear programming problem.

\emph{Steepest ascent} is the process that changes $\vec{\occmeaselem}$ in the direction that most rapidly increases $\vec{\occmeaselem}\cdot \R$
\charlie{This feels informal. What is ``most rapidly increases''? The direction which most rapidly increases reward is the direction of the reward.}\oliver{Not when you're on the boundary. This is correct.}
(for a formal definition, see \cite{chang_steepest_1989} or \cite{denel_steepest_1981}). The path of steepest ascent forms a piecewise linear curve whose linear segments lie on the boundary of $\Polytope$ (except the first segment, which may lie in the interior). Due to its similarity to gradient-based optimisation methods, we expect most policy optimisation algorithms to follow a path that roughly approximates steepest ascent. \charlie{`we should expect' is a red-flag. We either need to justify this claim with intuition or use weaker language.}
Steepest ascent also has the following property:

\begin{proposition}[Concavity of Steepest Ascent]\label{prop:concave_proof}
If $\TangentVec_i := \frac{\occmeaselem_{i+1}-\occmeaselem_i}{||\occmeaselem_{i+1}-\occmeaselem_i||}$ for $\occmeaselem_i$ produced by steepest ascent on reward vector $\R$, then $\TangentVec_i \cdot \R$ is decreasing. 
\end{proposition}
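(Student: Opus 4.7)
The plan is to exploit the defining property of steepest ascent --- that $\TangentVec_i$ maximises $v \cdot \R$ over unit vectors $v$ in the tangent cone $T_i$ of $\Polytope$ at $\occmeaselem_i$ --- together with the convexity of $\Polytope$, to directly bound $\TangentVec_{i+1} \cdot \R$ by $\TangentVec_i \cdot \R$. The key idea is that although the tangent cone typically shrinks as we move from $\occmeaselem_i$ to $\occmeaselem_{i+1}$, convexity still lets me construct, for each small $\epsilon > 0$, a feasible direction at $\occmeaselem_i$ that combines the taken step $\TangentVec_i$ with a small look-ahead along $\TangentVec_{i+1}$; then optimality of $\TangentVec_i$ at $\occmeaselem_i$ gives an inequality that I can send to the limit.

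Concretely, I would write $\occmeaselem_{i+1} = \occmeaselem_i + t_i \TangentVec_i$ with $t_i > 0$. Since $\TangentVec_{i+1}$ is feasible at $\occmeaselem_{i+1}$, the point $\occmeaselem_{i+1} + \epsilon \TangentVec_{i+1}$ lies in $\Polytope$ for all sufficiently small $\epsilon > 0$; by convexity of $\Polytope$, the direction $(\occmeaselem_{i+1} + \epsilon \TangentVec_{i+1}) - \occmeaselem_i = t_i \TangentVec_i + \epsilon \TangentVec_{i+1}$ is then feasible at $\occmeaselem_i$, i.e.\ lies in $T_i$. Writing $a := \TangentVec_i \cdot \R$ and $b := \TangentVec_{i+1} \cdot \R$, the optimality of $\TangentVec_i$ among unit directions in $T_i$ yields
$$ \frac{t_i a + \epsilon b}{\sqrt{t_i^2 + 2 t_i \epsilon (\TangentVec_i \cdot \TangentVec_{i+1}) + \epsilon^2}} \le a. $$

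Assuming $a > 0$ --- otherwise steepest ascent has already reached a maximum and there is no $\TangentVec_{i+1}$ to compare against --- both sides are positive for $\epsilon$ small enough, so I would square, cancel the $t_i^2 a^2$ terms, divide by $\epsilon > 0$, and let $\epsilon \to 0^+$ to collapse the inequality to $2 a b \le 2 a^2 (\TangentVec_i \cdot \TangentVec_{i+1})$. Dividing by $2a$ and using $\TangentVec_i \cdot \TangentVec_{i+1} \le 1$ (Cauchy--Schwarz on unit vectors) then yields $b \le a$, which is exactly the required monotonicity.

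The main obstacle is more expository than technical: the cited works on steepest ascent define $\TangentVec_i$ rigorously through a limiting procedure at kinks of the boundary of $\Polytope$, so I would want to either cite the characterisation of $\TangentVec_i$ as the normalised projection of $\R$ onto $T_i$, or derive the optimality statement used above from their construction before invoking it. A minor edge case is the first segment: if $\occmeaselem_0$ lies in the interior of $\Polytope$ then $T_0 = \RR^{|\SxA|}$ and $\TangentVec_0 = \R/\|\R\|$, for which $\TangentVec_1 \cdot \R \le \|\R\| = \TangentVec_0 \cdot \R$ is immediate from Cauchy--Schwarz, so no separate argument is really needed there.
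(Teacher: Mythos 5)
Your argument is correct and rests on the same key idea as the paper's proof: a direction mixing the taken step $\TangentVec_i$ with a move along $\TangentVec_{i+1}$ is still feasible in the cone of tangents at $\occmeaselem_i$, so the optimality of $\TangentVec_i$ there forces $\TangentVec_{i+1}\cdot\R \le \TangentVec_i\cdot\R$. The paper executes this by contradiction using the full chord to $\occmeaselem_{i+2}$ together with the triangle inequality and a weighted-average estimate, whereas you take the infinitesimal limit $\epsilon \to 0^+$ and obtain the slightly sharper bound $\TangentVec_{i+1}\cdot\R \le (\TangentVec_i\cdot\TangentVec_{i+1})\,(\TangentVec_i\cdot\R)$; this is a cosmetic rather than substantive difference, and both versions share the same (justified) implicit assumption that $\TangentVec_i\cdot\R > 0$ while the ascent is still running.
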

\charlie{The title of this proposition uses the word `concave' but it's unclear to me what this refers to. I see now there is a line that is commented out, but still not sure what it means or what value it's adding.}

We can now explain Goodhart's law in MDPs.
Assume we have a true reward $\TrueR$ and a proxy reward $\ProxR$, that we optimise $\ProxR$ through steepest ascent, and that this produces a sequence of occupancy measures $\{\eta_i\}$.
Recall that this sequence forms a piecewise linear path along the boundary of a convex polytope $\Omega$, and that $\J_{\TrueR}$ and $\J_{\ProxR}$ correspond to linear functions on $\Omega$ (whose directions of steepest ascent are given by $\QProj_\T \TrueR$ and $\QProj_\T \ProxR$).
First, if the angle between $\QProj_\T \TrueR$ and $\QProj_\T \ProxR$ is less than $\pi/2$, and the initial policy $\eta_0$ lies in the interior of $\Omega$, then it is guaranteed that $\eta \cdot \TrueR$ will increase along the first segment of $\{\eta_i\}$.
However, when $\{\eta_i\}$ reaches the boundary of $\Omega$, steepest ascent continues in the direction of the projection of $\QProj_\T \ProxR$ onto this boundary. If this projection is far enough from $\TrueR$, optimising in the direction of $\QProj_\T \ProxR$ would lead to a decrease in $\J_{\TrueR}$ (c.f.\ Figure~\ref{subfig:2-2-mce-image}). \emph{This corresponds to Goodharting}.\oliver{someone else look over please}

$\TrueR$ may continue to increase, even after another boundary region has been hit.
However, each time $\{\eta_i\}$ hits a new boundary, it changes direction, and there is a risk that $\eta \cdot \TrueR$ will decrease.
In general, this is \emph{more} likely if the angle between that boundary and $\{\eta_i\}$ is close to $\pi/2$, and \emph{less} likely if the angle between $\QProj_\T\TrueR$ and $\QProj_\T\ProxR$ is small. This explains why Goodharting is less likely when the angle between $\QProj_\T\TrueR$ and $\QProj_\T\ProxR$ is small. 
Next, note that Proposition~\ref{prop:concave_proof} implies that the angle between $\{\eta_i\}$ and the boundary of $\Omega$ will increase over time along $\{\eta_i\}$. This explains why Goodharting becomes more likely when more optimisation pressure is applied.

Let us consider an example to make our explanation of Goodhart's law more intuitive.
Let $\mathcal{M}_{2, 2}$ be an MDP with 2 states and 2 actions, and let $R_0, R_1, R_2$ be three reward functions in $\mathcal{M}_{2, 2}$.
The full specifications for $\mathcal{M}_{2, 2}$ and $R_0, R_1, R_2$ are given in Appendix~\ref{appendix:simple_goodharting_example}.
We will refer to $R_0$ as the \emph{true reward}.
The angle between $R_0$ and $R_1$ is larger than the angle between $R_0$ and $R_2$.
Using Maximal Causal Entropy, we can train a policy over each of the reward functions, using varying degrees of optimisation pressure, and record the performance of the resulting policy with respect to the \emph{true} reward.
Zero optimisation pressure results in the uniformly random policy, and maximal optimisation pressure results in the optimal policy for the given proxy (see ~\Cref{subfig:2-2-mce}).
As we can see, we get Goodharting for $R_2$ -- increasing $R_2$ initially increases $R_0$, but there is a critical point after which further optimisation leads to worse performance under $R_0$.

To understand what is happening, we embed the policies produced during each training run in $\Omega$, together with the projections of $R_0, R_1, R_2$ (see ~\Cref{subfig:2-2-mce-image}). We can now see that Goodharting must occur precisely when the angle between the true reward and the proxy reward passes the critical threshold, such that the training run deflects upon stumbling on the border of $\Omega$, and the optimal deterministic policy changes from the lower-left to the upper-left corner.
\emph{This is the underlying mechanism that produces Goodhart behaviour in reinforcement learning!}

We thus have an explanation for why the Goodhart curves are so common.
Moreover, this insight also explains why Goodharting does not always happen and why a smaller distance between the true reward and the proxy reward is associated with less Goodharting. We can also see that Goodharting will be more likely when the angle between $\{\eta_i\}$ and the boundary of $\Omega$ is close to $\pi/2$ -- this is why Proposition~\ref{prop:concave_proof} implies that Goodharting becomes more likely with more optimisation pressure.

\section{Preventing Goodharting Behaviour}\label{sec:preventing_goodhart}

\begin{figure}[]
    % \vspace{-3mm}
    \centering
    \begin{subfigure}[b]{0.40\textwidth}
        \centering
        \includegraphics[width=0.65\linewidth]{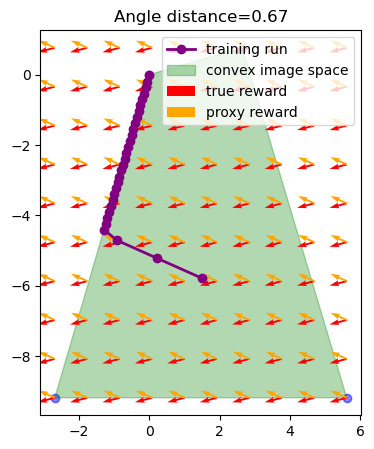}
        \caption{A $\occmeas{\phantom{}}$-embedded training run for steepest ascent. The training curve is split into two linear segments: the first is parallel to the proxy reward, while the second is parallel to the proxy reward projected onto some boundary plane $\plane$. Goodharting only occurs along $\plane$. (Compare to the MCE approximation of Steepest Ascent in~\Cref{subfig:2-2-mce-image})}
        \label{fig:fig:2-2-SA-explanation}
    \end{subfigure}
    \hfill
        \begin{subfigure}[b]{0.56\textwidth}
    \begin{framed}
        \begin{algorithmic}
            \Procedure{EarlyStopping}{$\S, \A, \T, \AngBound, \R$}
              \State $\vec{r} \gets \QProj_\T \R$
              \State $\policy \gets Unif[\RR^{\SxA}]$
              \State $\vec{\occmeaselem}_0 \gets \occmeas{\policy}$
              \State $\TangentVec_0 \gets \amax_{\TangentVec \in \ConeTangent(\vec{\occmeaselem}_0)} \TangentVec \cdot \VecR$
              \While{$ (\vec{t}_i \neq \vec{0}) \textbf{ and } \VecR \cdot \TangentVec_{i} \leq \sin(\AngBound)||\VecR|| $}
                \State $\lambda \gets \max \{\lambda:\vec{\occmeaselem}_i+\lambda \TangentVec_i \in \Polytope\}$
                \State $\vec{\occmeaselem}_{i+1} \gets \vec{\occmeaselem}_i + \lambda \TangentVec_i$
                \State $\TangentVec_{i+1} \gets \amax_{\TangentVec \in \ConeTangent(\vec{\occmeaselem}_{i+1})} \TangentVec \cdot \VecR$
                \State $i \gets i + 1$
              \EndWhile
              \State \Return $(\occmeas{\vec{\occmeaselem}_i})^{-1}$
            \EndProcedure
        \end{algorithmic}
        \end{framed}
        \caption{Early stopping pseudocode for Steepest Ascent. Given the correct $\theta$, the algorithm would stop at the point where the training run hits the boundary of the convex hull. The cone of tangents, $T(\occmeaselem)$ is defined in \cite{denel_steepest_1981}.}
        \label{alg:early-stopping-code}
    \end{subfigure}
    \caption{Early stopping algorithm and its behaviour.}
    \label{fig:sa-example-run}
    \vspace{-4mm}
\end{figure}

We have seen that when a proxy reward $\ProxR$ is optimised, it is common for the true reward $\TrueR$ to first increase, and then decrease. 
If we can stop the optimisation process before $\TrueR$ starts to decrease, then we can avoid Goodharting.
Our next result shows that we can \emph{provably} prevent Goodharting, given that we have a bound $\theta$ on the distance between $\ProxR$ and $\TrueR$:

\begin{theorem}\label{thm:optimal_stopping}
Let $\ProxR$ be any reward function, let $\theta \in [0,\pi]$ be any angle, and let $\pi_A, \pi_B$ be any two policies. Then there exists a reward function $\TrueR$ with $\Angle{\TrueR}{\ProxR} \leq \AngBound$ and $\J_{\TrueR}(\policy_A) > \J_{\TrueR}(\policy_B)$ iff
$$
\frac{\J_{\ProxR}(\policy_B)-\J_{\ProxR}(\policy_A)}{||\occmeas{\policy_B}-\occmeas{\policy_A}||} < \sin(\AngBound)||\QProj_\T \ProxR||
$$
\end{theorem}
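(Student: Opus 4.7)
The plan is to recast both sides of the iff as a purely geometric question about vectors in $\linearspan{\Polytope}$, then reduce to a minimisation over a spherical cap. Set $\vec{p} = \QProj_\T \ProxR$, $\vec{t} = \QProj_\T \TrueR$, and $\Delta = \occmeas{\policy_B} - \occmeas{\policy_A}$. Because $\occmeas{\policy} \in \linearspan{\Polytope}$, the identity $\J_\R(\policy) = \occmeas{\policy}\cdot\R = \occmeas{\policy}\cdot\QProj_\T \R$ converts $\J_{\TrueR}(\policy_A) > \J_{\TrueR}(\policy_B)$ into the linear inequality $\vec{t}\cdot\Delta < 0$; the theorem's LHS is $(\vec{p}\cdot\Delta)/||\Delta||$ and its RHS is $\sin(\theta)\,||\vec{p}||$. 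Since any target vector in $\linearspan{\Polytope}$ is realised as $\QProj_\T \TrueR$ for some $\TrueR$ (take $\TrueR$ equal to that vector), the theorem reduces to the claim: there exists a non-zero $\vec{t}$ with $\angle(\vec{t},\vec{p}) \leq \theta$ and $\vec{t}\cdot\Delta < 0$ iff $\hat{p}\cdot\Delta < \sin(\theta)\,||\Delta||$, where $\hat{p} = \vec{p}/||\vec{p}||$.

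Next I would exploit that both the cone constraint and the sign of $\vec{t}\cdot\Delta$ are invariant under positive rescaling, reducing the question to whether $\min_{\hat{t}\in C}\hat{t}\cdot\Delta < 0$ on the spherical cap $C = \{\hat{t} : ||\hat{t}||=1,\ \hat{t}\cdot\hat{p}\geq\cos\theta\}$. Decompose $\Delta = \alpha\hat{p} + \Delta_\perp$ with $\alpha = \hat{p}\cdot\Delta$ and $\beta := ||\Delta_\perp|| = \sqrt{||\Delta||^2-\alpha^2}$, and parametrise the boundary of $C$ as $\hat{t} = \cos\theta\,\hat{p} + \sin\theta\,\hat{u}$ for unit $\hat{u}\perp\hat{p}$. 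Then $\hat{t}\cdot\Delta = \cos\theta\,\alpha + \sin\theta\,(\hat{u}\cdot\Delta_\perp)$, and choosing $\hat{u} = -\Delta_\perp/\beta$ (or any unit $\hat{u}\perp\hat{p}$ when $\beta=0$) gives the boundary minimum $\cos\theta\,\alpha - \sin\theta\,\beta$. A brief Lagrange/case argument shows this is also the cap-wide minimum whenever $\theta \leq \pi/2$: the only interior critical point is $\hat{t} = -\hat{\Delta}$, which enters $C$ only when $\alpha \leq -\cos\theta\,||\Delta||$, and in that sub-regime $\cos\theta\,\alpha - \sin\theta\,\beta$ is already negative, so the boundary formula correctly flags existence either way.

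To finish the equivalence, I would verify algebraically that the boundary-min inequality $\cos\theta\,\alpha < \sin\theta\,\beta$ is equivalent to the stated inequality $\alpha < \sin(\theta)\,||\Delta||$. If $\alpha < 0$ both are trivially satisfied, so assume $\alpha \geq 0$; both sides of $\alpha < \sin\theta\sqrt{\alpha^2+\beta^2}$ are then non-negative, and squaring yields $\alpha^2(1-\sin^2\theta) < \sin^2\theta\,\beta^2$, i.e.\ $\alpha^2\cos^2\theta < \beta^2\sin^2\theta$; taking square roots (valid since every factor is non-negative for $\theta \in [0,\pi/2]$) recovers $\cos\theta\,\alpha < \sin\theta\,\beta$. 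For the forward direction an explicit $\TrueR$ is given by $\hat{t} = \cos\theta\,\hat{p} - \sin\theta\,(\Delta_\perp/\beta)$ (with the obvious modification $\TrueR = -\hat{p}$ when $\beta=0$ and $\alpha<0$), which sits at angle exactly $\theta$ from $\vec{p}$ and realises the cap minimum.

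The hard part will be the sub-regime $\theta > \pi/2$: there the cap covers more than a hemisphere, and once $\alpha \leq -\cos\theta\,||\Delta||$ the unconstrained minimiser $-\hat{\Delta}$ already lies in $C$, so the true cap minimum is $-||\Delta||$ rather than $\cos\theta\,\alpha - \sin\theta\,\beta$. In that regime the stated inequality can become strictly stronger than what existence actually requires, so I would either split on whether $-\hat{\Delta} \in C$ and correct the bound accordingly, or simply restrict attention to the range $\theta \in [0,\pi/2]$ relevant to the early-stopping setting, where the proxy is assumed close to the true reward and only small $\theta$ is practically meaningful.
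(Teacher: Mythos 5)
Your proposal is correct on the range where the theorem is actually true, and it reaches the same geometric reduction as the paper: both proofs first translate $\J_{\TrueR}(\policy_A) > \J_{\TrueR}(\policy_B)$ into $\vec{t}\cdot\Delta < 0$ for $\vec{t} = \QProj_\T\TrueR$ and $\Delta = \occmeas{\policy_B}-\occmeas{\policy_A}$, and both recognise that the stated inequality is just $\cos\bigl(\Angle{\ProxR}{\Delta}\bigr) < \sin(\theta)$. Where you diverge is in how the equivalence is established. The paper argues synthetically: the forward direction uses the angular triangle inequality $\Angle{\TrueR}{\Delta} \leq \Angle{\ProxR}{\Delta} + \Angle{\TrueR}{\ProxR}$, and the converse constructs $\TrueR$ in the two-plane spanned by $\Delta$ and $\QProj_\T\ProxR$ at angle exactly $\theta$ so that the angles add. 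You instead compute $\min_{\hat{t}}\hat{t}\cdot\Delta$ over the spherical cap explicitly via the decomposition $\Delta = \alpha\hat{p}+\Delta_\perp$, obtaining the closed form $\alpha\cos\theta - \beta\sin\theta = \|\Delta\|\cos\bigl(\Angle{\ProxR}{\Delta}+\theta\bigr)$, which delivers both directions of the iff from a single optimisation and makes the worst-case $\TrueR$ explicit. The two arguments encode the same geometry (your cap minimiser is exactly the paper's constructed $\TrueR$), but yours is more self-contained and handles the degenerate cases ($\beta = 0$, the unconstrained minimiser entering the cap) more carefully. One small slip: when $\beta = 0$ and $\alpha < 0$ the witness should be $\TrueR = +\hat{p}$ (i.e.\ $\ProxR$ itself), not $-\hat{p}$, since $-\hat{p}$ lies outside the cone for $\theta < \pi$.

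Your worry about $\theta > \pi/2$ is not a defect of your approach but a genuine error in the theorem as stated. For $\theta \in (\pi/2,\pi]$ the ``only if'' direction fails: e.g.\ with $\theta = 3\pi/4$ and $\cos\bigl(\Angle{\ProxR}{\Delta}\bigr) = 0.8 > \sin\theta$, your boundary minimum $\|\Delta\|\cos(\Angle{\ProxR}{\Delta}+\theta)$ is negative, so a qualifying $\TrueR$ exists even though the displayed inequality is violated. The paper's own proof silently commits the corresponding fallacy in the step from $\Angle{\ProxR}{\vec{d}} > \pi/2 - \theta$ to $\cos(\Angle{\ProxR}{\vec{d}}) < \sin(\theta)$, which is only valid when $\pi/2-\theta \geq 0$. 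Your suggested fix --- restrict to $\theta \in [0,\pi/2]$ (the only regime relevant to early stopping), or replace the right-hand side by the exact cap minimum with a case split --- is the correct repair, and the restricted statement is what both your argument and the paper's actually prove.
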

\begin{corollary}[Optimal Stopping]\label{cor:optimal_stopping}
Let $\ProxR$ be a proxy reward, and let $\{\policy_i\}$ be a sequence of policies produced by an optimisation algorithm. Suppose the optimisation algorithm is concave with respect to the policy, in the sense that $\frac{\J_{\ProxR}(\policy_{i+1})-\J_{\ProxR}(\policy_{i})}{||\occmeas{\policy_{i+1}}-\occmeas{\policy_{i}}||}$ is decreasing.
Then, stopping at minimal $i$ with 
$$
\frac{\J_{\ProxR}(\policy_{i+1})-\J_{\ProxR}(\policy_{i})}{||\occmeas{\policy_{i+1}}-\occmeas{\policy_{i}}||} < \sin(\AngBound)||\QProj_\T \ProxR||
$$
gives the policy $\pi_i \in \{\policy_i\}$ that maximizes 
$\min_{\TrueR \in \FeasibleR^\RMag}\J_{\TrueR}(\pi_i)$,
where $\FeasibleR^\RMag$ is the set of rewards given by $\{\TrueR:\Angle{\TrueR}{\ProxR}\leq \AngBound, ||\QProj_\T\TrueR|| = \RMag\}$.
\end{corollary}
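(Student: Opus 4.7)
My plan is to track the worst-case value $F(\pi) := \min_{\TrueR \in \FeasibleR^\RMag}\J_\TrueR(\pi)$ along the sequence $\{\policy_i\}$ and argue that it is maximised at the stopping index $i^{\star}$, applying Theorem~\ref{thm:optimal_stopping} pairwise in both of its directions. A useful auxiliary that I would derive first is the explicit spherical-cap formula $F(\pi) = \RMag\bigl[\cos(\AngBound)(\occmeas{\pi}\cdot \hat p) - \sin(\AngBound)\|(\occmeas{\pi})_\perp\|\bigr]$ with $\hat p := \QProj_\T\ProxR/\|\QProj_\T\ProxR\|$; this follows from the same minimisation of a linear function over an angle-cone of fixed projected norm that is already at the heart of Theorem~\ref{thm:optimal_stopping}'s proof, and makes $F$ a concrete handle on which to compute.

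\emph{For $i < i^{\star}$.} By minimality of $i^{\star}$ combined with the concavity hypothesis (decreasing ratios), every earlier consecutive ratio satisfies $\frac{\J_\ProxR(\pi_{k+1}) - \J_\ProxR(\pi_k)}{\|\occmeas{\pi_{k+1}} - \occmeas{\pi_k}\|} \geq \sin(\AngBound)\|\QProj_\T\ProxR\|$. Read in the contrapositive of Theorem~\ref{thm:optimal_stopping} with $(\pi_A, \pi_B) = (\pi_k, \pi_{k+1})$, this is equivalent to: no $\TrueR$ with $\Angle{\TrueR}{\ProxR}\leq \AngBound$ strictly prefers $\pi_k$ to $\pi_{k+1}$, so in particular every $\TrueR \in \FeasibleR^\RMag$ (a subset of the cone) satisfies $\J_\TrueR(\pi_k)\leq \J_\TrueR(\pi_{k+1})$. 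Taking the minimum over $\FeasibleR^\RMag$ preserves the inequality, giving $F(\pi_k) \leq F(\pi_{k+1})$, and chaining yields $F(\pi_i) \leq F(\pi_{i^{\star}})$ for every $i \leq i^{\star}$.

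\emph{For $i > i^{\star}$.} I would apply Theorem~\ref{thm:optimal_stopping} to the non-consecutive pair $(\pi_{i^{\star}}, \pi_i)$. Concavity bounds every consecutive rate for $k \geq i^{\star}$ by the rate at $i^{\star}$, which is below $\sin(\AngBound)\|\QProj_\T\ProxR\|$; telescoping $\J_\ProxR(\pi_i) - \J_\ProxR(\pi_{i^{\star}})$ as a weighted sum of these rates (with weights $\|\occmeas{\pi_{k+1}}-\occmeas{\pi_k}\|$) and applying the triangle inequality $\|\occmeas{\pi_i} - \occmeas{\pi_{i^{\star}}}\| \leq \sum_k\|\occmeas{\pi_{k+1}}-\occmeas{\pi_k}\|$ controls the long-range ratio below threshold. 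Theorem~\ref{thm:optimal_stopping} then delivers a witness $\TrueR^i \in \FeasibleR^\RMag$ with $\J_{\TrueR^i}(\pi_{i^{\star}}) > \J_{\TrueR^i}(\pi_i)$. Combining this witness with the explicit $F$-formula and the fact that the adversary only interacts with $\occmeas{\pi}$ through the two-parameter projection $(\occmeas{\pi}\cdot\hat p,\,\|(\occmeas{\pi})_\perp\|)$ yields $F(\pi_i) \leq F(\pi_{i^{\star}})$, which combined with the previous case completes the argument.

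The hard part I expect is the post-$i^{\star}$ step. Theorem~\ref{thm:optimal_stopping} in its existential form only gives a single witness $\TrueR^i$, and the worst-case adversaries for $\pi_{i^{\star}}$ and $\pi_i$ generally differ, so the pointwise inequality $\J_{\TrueR^i}(\pi_{i^{\star}}) > \J_{\TrueR^i}(\pi_i)$ does not transfer directly to a comparison of minima; moreover the naive telescoping uses $\sum_k\|\occmeas{\pi_{k+1}}-\occmeas{\pi_k}\| \geq \|\occmeas{\pi_i}-\occmeas{\pi_{i^{\star}}}\|$, which a priori weakens the long-range ratio bound. Both difficulties should be handled by bringing in the explicit $F$-formula and the observation that concavity forces the path, viewed in $(\occmeas{\pi}\cdot\hat p,\,\|(\occmeas{\pi})_\perp\|)$, to trace a curve on which $F$ is unimodal with peak precisely at $i^{\star}$.
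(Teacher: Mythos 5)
Your first half is correct, and it is essentially the entire content the paper itself relies on (the paper states the corollary without a separate proof, treating it as immediate from Theorem~\ref{thm:optimal_stopping}): for each $k$ before the stopping index, minimality gives $\frac{\J_{\ProxR}(\policy_{k+1})-\J_{\ProxR}(\policy_{k})}{||\occmeas{\policy_{k+1}}-\occmeas{\policy_{k}}||} \geq \sin(\AngBound)||\QProj_\T \ProxR||$, the negation of Theorem~\ref{thm:optimal_stopping} says no reward in the cone prefers $\policy_k$ to $\policy_{k+1}$, and taking the minimum over $\FeasibleR^\RMag$ chains to $\min_\TrueR \J_\TrueR(\policy_i) \leq \min_\TrueR \J_\TrueR(\policy_{i^\star})$ for $i \leq i^\star$. (Concavity is not needed here, only minimality of $i^\star$.) Your closed-form $F(\policy) = \RMag\bigl[\cos(\AngBound)(\occmeas{\policy}\cdot \hat{p}) - \sin(\AngBound)\|(\occmeas{\policy})_\perp\|\bigr]$ is also correct and matches the computation in the paper's proof of the regularised-reward proposition.

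The second half is a genuine gap, and neither of the repairs you sketch can close it. First, the long-range ratio for the pair $(\policy_{i^\star},\policy_i)$ equals $||\QProj_\T\ProxR||$ times the cosine of the angle between $\occmeas{\policy_i}-\occmeas{\policy_{i^\star}}$ and $\QProj_\T\ProxR$; the triangle inequality only shows it is \emph{at least} the weighted average of the consecutive ratios (the denominator shrinks), so it need not fall below the threshold at all. Second, and more fundamentally, concavity does \emph{not} force $F$ to be unimodal with peak at $i^\star$: the hypothesis constrains only the component of each displacement along $\hat{p}$ and says nothing about whether $\|(\occmeas{\policy})_\perp\|$ grows or shrinks. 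Concretely, write points of $\linearspan{\Polytope}$ as $(u,x,y)$ with $u$ the coordinate along $\hat{p}$, take $\AngBound=\pi/4$, $||\QProj_\T\ProxR||=\RMag=1$, and consider $\eta_0=(0,1,0)$, $\eta_1=(1,1,0)$, $\eta_2=(1.4,\,0.5,\,0)$. The consecutive ratios are $1$ and $0.4/\sqrt{0.41}\approx 0.62$, which are decreasing, and the threshold $\sin(\pi/4)\approx 0.71$ is first violated at $i^\star=1$; yet $F(\eta_1)=(1-1)/\sqrt{2}=0$ while $F(\eta_2)=(1.4-0.5)/\sqrt{2}>0$. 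Theorem~\ref{thm:optimal_stopping} does supply a witness reward that decreases from $\eta_1$ to $\eta_2$, but that witness is not the minimiser at $\eta_1$, so it yields no comparison of the two minima --- exactly the failure mode you anticipated. Nothing in the corollary's stated hypotheses rules out this configuration, so the post-$i^\star$ direction cannot be derived from them by your route (or, as far as I can see, by any route) without an additional assumption controlling the perpendicular component of the trajectory after the stopping point.
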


Let us unpack the statement of this result. 
If we have a proxy reward $\ProxR$, and we believe that the angle between $\ProxR$ and the true reward $\TrueR$ is at most $\AngBound$, then $\FeasibleR^\RMag$ is the set of all possible true reward functions with a given magnitude $m$. 
Note that no generality is lost by assuming that $\TrueR$ has magnitude $m$, since we can rescale any reward function without affecting its policy order.
Now, if we optimise $\ProxR$, and want to provably avoid Goodharting, then we must stop the optimisation process at a point where there is no Goodharting for any reward function in $\FeasibleR^\RMag$.
Theorem~\ref{thm:optimal_stopping} provides us with such a stopping point. Moreover, if the policy optimisation process is concave, then Corollary~\ref{cor:optimal_stopping} tells us that this stopping point, in a certain sense, is worst-case optimal.
By Proposition~\ref{prop:concave_proof}, we should expect most optimisation algorithms to be approximately concave.

\Cref{thm:optimal_stopping} derives an optimal stopping point among a single optimisation curve. Our next result finds the optimum among \textit{all} policies through maximising a regularised objective function.

\begin{proposition}\label{thm:regularised_reward}
    Given a proxy reward $\ProxR$, let $\FeasibleR^\RMag$ be the set of possible true rewards
    $\R$ such that 
    $\Angle{\R}{\ProxR} \leq \theta$ and $\R$ is normalized so that $||\QProj_\T \R|| = ||\QProj_\T \ProxR||$. Then, a policy $\policy$ maximises $\min_{\R \in \FeasibleR^\RMag}\J_{\R}(\policy)$ if and only if it maximises $\J_{\ProxR}(\policy)- \kappa ||\occmeas{\policy}|| \sin\left(\Angle{\occmeas{\policy}}{\ProxR}\right)$,
    where 
    $\kappa = \tan(\AngBound)||\QProj_\T\ProxR||$. Moreover, each local maximum of this objective is a global maximum when restricted to $\Omega$, giving that this function can be practically optimised for.
\end{proposition}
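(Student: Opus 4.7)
My plan is to reduce the inner minimization over rewards to a one-variable optimization on a spherical cap, identify the resulting closed form with the proposed regularized objective, and then invoke concavity for the local-to-global statement. Since $\J_{\R}(\policy) = \occmeas{\policy} \cdot \R = \occmeas{\policy} \cdot \QProj_\T \R$, every reward acts on policies only through its projection, so I may work entirely inside $\linearspan{\Polytope}$. Setting $p := \QProj_\T \ProxR$ and $m := ||\QProj_\T \ProxR||$, the (projected) feasible set $\FeasibleR^\RMag$ is exactly the spherical cap of radius $m$ around $p$ of half-angle $\AngBound$, and the problem $\min_{\R \in \FeasibleR^\RMag} \J_{\R}(\policy)$ becomes optimization of a linear functional on a spherical cap.

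For a fixed policy with $\eta := \occmeas{\policy}$ and $\phi := \Angle{\occmeas{\policy}}{\ProxR}$, I would compute this minimum geometrically: writing $\eta \cdot \R = m ||\eta|| \cos(\angle(\eta, \R))$, the task is to \emph{maximize} $\angle(\eta, \R)$ over the cap. A short symmetry/KKT argument shows the maximizer lies in the two-plane spanned by $\eta$ and $p$, on the cap boundary on the side of $p$ away from $\eta$, so $\angle(\eta, \R) = \phi + \AngBound$. Applying $\cos(\phi + \AngBound) = \cos\phi \cos\AngBound - \sin\phi \sin\AngBound$ together with $m ||\eta|| \cos\phi = \eta \cdot p = \J_{\ProxR}(\policy)$ yields
\[
\min_{\R \in \FeasibleR^\RMag} \J_{\R}(\policy) \;=\; \cos(\AngBound)\,\bigl(\J_{\ProxR}(\policy) - \kappa\,||\occmeas{\policy}||\,\sin(\Angle{\occmeas{\policy}}{\ProxR})\bigr),
\]
with $\kappa = m\tan(\AngBound)$. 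Since the two objectives differ by the positive scalar $\cos(\AngBound)$, they have identical maximizers, proving the ``iff'' direction.

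For the local-to-global claim, I would observe that the function $h(\eta) := \min_{\R \in \FeasibleR^\RMag} \eta \cdot \R$ is a pointwise minimum of linear functions of $\eta$, hence concave on $\linearspan{\Polytope}$. Since $\Polytope$ is a convex polytope and the regularized objective equals $h/\cos(\AngBound)$, it is concave on $\Polytope$, so any local maximum is automatically a global maximum.

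The main obstacle I anticipate is the geometric step: rigorously justifying that the worst-case $\R$ lives in the plane $\mathrm{span}(\eta, p)$ at angular distance exactly $\AngBound$ from $p$, and handling the degenerate regime $\phi + \AngBound > \pi$ where the true minimum wraps around to $-m||\eta||$ and the closed form ceases to equal $m||\eta||\cos(\phi + \AngBound)$. Reassuringly, the concavity argument is insensitive to these edge cases, since $h$ remains a pointwise minimum of linear functionals regardless of which cap point achieves the minimum, so local-optimality still implies global-optimality even where the explicit formula needs repair.
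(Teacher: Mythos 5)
Your proposal is correct and follows essentially the same route as the paper's proof: both reduce the inner minimization to maximizing the angle $\Angle{\R}{\occmeas{\policy}}$ over the cap of admissible rewards, identify the worst case at $\Angle{\ProxR}{\occmeas{\policy}} + \AngBound$, expand via the cosine addition formula to obtain the regularised objective up to the positive factor $\cos(\AngBound)$, and derive local-to-global optimality from the fact that a pointwise minimum of linear functionals is concave. If anything, your write-up is slightly more careful: you correctly describe that minimum as \emph{concave} (the paper's proof calls it ``convex,'' evidently a typo, since concavity is what makes local maxima global), and you explicitly flag the degenerate regime $\Angle{\ProxR}{\occmeas{\policy}} + \AngBound > \pi$ that the paper's argument passes over.
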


The above objective can be rewritten as 
$||\vec{\occmeaselem}_{\parallel}||-\kappa||\vec{\occmeaselem}_{\perp}||$
where $\vec{\occmeaselem}_{\parallel}, \vec{\occmeaselem}_{\perp}$ are the components of $\occmeas{\policy}$ parallel and perpendicular to $\QProj_\T\ProxR$. 

Stopping early clearly loses proxy reward, but it is important to note that it may also lose true reward. Since the algorithm is pessimistic, the optimisation stops before \textit{any} reward in $\FeasibleR^\RMag$ decreases. If we continued ascent past this stopping point, exactly one reward function in $\FeasibleR^\RMag$ would decrease (almost surely), but most other reward function would increase. If the true reward function is in this latter set, then early stopping loses some true reward. Our next result gives an upper bound on this quantity:

\begin{proposition}%[]
Let $\TrueR$ be a true reward and $\ProxR$ a proxy reward such that $\|\TrueR\| = \|\ProxR\| = 1$ and $\Angle{\TrueR}{\ProxR} = \AngBound$, and assume that the steepest ascent algorithm applied to $\ProxR$ produces a sequence of policies $\policy_0, \policy_1, \dots \policy_n$. If $\OptimalPolicy$ is optimal for $\TrueR$, we have that
$$
\J_{\TrueR}(\OptimalPolicy) - \J_{\TrueR}(\policy_{n}) \leq 
\mathrm{diameter}(\Omega) - \| \occmeas{\policy_{n}} - \occmeas{\policy_{0}}\|\cos(\AngBound).
$$
\end{proposition}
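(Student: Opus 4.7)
The plan is to split the regret using the linearity $\J_R(\pi) = \occmeas{\pi} \cdot R$, writing
\[
\J_{\TrueR}(\OptimalPolicy) - \J_{\TrueR}(\policy_n) \;=\; (\occmeas{\OptimalPolicy} - \occmeas{\policy_0})\cdot \TrueR \;-\; (\occmeas{\policy_n} - \occmeas{\policy_0})\cdot \TrueR.
\]
The first summand captures the maximal true-reward improvement obtainable from the starting policy, and the second is the actual true-reward progress made along the steepest-ascent trajectory on $\ProxR$. The first piece is handled immediately by Cauchy-Schwarz together with $\|\TrueR\| = 1$ and the definition of $\mathrm{diameter}(\Omega)$, giving
$(\occmeas{\OptimalPolicy} - \occmeas{\policy_0})\cdot \TrueR \leq \|\occmeas{\OptimalPolicy} - \occmeas{\policy_0}\| \leq \mathrm{diameter}(\Omega)$.

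To handle the second piece I would exploit the angle hypothesis $\Angle{\TrueR}{\ProxR} = \AngBound$. Setting $w := \occmeas{\policy_n} - \occmeas{\policy_0}$ and using the orthogonal decomposition $\TrueR = \cos(\AngBound)\ProxR + \sin(\AngBound)\ProxR_\perp$, where $\ProxR_\perp$ is a unit vector orthogonal to $\ProxR$ in $\mathrm{span}(\ProxR,\TrueR)$, we get $w \cdot \TrueR = \cos(\AngBound)(w\cdot\ProxR) + \sin(\AngBound)(w\cdot\ProxR_\perp)$. The target bound becomes $w \cdot \TrueR \geq \|w\|\cos(\AngBound)$. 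Because $\cos(\AngBound) = \TrueR \cdot \ProxR$, this is equivalent to the single Cauchy-Schwarz statement
\[
\TrueR \cdot \bigl((\occmeas{\OptimalPolicy} - \occmeas{\policy_n}) + \|w\|\ProxR\bigr) \leq \mathrm{diameter}(\Omega),
\]
and hence (since $\|\TrueR\| = 1$) reduces to the purely geometric inequality $\|(\occmeas{\OptimalPolicy} - \occmeas{\policy_n}) + \|w\|\ProxR\| \leq \mathrm{diameter}(\Omega)$.

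The hard part is precisely this last norm bound. Expanding,
\[
\|(\occmeas{\OptimalPolicy} - \occmeas{\policy_n}) + \|w\|\ProxR\|^2 = \|\occmeas{\OptimalPolicy} - \occmeas{\policy_n}\|^2 + 2\|w\|\,(\occmeas{\OptimalPolicy} - \occmeas{\policy_n})\cdot\ProxR + \|w\|^2,
\]
and the obstruction is to absorb the cross term. The natural tool is the fact that steepest ascent is being run \emph{on} $\ProxR$: for the terminal iterate we have $(\occmeas{\OptimalPolicy} - \occmeas{\policy_n})\cdot \ProxR \leq 0$ (since $\occmeas{\policy_n}$ is at least locally $\ProxR$-optimal along the ascent path), which forces the cross term to be non-positive. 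Combined with the diameter bound on $\|\occmeas{\OptimalPolicy} - \occmeas{\policy_0}\|$, expanded via $v := u + w$ as $\|v\|^2 = \|u\|^2 + 2u\cdot w + \|w\|^2 \leq \mathrm{diameter}(\Omega)^2$, this should suffice; a key sub-step is relating $u \cdot \ProxR$ to $u \cdot w$, which uses that the tangents $\TangentVec_i$ of steepest ascent (from Proposition~\ref{prop:concave_proof}) maintain non-negative inner product with $\ProxR$ along the entire trajectory.

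I expect the main obstacle to be handling the case where $\occmeas{\policy_n} - \|w\|\ProxR$ falls outside $\Omega$: a direct triangle-inequality reading of $\|u + \|w\|\ProxR\|$ as a distance inside $\Omega$ fails, and one must compensate algebraically via the sign of $u\cdot\ProxR$ coming from $\ProxR$-optimality of $\policy_n$. If that compensation is too weak in general, a fallback plan is to carry an inductive argument over the linear segments of the steepest-ascent path, using Proposition~\ref{prop:concave_proof} to monotonically bound each contribution to the true reward.
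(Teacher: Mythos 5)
Your plan follows the same skeleton as the paper's (very terse) proof: split the regret as $\bigl[(\occmeas{\OptimalPolicy}-\occmeas{\policy_0})\cdot\TrueR\bigr]-\bigl[(\occmeas{\policy_n}-\occmeas{\policy_0})\cdot\TrueR\bigr]$, bound the first bracket by $\mathrm{diameter}(\Omega)$ via Cauchy--Schwarz, and reduce the second bracket to the claim $w\cdot\TrueR\ge \|w\|\cos(\AngBound)$ for $w=\occmeas{\policy_n}-\occmeas{\policy_0}$. That claim is the entire content of the proposition, and neither you nor the paper actually proves it: the paper asserts it in one sentence (\enquote{the gains already made\dots scaled by $\cos(\AngBound)$}), and you reformulate it as the norm bound $\|(\occmeas{\OptimalPolicy}-\occmeas{\policy_n})+\|w\|\ProxR\|\le\mathrm{diameter}(\Omega)$ and then explicitly leave it as \enquote{the main obstacle}, with only a sketch of two candidate repairs. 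So the one nontrivial step is missing.

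The gap is genuine, not cosmetic. The inequality $w\cdot\TrueR\ge\|w\|\cos(\AngBound)$ says the total displacement makes angle at most $\AngBound$ with $\QProj_\T\TrueR$. This holds with equality while the ascent path is a single segment parallel to $\QProj_\T\ProxR$, but as soon as the path bends at a face of $\Omega$, $w$ acquires a nonzero angle $\beta$ with $\QProj_\T\ProxR$, and for a worst-case $\TrueR$ tilted to the opposite side the angle between $w$ and $\TrueR$ is $\AngBound+\beta>\AngBound$, so the inequality fails (in extreme cases $w\cdot\TrueR<0<\|w\|\cos(\AngBound)$ --- that later segments decrease true return is precisely the Goodharting phenomenon the paper documents). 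Your first repair does not rescue this: using $(\occmeas{\OptimalPolicy}-\occmeas{\policy_n})\cdot\ProxR\le 0$ to discard the cross term leaves $\|\occmeas{\OptimalPolicy}-\occmeas{\policy_n}\|^2+\|w\|^2$, which can equal $2\,\mathrm{diameter}(\Omega)^2$ (take $\occmeas{\policy_0},\occmeas{\policy_n},\occmeas{\OptimalPolicy}$ at the vertices of an equilateral triangle), so you only recover a bound of $\sqrt{2}\,\mathrm{diameter}(\Omega)$; moreover $(\occmeas{\OptimalPolicy}-\occmeas{\policy_n})\cdot\ProxR\le 0$ requires $\policy_n$ to be globally $\ProxR$-optimal, which the statement does not assume. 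Any correct argument must use quantitative information about how much the trajectory can bend (e.g.\ via the magnitude of the cross term, or a genuine per-segment induction), not just the sign conditions you invoke; to be fair, the paper's own proof glosses over exactly the same point.
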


It would be interesting to develop policy optimisation algorithms that start with an initial estimate $\ProxR$ of the true reward $\TrueR$ and then refine $\ProxR$ over time as the ambiguity in $\ProxR$ becomes relevant.
Theorems~\ref{thm:optimal_stopping} and \ref{thm:regularised_reward} could then be used to check when more information about the true reward is needed. While we mostly leave this for future work, we carry out some initial exploration in~\Cref{appendix:iterative_improvement}.

\subsection{Experimental Evaluation of Early Stopping}\label{sec:evaluation}

We evaluate the early stopping algorithm experimentally. One problem is that Algorithm~\ref{alg:early-stopping-code} involves the projection onto $\Polytope$, which is infeasible to compute exactly due to the number of deterministic policies being exponential in $|S|$. Instead, we observe that using MCE and BR approximates the steepest ascent trajectory.

Using the exact setup described in~\Cref{subsect:estimating-prevalence}, we verify that \emph{the early stopping procedure prevents Goodharting in all cases}, that is, employing the criterion from~\Cref{cor:optimal_stopping} always results in NDH = 0. Because early stopping is pessimistic, some reward will usually be lost. We are interested in whether the choice of (1) operationalisation of optimisation pressure, (2) the type of environment or (3) the angle distance $\theta$ impacts the performance of early stopping.
A priori, we expected the answer to the first question to be negative and the answer to the third to be positive. ~\Cref{fig:evaluation-envs} shows that, as expected, the choice between MCE and Boltzmann Rationality has little effect on the performance. Unfortunately, and somewhat surprisingly, the early stopping procedure can, in general, lose out on a lot of reward; in our experiments, this is on average between $10\%$ and $44\%$, depending on the size and the type of environment. The relationship between the distance and the lost reward seems to indicate that for small values of $\theta$, the loss of reward is less significant (c.f. \Cref{fig:evaluation-theta}).

\begin{figure}[t]
    \centering
    \begin{subfigure}[b]{0.39\textwidth}
        \centering
        \includegraphics[width=1\linewidth]{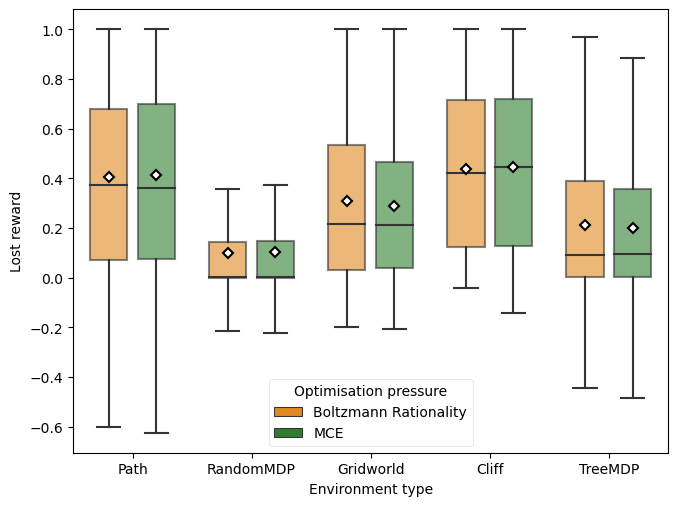}
        \caption{}
        \label{fig:evaluation-envs}
    \end{subfigure}
    \hfill
    \begin{subfigure}[b]{0.55\textwidth}
        \centering
        \includegraphics[width=1\linewidth]{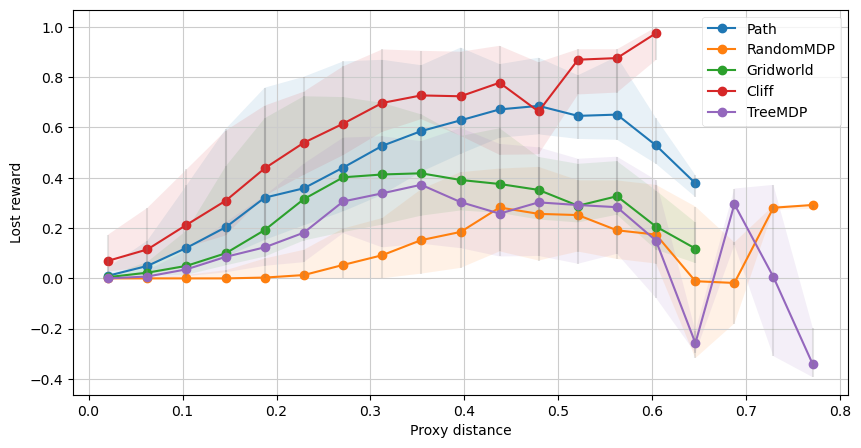}
        \caption{}
        \label{fig:evaluation-theta}
    \end{subfigure}
    \caption{(a) Reward lost due to the early stopping ($\diamond$ show groups' medians). (b) The relationship between $\theta$ and the lost reward (shaded area between 25th-75th quantiles), aggregated into 25 buckets.}
\end{figure}

\section{Discussion}

\paragraph{Computing $\occmeaselem$ in high dimensions:} 
Our early stopping method requires computing the occupancy measure $\occmeaselem$. Occupancy measures can be approximated via rollouts, though this approximation may be expensive and noisy.
Another option is to solve for $\occmeaselem = \occmeas{\policy}$ via  $\vec{\occmeaselem} = (I-\Pi \TMat)^{-1}\Pi \vec{\InitStateDistribution}$ where $\TMat$ is the transition matrix, $\InitStateDistribution$ is the initial state distribution, and $\Pi_{s, (s, a)} = \mathbb{P}(\pi(s) = a)$. This solution could be approximated in large environments. 

\paragraph{Approximating $\theta$:} Our early stopping method requires an upper bound $\theta$ on the angle between the true reward and the proxy reward. In practice, this should be seen as a measure of how accurate we believe the proxy to be. If the proxy reward is obtained through reward learning, then we may be able to estimate $\theta$ based on the learning algorithm, the amount of training data, and so on. 
Moreover, if we have a (potentially expensive) method to evaluate the true reward, such as expert judgement, then we can estimate $\theta$ directly (even in large environments). For details, see \citet{skalse2023starc}.

\paragraph{Key assumptions:} An important consideration when employing any optimisation algorithm is its behaviour when its key assumptions are not met. 
For our early stopping method, if the provided $\theta$ does not upper-bound the angle between the proxy and the true reward, then the learnt policy may, in the worst case, result in as much Goodharting as a policy produced by na\"ive optimisation.\footnote{However, it might still be possible to bound the worst-case performance further using the norm of the transition matrix (defining the geometry of the polytope $\Omega$). This will be an interesting topic for future work.} On the other hand, if the optimisation algorithm is not concave, then this can only cause the early-stopping procedure to stop at a sub-optimal point; Goodharting is still guaranteed to be avoided. This is also true if the upper bound $\theta$ is not tight.

\paragraph{Significance and Implications:} Our work has several direct implications.
In Section~\ref{sec:quantifying_goodharting}, we show that Goodharting occurs for a wide range of environments and reward functions. This means that we should expect to see Goodharting often when optimising for misspecified proxy rewards. 
In Section~\ref{sec:explain_goodhart}, we provide a mechanistic explanation for \emph{why} Goodharting occurs. We expect this to be helpful for further progress in the study of reward misspecification.
In Section~\ref{sec:preventing_goodhart}, we provide early stopping methods that provably avoid Goodharting, and show that these methods, in a certain sense, are worst-case optimal.
However, these methods can lead to less true reward than na\"ive optimisation,
This means that they are most applicable when it is essential to avoid Goodharting. 

\paragraph{Limitations and Future Work:}
We do not have a comprehensive understanding of the dynamics at play when a misspecified reward function is maximised, and our work does not exhaust this area of study.
An important question is what types of failure modes can occur in this setting, and how they may be detected and mitigated. Our work studies one important failure mode (i.e.\ Goodharting), but there may be other distinctive failure modes that could be described and studied as well. A related important question is precisely how a proxy reward $\ProxR$ may differ from the true reward $\TrueR$, before maximising $\ProxR$ might be bad according to $\TrueR$. There are several existing results pertaining to this question \citep{ng_policy_1999, gleave2021quantifying, skalse_defining_2022, skalse2022invariance}, but there is at the moment no comprehensive answer. Another interesting direction is to use our results to develop policy optimisation algorithms that collect more data about the reward function over time, as this information is needed. We discuss this direction in Appendix~\ref{appendix:iterative_improvement}. Finally, it would be interesting to try to find principled relaxations of the methods in Section~\ref{sec:preventing_goodhart}, that attain better practical performance while retaining desirable theoretical guarantees.

\bibliography{export-data} % Entries are in the refs.bib file
\bibliographystyle{iclr2024_conference}

\appendix

% the first one is "include" to get a newpage
\section{A More Detailed Explanation of Goodhart's Law}\label{appendix:extra_explanation}

In this section, we provide an intuitive explanation of why Goodharting occurs in MDPs, that will be more detailed an clear than the explanation provided in Section~\ref{sec:explain_goodhart}.

First of all, as in Section~\ref{sec:explain_goodhart}, recall that $\J_\R(\policy) = \occmeas{\policy} \cdot \R$, where $\occmeas{\policy}$ is the occupancy measure of $\pi$. This means that we can decompose $\J_\R$ into two steps, the first of which is independent of $R$, and maps $\Policies$ to $\Polytope$, and the second of which is a linear function.
Recall also that $\Polytope$ is a convex polytope.
Therefore, the problem of finding an optimal policy can be viewed as maximising a linear function within a convex polytope $\Polytope$. If $R_1$ is the reward function we are optimising, then we can visualise this as follows:

\begin{figure}[h]
    \centering
    \includegraphics[width=0.5\linewidth]{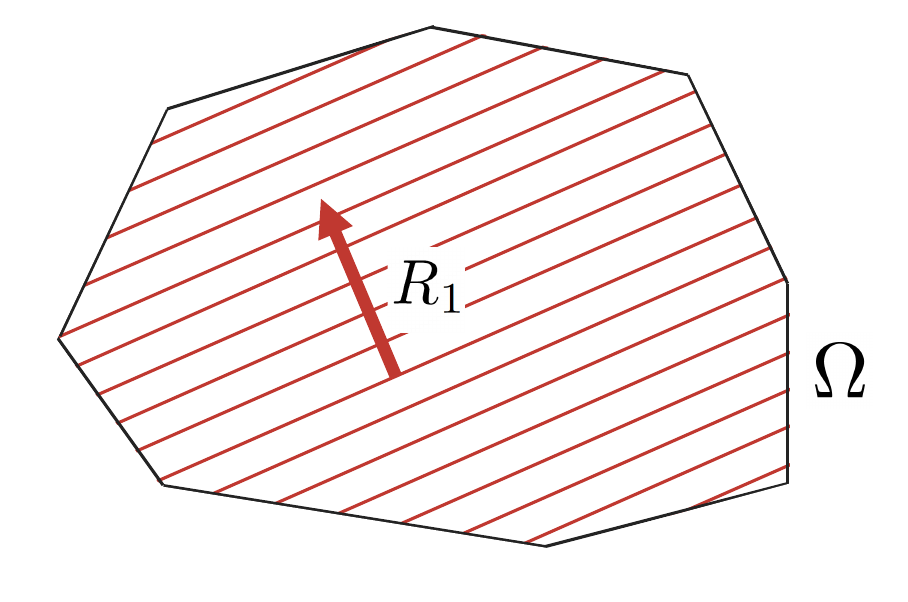}
    %\caption{text.}
    %\label{diagram:goodharting1}
\end{figure}

Here the red arrow denotes the direction of $R_1$ within $\Polytope$. Note that this direction corresponds to $\QProj_\T \R_1$, rather than $R_1$, since $\Polytope$ lies in a lower-dimensional affine subspace. Similarly, the red lines correspond to the level sets of $R_1$, i.e.\ the directions we can move in without changing $R_1$.

Now, if $R_1$ is a \emph{proxy reward}, then we may assume that there is also some (unknown) true reward function $R_0$. This reward also induces a linear function on $\Polytope$:

\begin{figure}[h]
    \centering
    \includegraphics[width=0.5\linewidth]{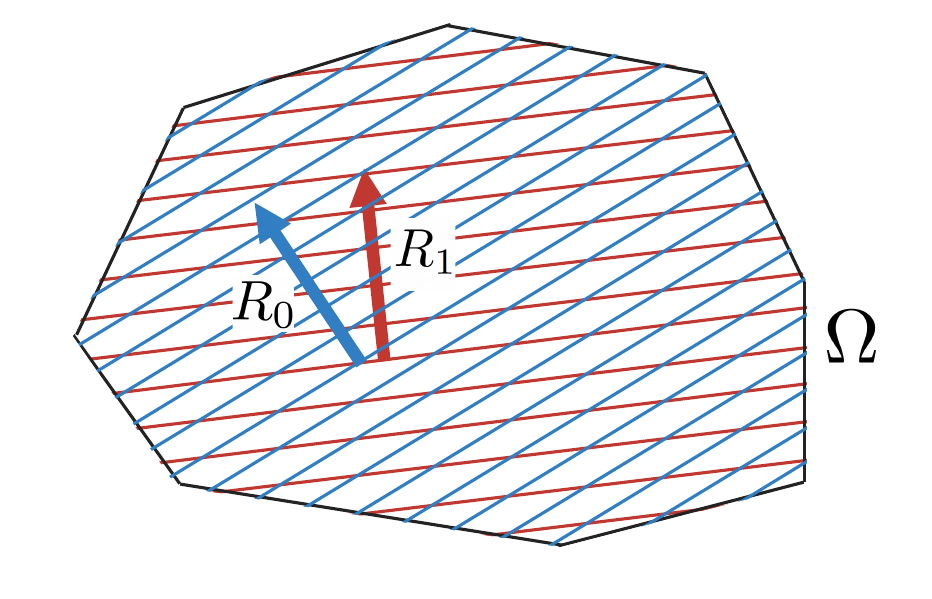}
    %\caption{text.}
    %\label{diagram:goodharting1}
\end{figure}

Suppose we pick a random point $\occmeas{\policy}$ in $\Polytope$, and then move in a direction that increases $\occmeas{\policy} \cdot R_1$. This corresponds to picking a random policy $\pi$, and then modifying it in a direction that increases $\J_{\ProxR}(\policy)$.
In particular, let us consider what happens to the true reward function $R_0$, as we move in the direction that most rapidly increases the proxy reward $R_1$.

To start with, if we are in the interior of $\Polytope$ (i.e., not close to any constraints), then the direction that most rapidly increases $R_1$ is to move parallel to $\QProj_\T \R_1$. Moreover, if the angle $\theta$ between $\QProj_\T \R_1$ and $\QProj_\T \R_0$ is no more than $\pi/2$, then this is guaranteed to also increase the value of $R_0$. To see this, simply consider the following diagram:

\begin{figure}[H]
    \centering
    \includegraphics[width=0.5\linewidth]{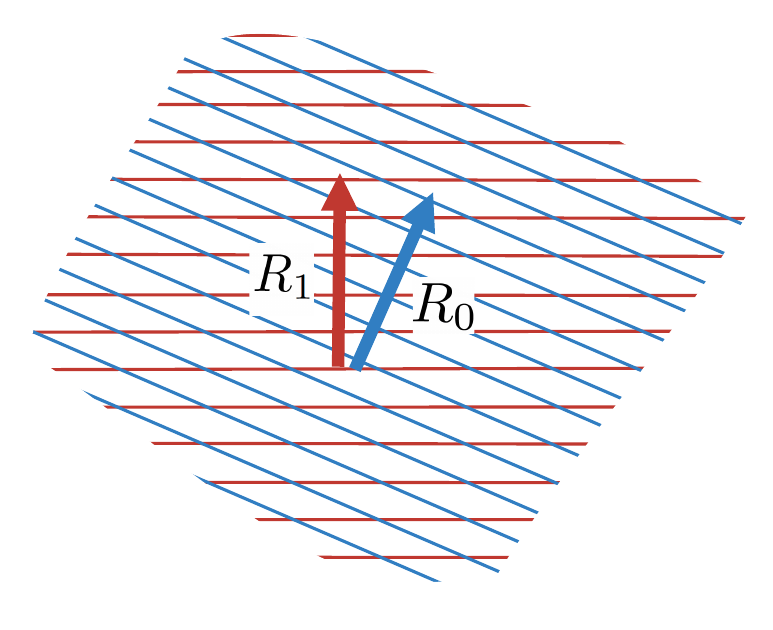}
    %\caption{text.}
    %\label{diagram:goodharting1}
\end{figure}

However, as we move parallel to $\QProj_\T \R_1$, we will eventually hit the boundary of $\Polytope$. When we do this, the direction that most rapidly increases $R_1$ will no longer be parallel to $\QProj_\T \R_1$. Instead, it will be parallel to the projection of $R_1$ onto the boundary of $\Polytope$ that we just hit. Moreover, if we keep moving in \emph{this} direction, then we might no longer be increasing the true reward $R_0$. To see this, consider the following diagram:

\begin{figure}[h]
    \centering
    \includegraphics[width=0.5\linewidth]{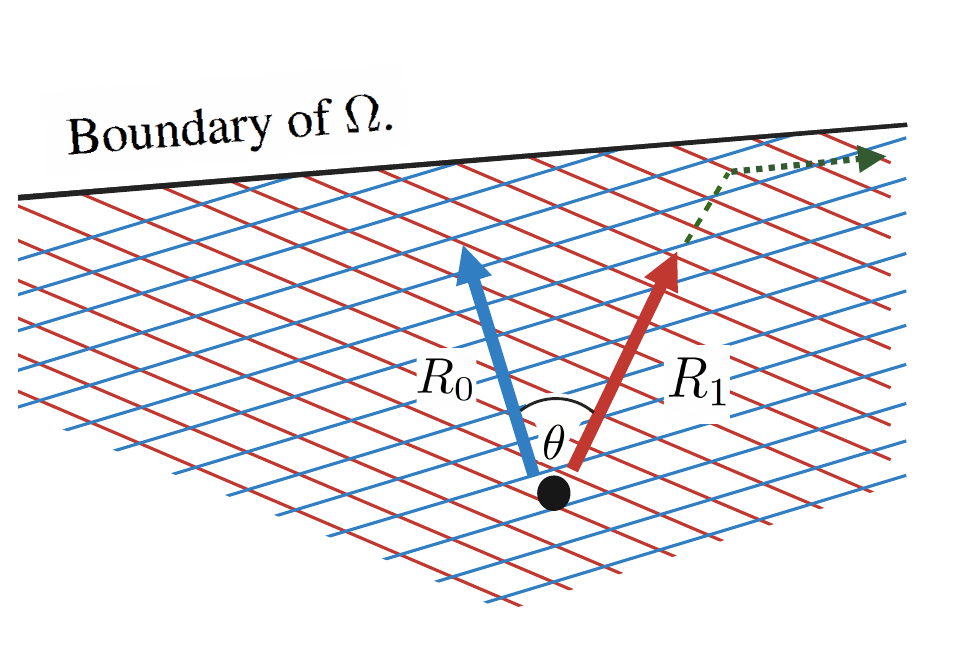}
    %\caption{text.}
    %\label{diagram:goodharting1}
\end{figure}

The dashed green line corresponds to the path that most rapidly increases $R_1$. As we move along this path, $R_0$ initially increases. However, after the path hits the boundary of $\Polytope$ and changes direction, $R_0$ will instead start to decrease. Thus, if we were to plot $\J_{R_1}(\pi)$ and $\J_{R_0}(\pi)$ over time, we would get a plot that looks roughly like this:

\begin{figure}[H]
    \centering
    \includegraphics[width=0.4\linewidth]{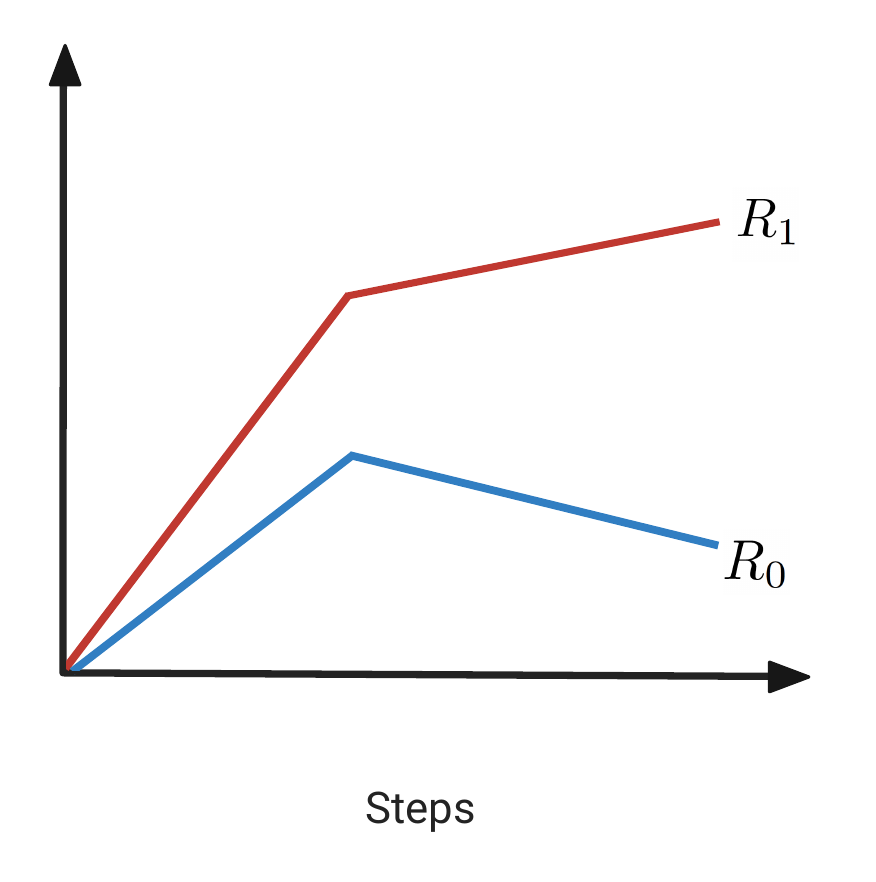}
    %\caption{text.}
    \label{fig:goodharting-linear-decrease}
\end{figure}

Next, it is important to note that $R_0$ is not guaranteed to decrease after we hit the boundary of $\Polytope$. To see this, consider the following diagram:

\begin{figure}[ht]
    \centering
    \includegraphics[width=0.5\linewidth]{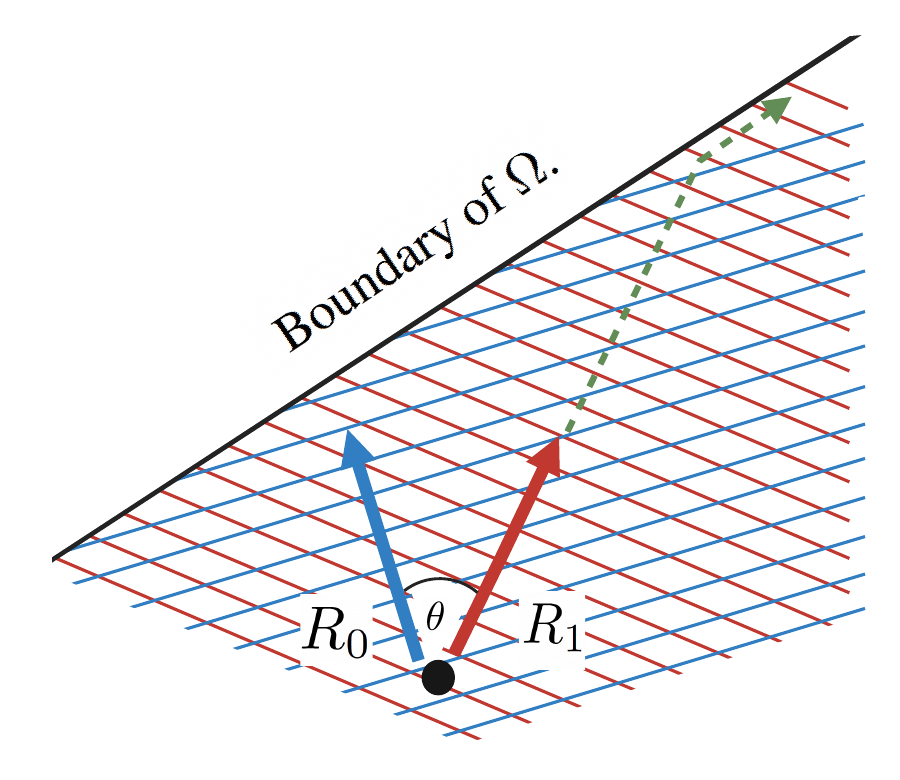}
    %\caption{text.}
    %\label{diagram:goodharting1}
\end{figure}

The dashed green line again corresponds to the path that most rapidly increases $R_1$. As we move along this path, $R_0$ will increase both before and after the path has hit the boundary of $\Polytope$. If we were to plot $\J_{R_1}(\pi)$ and $\J_{R_0}(\pi)$ over time, we would get a plot that looks roughly like this:

\begin{figure}[h]
    \centering
    \includegraphics[width=0.5\linewidth]{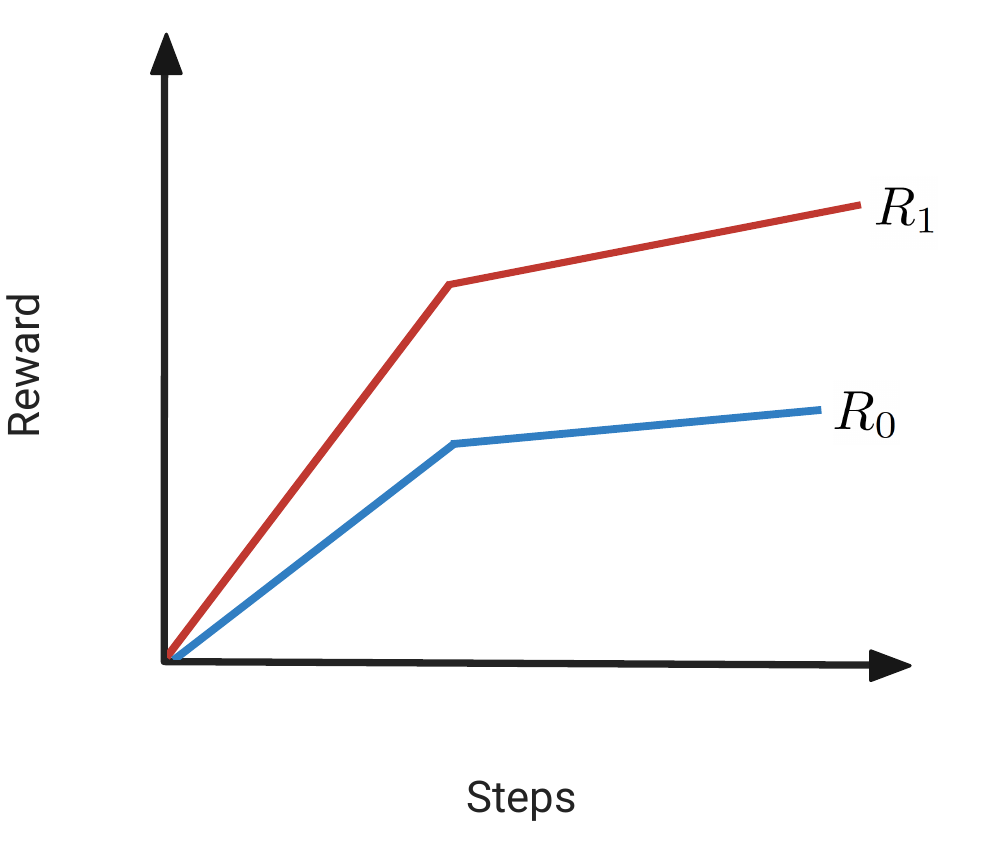}
    \label{fig:goodharting-linear-increase}
    % \caption{}
\end{figure}

The next thing to note is that we will not just hit the boundary of $\Polytope$ once. If we pick a random point $\occmeas{\policy}$ in $\Polytope$, and keep moving in the direction that most rapidly increases $\occmeas{\policy} \cdot R_1$ until we have found the maximal value of $R_1$ in $\Polytope$, then we will hit the boundary of $\Polytope$ over and over again. Each time we hit this boundary we will change the direction that we are moving in, and each time this happens, there is a risk that we will start moving in a direction that decreases $R_0$.

Note that Goodharting corresponds to the case where we follow a path through $\Polytope$ along which $R_0$ initially increases, but eventually starts to decrease. As we have seen, this must be caused by the boundaries of $\Polytope$.
We may now ask; under what conditions do these boundaries force the path of steepest ascent (of $R_1$) to move in a direction that decreases $R_0$? By inspecting the above diagrams, we can see that this depends on the angle between the normal vector of that boundary and $\QProj_\T R_1$, and the angle between $\QProj_\T R_1$ and $\QProj_\T R_0$. In particular, in order for $R_0$ to start decreasing, it has to be the case that the angle between $\QProj_\T R_1$ and $\QProj_\T R_0$ is \emph{larger} than the angle between $\QProj_\T R_1$ and the normal vector of the boundary of $\Polytope$. This immediately tells us that if the angle between $\QProj_\T R_1$ and $\QProj_\T R_0$ is small (i.e., if $\mathrm{arg}(R_0, R_1)$ is small), then Goodharting will be less likely to occur. 

Moreover, as the angle between $\QProj_\T R_1$ and the normal vector of the boundary of $\Polytope$ becomes \emph{smaller}, Goodharting should be correspondingly \emph{more} likely to occur. Next, recall that Proposition~\ref{prop:concave_proof} tells us that this angle will decrease monotonically along the path of steepest ascent (of $R_1$). As such, Goodharting will get more and more likely, the further we move along the path of steepest ascent. This explains why Goodharting becomes more likely when more optimisation pressure is applied. 

\section{Proofs}
\label{appendix:proofs}

\setcounter{proposition}{0}
\setcounter{theorem}{0}
\setcounter{corollary}{0}

\begin{proposition}
The set $\Polytope = \{\occmeas{\policy}: \policy \in \Policies\}$ is the convex hull of the finite set of points corresponding to the deterministic policies $\Polytope_d \coloneqq \{\occmeas{\policy}: \policy \in \DetPolicies\}$. It lies in a linear subspace of dimension $|\S|(|\A| - 1)$.
\end{proposition}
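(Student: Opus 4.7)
The plan is to identify $\Polytope$ with the polytope of solutions to the \emph{Bellman flow equations} and then invoke standard polytope theory. First I would verify that every occupancy measure satisfies the linear system
$$\sum_{a \in \A} \eta(s,a) \;=\; \m(s) \;+\; \gamma \sum_{s' \in \S,\, a' \in \A} \T(s \mid s', a')\, \eta(s', a'), \qquad s \in \S,$$
together with $\eta \geq 0$. This is a one-line telescoping computation: expand $\sum_a \occmeas{\pi}(s,a) = \sum_{t \geq 0}\gamma^t \Prob{s_t = s \mid \pi}$, split off the $t=0$ term, and reindex $t \mapsto t+1$ using the Markov property. Call the feasible set of this system $\Polytope'$; evidently $\Polytope \subseteq \Polytope'$.

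For the reverse inclusion, given $\eta \in \Polytope'$ I would define $\pi(a \mid s) \propto \eta(s,a)$ on states where $\sum_{a'}\eta(s,a') > 0$ (arbitrary otherwise) and show $\occmeas{\pi} = \eta$. The idea is that both vectors satisfy the same flow equations with the same policy-induced transition operator; since $\gamma < 1$, the operator $I - \gamma P_\pi^{\top}$ on $\RR^{|\S|}$ is invertible via its Neumann series, so the state-marginals agree, and then the chosen factorisation into $(s,a)$ matches $\eta$ by construction. Hence $\Polytope = \Polytope'$, and in particular $\Polytope$ is a bounded polyhedron.

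Since $\Polytope$ is a polytope, it equals the convex hull of its vertices, so it remains to characterise the vertices as occupancy measures of deterministic policies. A vertex is a basic feasible solution of the flow system: with only $|\S|$ equality constraints active, at least $|\S||\A| - |\S| = |\S|(|\A|-1)$ of the inequalities $\eta(s,a) \geq 0$ must also be active, leaving at most one action with positive occupancy per state. On reachable states this forces the induced policy to be deterministic; on unreachable states the policy choice is irrelevant because the corresponding occupancy is already zero. Conversely, each deterministic policy yields a distinct basic feasible solution, hence a vertex. I would cite Puterman, Chapter~6, for the details rather than reproducing this standard argument.

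For the dimension claim, I would check that the $|\S|$ flow equations are linearly independent. If $x \in \RR^{|\S|}$ witnessed a dependency, then for every $(s', a')$ we would have $x_{s'} = \gamma \sum_{s} \T(s \mid s', a')\, x_s$; averaging over any policy $\pi$ yields $x = \gamma P_\pi^{\top} x$, so $x = 0$ by the same invertibility used above. Thus the affine hull of $\Polytope$ has codimension $|\S|$ in $\RR^{|\S||\A|}$, i.e.\ dimension $|\S|(|\A|-1)$. The main obstacle I anticipate is the vertex step, specifically handling unreachable states where several deterministic policies may share an occupancy vector; the clean fix is to index vertices by equivalence classes of deterministic policies modulo agreement on reachable states, which does not affect the convex-hull statement.
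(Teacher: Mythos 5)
Your proposal is correct, but it reaches the convex-hull statement by a genuinely different route than the paper. Both arguments start from the same place: identifying $\Polytope$ with the feasible set of the Bellman-flow linear program (the paper cites Puterman, Theorem 6.9.1, for exactly the two inclusions you sketch). From there the paper gives a \emph{constructive} proof that $\Polytope \subseteq \mathrm{conv}(\Polytope_d)$: a strong induction on the number of ``extra'' actions used in visited states, peeling off one deterministic occupancy measure at a time via a splitting lemma in the spirit of Feinberg's Lemma 6.3. You instead invoke general polyhedral theory -- a bounded polyhedron is the convex hull of its vertices -- and characterise the vertices as basic feasible solutions, which you then identify with deterministic policies. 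Your route is shorter and leans on standard LP machinery; the paper's route is longer but self-contained and yields an explicit decomposition of any occupancy measure as a convex combination of deterministic ones. One step of your sketch is looser than you acknowledge: the count ``at most $|\S|$ equality constraints, hence at most $|\S|$ positive entries'' does \emph{not} by itself force one positive action per reachable state once unreachable states exist, since the positive entries saved on unreachable states could in principle be spent on a second action at a reachable state. The fix is to observe that the flow equation for an unreachable state involves only coordinates $(s,a)$ with $s$ unreachable, so that row is redundant given the active non-negativity constraints there; the effective rank count then lands on exactly one positive action per reachable state. You flag this issue and defer to Puterman, which is acceptable, but the resolution is the rank observation rather than the equivalence-class bookkeeping you propose. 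Your dimension argument (linear independence of the $|\S|$ flow equations via invertibility of $I - \gamma P_\pi$) is a nice self-contained replacement for the paper's citation of an external lemma, and it proves precisely the containment in an affine subspace of dimension $|\S|(|\A|-1)$ that the proposition asserts; note that both your argument and the paper's implicitly use $\gamma < 1$.
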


\begin{proof}

Proof of the second half of this proposition, which says that the dimension of the affine space containing $\Polytope$ has at most $|\S|(|\A|-1)$ dimensions, can be found in \cite[Lemma~A.2]{skalse2023misspecification}.
To prove
\citet[Equation~6.9.2]{puterman1994} outlines the following linear program:
\begin{align*}
    \text{maximise: } & \R \cdot \occmeas{} & \\
    \text{subject to: } &  
    \sum_{a \in \A} \occmeas{}(s',a) - \discount \sum_{s \in \S, a \in \A} \T(s, a, s')\cdot \occmeas{}(s,a) = \m(s)  &  \forall s' \in \S \\
    & \occmeas{}(s,a) \geq 0 & \forall s,a \in \S \times \A
\end{align*}
\citet[Theorem~6.9.1]{puterman1994} proves that (i) for any $\policy \in \Policies$, $\occmeas{\policy}$ satisfies this linear program and (ii) for any feasible solution to this linear program $\occmeas{}$, there is a policy $\policy$ such that $\occmeas{} = \occmeas{\policy}$. In other words, $\Omega = \{\occmeas{} \mid \occmeas{} \in \mathbb{R}^{|\S||\A|} A\occmeas{}=\mu, \occmeas{} \geq 0,\}$ where $A$ is an $|\S|$ by $|\S||\A|$ matrix.

Denote the convex hull of a finite set $X$ as $conv(X)$. We first show that $\Polytope = conv(\Polytope_d)$. The fact that $conv(\Polytope_d) \subseteq \Polytope$ follows straight from the fact that $\Polytope_d \subseteq \Polytope$, and from the fact that $\Polytope$ must be convex since it is the set of solutions a set of linear equations.

We show that $\Polytope \subseteq conv(\Polytope_d)$ by strong induction on 
$$k(\occmeas{}) \coloneqq \sum_{s \in \S} \max (0, \{ a \mid \{\occmeas{}(s,a)\geq 0 \mid - 1 \} \mid)$$
Intuitively, $k(\occmeas{}) = 0$ if and only if there is a deterministic policy corresponding to $\occmeas{}$ and $k(\occmeas{})$ increases with the number of potential actions available in visited states. The base case of the induction is simple, if $k(\occmeas{}) = 0$, then there is a deterministic policy $\policy_d$ such that $\occmeas{} = \occmeas{\policy_d}$ and therefore $\occmeas{} \in \Polytope_d \subseteq conv(\Polytope_d)$.

For the inductive step, suppose $\occmeas{}' \in conv(\Polytope_d)$ for all $\occmeas{}' \in \Polytope$ with $k(\occmeas{})' < K$ and consider any $\occmeas{}$ with $k(\occmeas{})=K$. We will use the following lemma, which is closely related to \citep[Lemma~6.3]{feinberg2012}.

\begin{lemma}\label{lem:splitting}
For any occupancy measure $\occmeas{}$ with $k(\occmeas{})>0$, let occupancy measure $x$ be a \textit{deterministic reduction} of $\occmeas{}$ if and only if $k(x)=0$ and, for all $s,a$, if $x(s,a)>0$ then $\occmeas{}(s,a)>0$. If $x$ is a deterministic reduction of $\alpha$, then there exists some $\alpha \in (0,1)$ and $y \in \Polytope$ such that $\occmeas{} = \alpha x + (1-\alpha) y$ and $k(y) < k(\occmeas{})$.
\end{lemma}
Intuitively, since a deterministic reduction always exists, \cref{lem:splitting} says that any occupancy measure corresponding to a stochastic policy can be split into an occupancy measure corresponding to a deterministic policy, and an occupancy measure with a smaller $k$ number. Proof of \cref{lem:splitting} is easy, choose $\alpha$ to be the maximum value such that $(\occmeas{} - \alpha x)(s,a) \geq 0$ for all $s$ and $a$, then set $y = \frac{1}{1-\alpha}(\occmeas{} - \alpha x)$. For at least one $s$, $a$, we will have $(\occmeas{} - \alpha x)(s,a)=0$ and therefore $k(y) < k(\occmeas{})$. It remains to show that $y in \Polytope$, but this follows straightforwardly from \cite[Theorem~6.9.1]{puterman1994} and the fact that $y \geq 0$, and $Ay=\frac{1}{1-\alpha}A(\occmeas{} - \alpha x)=\frac{1}{1-\alpha}(b-\alpha b) =b$.

If $k(\occmeas{})' < K$, then by \cref{lem:splitting}, $\occmeas{} = \alpha x + (1-\alpha) y$ with $k(x)=0$ and $k(y)<K$. By inductive hypothesis, since $k(y)<K$, $y \in conv(\Polytope_d)$ and therefore $y$ is a convex combination of vectors in $\Polytope_d$. Since $k(x)=0$, we know that $x \in \Polytope_d$ and therefore $\alpha x + (1-\alpha) y$ is also a convex combination of vectors in $\Polytope_d$. This suffices to show $\occmeas{} \in conv(\Polytope_d)$.

By induction $\occmeas{} \in conv(\Polytope_d)$, for all values of $k(\occmeas{})$, and therefore $\Polytope \subseteq conv(\Polytope_d)$.

\end{proof}

\begin{proposition}
We have $\Angle{\TrueR}{\ProxR} = 0$ if and only if $\TrueR, \ProxR$ induce the same ordering of policies, or, in other words, $\J_{\TrueR}(\policy) \leq \J_{\TrueR}(\policy') \iff \J_{\ProxR}(\policy) \leq \J_{\ProxR}(\policy')$ for all policies $\policy, \policy'$.
\end{proposition}

\begin{proof}
We show that $\Angle{\TrueR}{\ProxR}$ satisfies the conditions of \citep[Theorem 2.6]{skalse2023misspecification}. Recall that $\Angle{\TrueR}{\ProxR}$ is the angle between $M_\tau R_0$ and $M_\tau R_1$, where $M_\tau$ projects vectors onto $\Omega$.
Now, note that two reward functions $R_0$, $R_1$ induce different policy orderings if and only if the corresponding policy evaluation functions $\J_0$, $\J_1$ induce different policy orderings. Moreover, recall that for each $i$ $\J_i$ can be viewed as the linear function $\R_i \cdot \occmeas{\policy}$ for $\occmeas{\policy} \in \Omega$. Two linear functions $\ell_0, \ell_1$ defined over a domain $D$ which contains an open set induce different orderings if and only if $\ell_0$ and $\ell_1$ have a non-zero angle after being projected onto $D$. Finally, $\Omega$ does contain a set that is open in the smallest affine space which contains $\Omega$, as per Proposition~\ref{prop:convex_hull}. This means that $R_0$ and $R_1$ induce the same ordering of policies if and only if the angle between $M_\tau R_0$ and $M_\tau R_1$ is 0 (meaning that $\Angle{\TrueR}{\ProxR} = 0$). This completes the proof.
\end{proof}

\begin{proposition}[Concavity of Steepest Ascent]%\label{prop:concave_proof}
If $\TangentVec_i := \frac{\occmeas{\policy_{i+1}}-\occmeas{\policy_i}}{||\occmeas{\policy_{i+1}}-\occmeas{\policy_i}||}$ for $\occmeas{\policy_i}$ produced by steepest ascent on reward vector $\R$, $\TangentVec_i \cdot \R$ is nonincreasing. 
\end{proposition}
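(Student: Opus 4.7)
The plan is to show that along the piecewise-linear path produced by steepest ascent, each successive unit tangent has a weakly smaller inner product with $\R$. Write $\lambda_i := \|\occmeas{\policy_{i+1}} - \occmeas{\policy_i}\|$, so that $\occmeas{\policy_{i+1}} - \occmeas{\policy_i} = \lambda_i \TangentVec_i$. The operational characterisation of steepest ascent I will use is that $\TangentVec_i$ maximises $\TangentVec \cdot \R$ over all unit vectors $\TangentVec$ in the tangent cone $\ConeTangent(\occmeas{\policy_i})$ of $\Polytope$ at $\occmeas{\policy_i}$, in line with the definition in \citet{denel_steepest_1981} and the algorithm in Figure~\ref{alg:early-stopping-code}. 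Until a local (hence, by linearity of $\J_\R$, global) maximum on $\Polytope$ is reached, $\TangentVec_i \cdot \R > 0$; once $\TangentVec_i \cdot \R = 0$ the iteration terminates and the sequence is trivially nonincreasing from that index onward, so we may assume we are at a non-terminal step with $\lambda_i, \lambda_{i+1} > 0$.

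The first key step is a \emph{convex-combination} observation. By convexity of $\Polytope$ (Proposition~\ref{prop:convex_hull}), the segment joining $\occmeas{\policy_i}$ and $\occmeas{\policy_{i+2}}$ lies in $\Polytope$, so the unit vector $v := (\occmeas{\policy_{i+2}} - \occmeas{\policy_i})/\|\occmeas{\policy_{i+2}} - \occmeas{\policy_i}\|$ is a feasible direction at $\occmeas{\policy_i}$; that is, $v \in \ConeTangent(\occmeas{\policy_i})$. Maximality of $\TangentVec_i$ then yields
$$
(\occmeas{\policy_{i+2}} - \occmeas{\policy_i}) \cdot \R \;\leq\; \|\occmeas{\policy_{i+2}} - \occmeas{\policy_i}\|\,(\TangentVec_i \cdot \R).
$$

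The second step combines a decomposition with the triangle inequality. Writing $\occmeas{\policy_{i+2}} - \occmeas{\policy_i} = \lambda_i \TangentVec_i + \lambda_{i+1} \TangentVec_{i+1}$, the left-hand side above expands to $\lambda_i(\TangentVec_i \cdot \R) + \lambda_{i+1}(\TangentVec_{i+1} \cdot \R)$, while the triangle inequality gives $\|\occmeas{\policy_{i+2}} - \occmeas{\policy_i}\| \leq \lambda_i + \lambda_{i+1}$. Since $\TangentVec_i \cdot \R \geq 0$, the right-hand side is in turn bounded by $(\lambda_i + \lambda_{i+1})(\TangentVec_i \cdot \R)$. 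Cancelling $\lambda_i(\TangentVec_i \cdot \R)$ from both sides and dividing by $\lambda_{i+1} > 0$ delivers $\TangentVec_{i+1} \cdot \R \leq \TangentVec_i \cdot \R$, which is the desired monotonicity.

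I expect the one delicate point to be pinning down the steepest-ascent characterisation cleanly enough to invoke maximality of $\TangentVec_i$ against an arbitrary element of $\ConeTangent(\occmeas{\policy_i})$; this follows directly from the formal definition in \citet{denel_steepest_1981}, but it is what lets the argument dodge any bookkeeping about which face-defining inequalities of $\Polytope$ become active or inactive in passing from $\occmeas{\policy_i}$ to $\occmeas{\policy_{i+1}}$ to $\occmeas{\policy_{i+2}}$. Once that characterisation is in hand, the proof reduces to the two-line convexity-plus-triangle-inequality calculation above, and no properties of $\Polytope$ beyond convexity are needed.
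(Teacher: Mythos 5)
Your proof is correct and takes essentially the same route as the paper's: both arguments apply the maximality of $\TangentVec_i$ over the cone of tangents to the chord direction toward $\occmeas{\policy_{i+2}}$, then combine the decomposition $\occmeas{\policy_{i+2}}-\occmeas{\policy_i}=\lambda_i\TangentVec_i+\lambda_{i+1}\TangentVec_{i+1}$ with the triangle inequality. The only difference is presentational -- the paper phrases it as a proof by contradiction while you argue directly, and you are slightly more explicit about the sign condition $\TangentVec_i\cdot\R\geq 0$ needed in the triangle-inequality step.
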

\begin{proof}
By the definition of steepest ascent given in \cite{denel_steepest_1981}, $\vec{t}_i$ will be the unit vector in the \enquote{cone of tangents} $$\ConeTangent(\occmeas{\policy_{i}}) := \{\vec{t}: ||\vec{t}|| = 1, \exists \lambda > 0, \occmeas{\policy_i} + \lambda \vec{t} \in \Omega\}$$
that maximizes $\vec{t}_i \cdot \R$. This is what it formally means to go in the direction that leads to the fastest increase in reward.

For sake of contradiction, assume $\TangentVec_{i+1} \cdot \R>\TangentVec_{i}\cdot \R$, and let $\vec{t_i'} = \frac{\occmeas{\policy_{i+2}}-\occmeas{\policy_{i}}}{||\occmeas{\policy_{i+2}}-\occmeas{\policy_{i}}||}$. Then
\begin{align*}
\vec{t_i'}\cdot \R = \left(\frac{\TangentVec_{i+1}||\occmeas{\policy_{i+2}}-\occmeas{\policy_{i+1}}||+\TangentVec_{i}||\occmeas{\policy_{i+1}}-\occmeas{\policy_{i}}||}{||\occmeas{\policy_{i+2}}-\occmeas{\policy_{i}}||}\right) \cdot \R\\ \geq \left(\frac{\TangentVec_{i+1}||\occmeas{\policy_{i+2}}-\occmeas{\policy_{i+1}}||+\TangentVec_{i}||\occmeas{\policy_{i+1}}-\occmeas{\policy_{i}}||}{||\occmeas{\policy_{i+2}}-\occmeas{\policy_{i+1}}||+||\occmeas{\policy_{i+1}}-\occmeas{\policy_{i}}||}\right) \cdot \R > \TangentVec_{i}\cdot \R
\end{align*}
where the former inequality follows from triangle inequality and the latter follows as the expression is a weighted average of $\TangentVec_{i+1} \cdot \R$ and $\TangentVec_{i} \cdot \R$. We also have for $\lambda = ||\occmeas{\policy_{i+2}}-\occmeas{\policy_{i}}||$, $\occmeas{\policy_i}+\lambda \vec{t_i'} = \occmeas{\policy_{i+2}} \in \Polytope$. But then $\vec{t_i'} \in \ConeTangent(\occmeas{\policy_{i}})$, contradicting that $\TangentVec_{i} = \amax_{\ConeTangent(\occmeas{\policy_{i}})}\TangentVec \cdot \R$.
\end{proof}

\begin{theorem}[Optimal Stopping]%\label{thm:optimal_stopping}
Let $\ProxR$ be any reward function, let $\theta \in [0,\pi]$ be any angle, and let $\pi_A, \pi_B$ be any two policies. Then there exists a reward function $\TrueR$ with $\Angle{\TrueR}{\ProxR} \leq \AngBound$ and $\J_{\TrueR}(\policy_A) > \J_{\TrueR}(\policy_B)$ if and only if 
$$
\frac{\J_{\ProxR}(\policy_B)-\J_{\ProxR}(\policy_A)}{||\occmeas{\policy_B}-\occmeas{\policy_A}||} < \sin(\AngBound)||\QProj_\T \ProxR||
$$
\end{theorem}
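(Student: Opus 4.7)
The plan is to view the biconditional as a geometric question — does the cone of $\QProj_\T$-vectors within angle $\theta$ of $\QProj_\T\ProxR$ intersect the open half-space reversing the proxy preference between $\pi_A$ and $\pi_B$? — and then settle it by a two-dimensional trigonometric computation.

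First I would set $v := \QProj_\T \ProxR$, $\Delta := \occmeas{\pi_B} - \occmeas{\pi_A}$, and, for any candidate true reward, $w := \QProj_\T \TrueR$. Since $\Delta$ is a difference of occupancy measures it lies in $\linearspan{\Polytope}$, so
$$\J_{\TrueR}(\pi_B) - \J_{\TrueR}(\pi_A) \;=\; \TrueR \cdot \Delta \;=\; w \cdot \Delta,$$
and $\Angle{\TrueR}{\ProxR}$ is by definition the angle between $w$ and $v$. Existence of a $\TrueR$ with the stated properties is therefore equivalent to existence of some $w \in \linearspan{\Polytope}$ with $\angle(w,v) \leq \theta$ and $w \cdot \Delta < 0$; any such $w$ lifts back by setting $\TrueR := w$. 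Next I would reduce the search to the plane $P := \mathrm{span}(v,\Delta) \subseteq \linearspan{\Polytope}$. For any $w$, decompose $w = w_P + w_\perp$ with $w_P \in P$: since $\Delta \in P$ we have $w \cdot \Delta = w_P \cdot \Delta$, while $w_P \cdot v = w \cdot v$ and $\|w_P\| \leq \|w\|$, so (in the relevant range $\theta \leq \pi/2$) dropping $w_\perp$ only shrinks the angle with $v$. The minimum of $w \cdot \Delta$ over the cone is therefore realised inside $P$, so the problem is two-dimensional.

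The last step is a planar trigonometric calculation. Let $\alpha \in [0,\pi]$ be the angle from $v$ to $\Delta$, so $v \cdot \Delta = \|v\|\,\|\Delta\|\cos\alpha$, and parameterise a unit vector $w \in P$ by its signed angle $\phi$ from $v$, with $|\phi| \leq \theta$. Then $w \cdot \Delta = \|\Delta\|\cos(\phi - \alpha)$, whose minimum over $\phi \in [-\theta,\theta]$ equals $\|\Delta\|\cos(\alpha + \theta)$ (clipped at $-\|\Delta\|$ if $\alpha + \theta > \pi$). In either case this is strictly negative exactly when $\alpha + \theta > \pi/2$, i.e.\ $\cos\alpha < \sin\theta$. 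Substituting
$$\cos\alpha \;=\; \frac{v \cdot \Delta}{\|v\|\,\|\Delta\|} \;=\; \frac{\J_{\ProxR}(\pi_B) - \J_{\ProxR}(\pi_A)}{\|\QProj_\T\ProxR\|\,\|\occmeas{\pi_B} - \occmeas{\pi_A}\|}$$
and rearranging yields exactly the theorem's inequality.

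The main obstacle is the bookkeeping of degenerate cases: $\Delta = 0$ (the ratio on the left is $0/0$ and no such $\TrueR$ exists), $\QProj_\T\ProxR = 0$ (the projected angle is ill-defined), and $\Delta$ colinear with $v$ (the plane $P$ collapses to a line, requiring an auxiliary direction from $\linearspan{\Polytope}$, which exists because the affine hull of $\Polytope$ has dimension $|\S|(|\A|-1) \geq 2$ in any non-trivial MDP). Each of these is a short verification. Beyond that, the argument is clean planar trigonometry, once the geometric reduction from the $|\S|(|\A|-1)$-dimensional ambient space to the plane $P$ is in place.
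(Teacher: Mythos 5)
Your proposal is correct and takes essentially the same route as the paper's own proof: both reduce the existence claim to the purely geometric condition $\cos\left(\Angle{\ProxR}{\vec{d}}\right) < \sin(\AngBound)$ for $\vec{d} = \occmeas{\policy_B}-\occmeas{\policy_A}$, and both exhibit the witness reward on the boundary of the $\AngBound$-cone inside the two-dimensional plane spanned by $\QProj_\T\ProxR$ and $\vec{d}$. The only differences are cosmetic -- the paper's forward direction uses the triangle inequality for angles where you use orthogonal projection onto that plane, and your explicit restriction to $\AngBound \leq \pi/2$ (together with your handling of the degenerate cases) is also implicitly present in the paper's step from $\Angle{\ProxR}{\vec{d}} > \tfrac{\pi}{2}-\AngBound$ to $\cos\left(\Angle{\ProxR}{\vec{d}}\right) < \sin(\AngBound)$.
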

\begin{proof}\label{proof:opt_stop}
Let $\vec{d} := \occmeas{\policy_B}-\occmeas{\policy_A}$ denote the difference in occupancy measures. The inequality can be rewritten as 
$$
\exists \R \textrm { such that }  \Angle{\R}{\ProxR} \leq \theta\textrm{ and } \vec{d} \cdot \R < 0 \iff \cos\left(\Angle{\ProxR}{\vec{d}}\right) < \sin(\theta)
$$ 

To show one direction, if $\vec{d} \cdot \R < 0$ we have 
$\vec{d} \cdot \QProj_\T\R < 0$ as $\vec{d}$ is parallel to $\Omega$. This gives
$\Angle{\R}{\vec{d}} > \frac{\pi}{2}$ and
$$\Angle{\R}{\vec{d}}\leq \Angle{\ProxR}{\vec{d}}+\Angle{\TrueR}{\ProxR} \leq \Angle{\ProxR}{\vec{d}}+\AngBound.$$
It follows that $\Angle{\ProxR}{\vec{d}}> \frac{\pi}{2}-\AngBound$, and thus $\CosDist{\ProxR}{\vec{d}} < \sin(\theta)$.

If instead $\CosDist{\ProxR}{\vec{d}} < \sin(\AngBound)$, 
we have $\Angle{\R}{ \vec{d}} > \frac{\pi}{2}-\theta$. To choose $\R$, there will be two vectors $\R \in \FeasibleR^\RMag$ that lie at the intersection of the plane $\linearspan{\occmeas{\policy}, \QProj_\T\ProxR}$ with the cone 
$\Angle{\R}{\ProxR} = \AngBound$. One will satisfy $\Angle{\R}{\occmeas{\policy}} = \Angle{\R}{\ProxR}+\Angle{\ProxR}{\occmeas{\policy}}$ (informally, when $\ProxR$ lies between $\occmeas{\policy}$ and $\R$).
Then this $\R$ gives 
$$\Angle{\R}{\vec{d}} = \Angle{\R}{\ProxR}+\Angle{\R}{\vec{d}} > \theta + \frac{\pi}{2}-\theta = \frac{\pi}{2}$$
so
$\R \cdot \vec{d} < 0$. 
\end{proof}
\begin{proposition}\label{thm:regularised_reward}
    Given a proxy reward $\ProxR$, let $\FeasibleR^\RMag$ be the set of possible true rewards
    $\R$ such that 
    $\Angle{\R}{\ProxR} \leq \theta$ and $\R$ is normalized so that $||\QProj_\T \R|| = ||\QProj_\T \ProxR||$. 
    
    Then we have that a policy $\policy$ maximises $\min_{\R \in \FeasibleR^\RMag}\J_{\R}(\policy)$ if and only if it maximises $\J_{\ProxR}(\policy)- \kappa ||\occmeas{\policy}|| \sin\left(\Angle{\occmeas{\policy}}{\ProxR}\right)$,
    where 
    $\kappa = \tan(\AngBound)||\QProj_\T\ProxR||$.

    Moreover, each local maximum of this objective is a global maximum when restricted to $\Omega$, giving that this function can be practically optimised for.
\end{proposition}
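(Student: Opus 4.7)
The plan is to reduce the inner minimisation $\min_{\R \in \FeasibleR^\RMag}\J_{\R}(\policy)$ to a one-parameter optimisation over a spherical cap, solve it in closed form, and then separately handle the ``local implies global'' claim via concavity. Since $\J_\R(\policy) = \occmeas{\policy} \cdot \R = \occmeas{\policy} \cdot \QProj_\T \R$, the inner minimisation depends only on $\QProj_\T \R$, which is constrained to the spherical cap $\{\vec{r}' \in \linearspan{\Polytope} : \|\vec{r}'\| = \|\QProj_\T \ProxR\|,\ \angle(\vec{r}', \QProj_\T\ProxR) \leq \AngBound\}$.

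First I would set $\hat r = \QProj_\T\ProxR/\|\QProj_\T\ProxR\|$ and decompose $\occmeas{\policy} = \vec{\occmeaselem}_\parallel + \vec{\occmeaselem}_\perp$ into components parallel and orthogonal to $\hat r$. Parameterising any feasible $\vec{r}'$ as $\|\QProj_\T\ProxR\|(\cos\alpha \cdot \hat r + \sin\alpha \cdot \hat v)$ with $\alpha \in [0, \AngBound]$ and $\hat v \in \linearspan{\Polytope}$ a unit vector perpendicular to $\hat r$, the inner product becomes $\|\QProj_\T\ProxR\|(\cos\alpha \cdot p + \sin\alpha \cdot (\occmeas{\policy} \cdot \hat v))$, where $p := \occmeas{\policy} \cdot \hat r$. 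Optimising over $\hat v$ selects $\hat v = -\vec{\occmeaselem}_\perp/\|\vec{\occmeaselem}_\perp\|$, reducing the objective to $\|\QProj_\T\ProxR\|(\cos\alpha \cdot p - \sin\alpha \cdot \|\vec{\occmeaselem}_\perp\|)$. A short derivative argument shows this is monotone decreasing in $\alpha$ whenever $p \geq 0$, so the minimum is attained at $\alpha = \AngBound$. Substituting the identities $\J_{\ProxR}(\policy) = \|\QProj_\T\ProxR\| \cdot p$ and $\|\vec{\occmeaselem}_\perp\| = \|\occmeas{\policy}\|\sin(\Angle{\occmeas{\policy}}{\ProxR})$ and factoring out $\cos\AngBound > 0$ yields
$$\min_{\R \in \FeasibleR^\RMag}\J_{\R}(\policy) \;=\; \cos\AngBound\bigl[\J_{\ProxR}(\policy) - \kappa \|\occmeas{\policy}\|\sin(\Angle{\occmeas{\policy}}{\ProxR})\bigr],$$
so the two optimisation problems over $\Polytope$ have the same argmaxes.

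For the ``local equals global maximum'' claim, I would observe that $f(\occmeaselem) := \occmeaselem \cdot \ProxR - \kappa \|P_\perp \occmeaselem\|$ (with $P_\perp$ the orthogonal projection in $\linearspan{\Polytope}$ onto the hyperplane perpendicular to $\hat r$) is concave: the first term is linear, and $\|P_\perp \cdot\|$ is a seminorm hence convex, so its negation is concave. A sum of concave functions is concave, and by Proposition~\ref{prop:convex_hull} the feasible set $\Polytope$ is convex, so any local maximum of $f$ on $\Polytope$ is a global maximum.

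The main obstacle is the degenerate regime $p < 0$ with $\|\vec{\occmeaselem}_\perp\| \leq |p|\tan\AngBound$, where the unconstrained sphere-minimiser $-\occmeas{\policy}/\|\occmeas{\policy}\|$ actually lies inside the cap, so the inner minimum equals $-\|\QProj_\T\ProxR\|\cdot\|\occmeas{\policy}\|$ rather than the closed form above. To close this gap I would note that in this region both $\min_\R \J_\R(\policy)$ and $f(\occmeas{\policy})$ are strictly negative, while any policy in the complementary region with $\J_{\ProxR}(\policy) \geq \kappa\|\vec{\occmeaselem}_\perp\|$ achieves $f(\occmeas{\policy}) \geq 0$; hence, whenever such a policy exists (which is the non-trivial case), neither argmax can lie in the degenerate region, and the equivalence of argmaxes established on the generic region is sufficient.
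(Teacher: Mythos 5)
Your proof is correct and follows essentially the same route as the paper's: both reduce the inner minimisation to a spherical cap, locate the minimising reward at angle $\AngBound$ from $\ProxR$ in the plane spanned by $\QProj_\T\ProxR$ and $\occmeas{\policy}$, and expand $\cos\left(\Angle{\ProxR}{\occmeas{\policy}}+\AngBound\right)$ to recover the regularised objective up to the positive factor $\cos\AngBound$. You are somewhat more careful than the paper in two places: you obtain concavity directly from the form \enquote{linear minus a seminorm} (where the paper appeals to a minimum of linear functions being \enquote{convex}, a slip for concave), and you explicitly flag and patch the degenerate regime $\Angle{\ProxR}{\occmeas{\policy}}+\AngBound>\pi$ in which the closed form for the inner minimum fails, a case the paper's angle-triangle-inequality step silently assumes away.
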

\begin{proof}
Note that $$\min_{\R \in \FeasibleR^\RMag}\J_{\R}(\policy) = \occmeas{\policy} \cdot \QProj_\T\R = ||\QProj_\T\ProxR||||\occmeas{\policy}||\left(\min_{\R \in \FeasibleR^\RMag}\cos\left(\Angle{\occmeas{\policy}}{\R}\right)\right)$$
as $||\QProj_\T\ProxR|| = ||\QProj_\T\TrueR||$ for all $\TrueR$. Now we claim
$$\min_{\R \in \FeasibleR^\RMag}\cos\left(\Angle{\R}{\occmeas{\policy}}\right) = \cos\left(\Angle{\ProxR}{ \occmeas{\policy}}+\theta\right).$$

To show this, we can take $\R \in \FeasibleR^\RMag$ with $\Angle{\R}{\occmeas{\policy}} = \Angle{\ProxR}{\occmeas{\policy}}+\theta$ (such an $\R$ is described in \cref{proof:opt_stop}). This then gives $$\min_{\R \in \FeasibleR^\RMag}\cos\left(\Angle{\R}{\occmeas{\policy}}\right) \leq \cos\left(\Angle{\R}{ \occmeas{\policy}}+\theta\right).$$ We also have $$\cos\left(\Angle{\R}{\occmeas{\policy}}\right) \geq \cos\left(\Angle{\ProxR}{ \occmeas{\policy}}+\Angle{\ProxR}{ \R}\right) \geq \cos\left(\Angle{\ProxR}{ \occmeas{\policy}}+\theta\right)$$
for any $\R$.
Then $$\min_{\R \in \FeasibleR^\RMag}\cos\left(\Angle{\R}{\occmeas{\policy}}\right) = \cos\left(\Angle{\ProxR}{ \occmeas{\policy}}+\theta\right) =$$$$ \cos(\AngBound)\cos\left(\Angle{\ProxR}{\occmeas{\policy}}\right)-\sin(\AngBound)\sin\left(\Angle{\ProxR}{ \occmeas{\policy}}\right).$$

Rearranging gives $$\left.\min_{\R \in \FeasibleR^\RMag} \J_{\R}(\policy)\right. \propto 
 \left.\ProxR \cdot \occmeas{\policy}-\tan{\theta}||\occmeas{\policy}||||\QProj_\T\TrueR||\sin\left(\Angle{\ProxR}{\occmeas{\policy}}\right)\right.$$ which is equivalent to the given objective.

To show that all local maxima are global maxima, note that $\min_{\R \in \FeasibleR^\RMag} \J_{\R}(\policy) = \min_{\R \in \FeasibleR^\RMag} \occmeas{\policy} \cdot \R$ in $\Polytope$ is a minimum over linear functions, and is therefore convex. This then gives that each local maximum of $\min_{\R \in \FeasibleR^\RMag} \J_\R(\pi)$ is a global maximum, so the same holds for the given objective function.
\end{proof}
\begin{proposition}%[]
Let $\TrueR$ be a true reward and $\ProxR$ a proxy reward such that $\|\TrueR\| = \|\ProxR\| = 1$ and $\Angle{\TrueR}{\ProxR} = \AngBound$, and assume that the steepest ascent algorithm applied to $\ProxR$ produces a sequence of policies $\policy_0, \policy_1, \dots \policy_n$. If $\OptimalPolicy$ is optimal for $\TrueR$, we have that
$$
|\J_{\TrueR}(\policy_{n}) - \J_{\TrueR}(\OptimalPolicy)| \leq 
\mathrm{diameter}(\Omega) - \| \occmeas{\policy_{n}} - \occmeas{\policy_{0}}\|\cos(\AngBound).
$$
\end{proposition}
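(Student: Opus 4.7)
The plan is to split the regret additively at the starting policy $\policy_0$. Using the linear identity $\J_{R}(\pi) = \occmeas{\pi}\cdot\QProj_\T R$, I would write
\[
\J_{\TrueR}(\OptimalPolicy) - \J_{\TrueR}(\policy_n) \;=\; (\occmeas{\OptimalPolicy} - \occmeas{\policy_0})\cdot\QProj_\T\TrueR \;-\; (\occmeas{\policy_n} - \occmeas{\policy_0})\cdot\QProj_\T\TrueR,
\]
then upper-bound the first summand by $\mathrm{diameter}(\Omega)$ and lower-bound the subtracted summand by $\|\occmeas{\policy_n}-\occmeas{\policy_0}\|\cos\AngBound$. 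Because $\OptimalPolicy$ is optimal for $\TrueR$, the regret is non-negative, so this also controls the absolute value on the left-hand side.

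For the first summand I would invoke Cauchy--Schwarz together with the non-expansiveness of the orthogonal projection $\QProj_\T$, which gives $\|\QProj_\T\TrueR\|\leq\|\TrueR\|=1$ and therefore
$(\occmeas{\OptimalPolicy} - \occmeas{\policy_0})\cdot\QProj_\T\TrueR \leq \|\occmeas{\OptimalPolicy}-\occmeas{\policy_0}\|\cdot\|\QProj_\T\TrueR\| \leq \mathrm{diameter}(\Omega).$
For the second summand, setting $\vec{d} := \occmeas{\policy_n}-\occmeas{\policy_0} = \sum_{i=0}^{n-1}\lambda_i\TangentVec_i$ along the piecewise-linear steepest-ascent path, I would show that $\vec{d}$ makes angle at most $\AngBound$ with $\QProj_\T\TrueR$, so that $\vec{d}\cdot\QProj_\T\TrueR \geq \|\vec{d}\|\cos\AngBound$. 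The ingredients are: (i) steepest ascent picks each $\TangentVec_i$ to maximise $\TangentVec\cdot\QProj_\T\ProxR$ over the tangent cone at $\occmeas{\policy_i}$; (ii) the concavity statement (Proposition~\ref{prop:concave_proof}) makes the inner products $\TangentVec_i\cdot\QProj_\T\ProxR$ non-increasing, so each per-segment angle with $\QProj_\T\ProxR$ is monotone; (iii) a weighted-average argument shows that $\vec{d}$ inherits the cone alignment of the $\TangentVec_i$ with $\QProj_\T\ProxR$; and (iv) the triangle inequality on angles combines (iii) with $\Angle{\ProxR}{\TrueR}=\AngBound$.

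The hard part will be ingredient (iii). Once the ascent path strikes a face of $\Omega$, subsequent $\TangentVec_i$ can deviate substantially from $\QProj_\T\ProxR$, and a naive convex combination of unit vectors need not respect the tightest per-segment angle. To close this gap I would use the monotonicity of angles from Proposition~\ref{prop:concave_proof} together with the $\lambda_i$-weights, and absorb any residual slack into the factor $\|\QProj_\T\TrueR\|\leq 1$ on the right-hand side of the second bound. With the alignment lemma in place, summing the two inequalities then gives exactly $\J_{\TrueR}(\OptimalPolicy) - \J_{\TrueR}(\policy_n) \leq \mathrm{diameter}(\Omega) - \|\vec{d}\|\cos\AngBound$, as required.
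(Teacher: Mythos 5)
Your decomposition is the same one the paper's (very terse) proof relies on: split the regret at $\policy_0$, bound $(\occmeas{\OptimalPolicy}-\occmeas{\policy_0})\cdot \QProj_\T\TrueR$ by $\mathrm{diameter}(\Omega)$ via Cauchy--Schwarz and $\|\QProj_\T\TrueR\|\leq\|\TrueR\|=1$, and lower-bound the true-reward gain already accumulated along the ascent path by $\|\occmeas{\policy_n}-\occmeas{\policy_0}\|\cos(\AngBound)$. The first half is fine, and the reduction of the absolute value to the one-sided regret via optimality of $\OptimalPolicy$ is fine.

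The gap is exactly where you flag it, and the ingredients you list do not close it. The claim that $\vec{d}=\occmeas{\policy_n}-\occmeas{\policy_0}$ makes angle at most $\AngBound$ with $\QProj_\T\TrueR$ is false in general: steepest ascent only guarantees that each segment direction $\TangentVec_i$ has non-negative inner product with $\QProj_\T\ProxR$, and once the path lies on a face of $\Polytope$, $\TangentVec_i$ is the renormalised projection of $\QProj_\T\ProxR$ onto that face, which can be nearly orthogonal to $\QProj_\T\ProxR$ and hence at angle close to $\pi/2+\AngBound$ from $\QProj_\T\TrueR$. Proposition~\ref{prop:concave_proof} makes $\TangentVec_i\cdot\ProxR$ \emph{decrease}, so it drives later segments further out of alignment --- it works against your ingredient (iii), not for it. Concretely, a long second segment along such a face gives $\vec{d}\cdot\QProj_\T\TrueR<0$ while $\|\vec{d}\|\cos(\AngBound)>0$; this is precisely the Goodharting regime in which the true return genuinely falls along the path, so no alignment lemma of this form can hold. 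The most the angle triangle inequality yields from $\Angle{\ProxR}{\vec{d}}<\pi/2$ is $\vec{d}\cdot\QProj_\T\TrueR\geq-\|\vec{d}\|\sin(\AngBound)\|\QProj_\T\TrueR\|$. Two further issues: absorbing slack into $\|\QProj_\T\TrueR\|\leq 1$ goes the wrong way here, since a factor at most one weakens a lower bound rather than strengthening it; and even granting the angle bound you would only get $\vec{d}\cdot\QProj_\T\TrueR\geq\|\vec{d}\|\,\|\QProj_\T\TrueR\|\cos(\AngBound)$, which falls short of $\|\vec{d}\|\cos(\AngBound)$ because the hypothesis normalises $\|\TrueR\|$, not $\|\QProj_\T\TrueR\|$. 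In fairness, the paper's own proof asserts this same alignment step in a single unjustified sentence, so your attempt reproduces the published argument together with its gap; but the crucial inequality is not established, and additional hypotheses (e.g.\ restricting to the segments retained by the early-stopping rule) would be needed to salvage it.
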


\begin{proof}
The bound is composed of two terms: (1) how much total reward $\R$ is there to gain, and (2) how much did we gain already. Since the reward vector is normalised, the range of reward over the $\Polytope$ is its diameter. The gains that had already been made by the Steepest Ascent algorithm equal $\|\occmeas{\policy_{n}} - \occmeas{\policy_{0}}\|$, but this has to be scaled by the (pessimistic) factor of $\cos(\AngBound)$, since this is the alignment of the true and proxy reward.
\end{proof}

The bound can be difficult to compute exactly. A simple but crude approximation of the diameter is
\[
\max_{\occmeaselem_1,\occmeaselem_1 \in \Polytope} \|\occmeaselem_1  - \occmeaselem_2\|_2 \leq 2 \max_{\occmeaselem \in \Polytope} \|\occmeaselem\|_2 \leq \frac{2}{1 - \gamma}
\]
\section{Measuring Goodharting}\label{appendix:measuring_goodharting}

While Goodhart’s law is qualitatively well-described, a quantitative measure is needed to systematically analyse it. We propose a number of different metrics to do that. Below, implicitly assuming that all rewards are normalised (as in the~\Cref{sec:preliminaries}), we denote by $f: [0, 1] \to [0, 1]$ the true reward obtained by a policy trained on a \emph{proxy reward}, as a function of optimisation pressure $\lambda$, similarly by $f_0$ the true reward obtained by a policy trained on the \emph{true reward}, and $\lambda^\star = \arg\max_{\lambda \in [0, 1]} f(\lambda)$.

\begin{itemize}
    \item \textbf{Normalised drop height:} 
    \[
        \text{NDH}(f) = f(1) - f(\lambda^\star)
    \]
    \item \textbf{Simple integration:} 
    \[
        \text{SI}(f) = \left( \int_{0}^{\lambda^\star} f(\lambda) d\lambda \right) \left( \int_{\lambda^\star}^1 f(\lambda) d\lambda \right)
    \]
    \item \textbf{Weighted correlation-anticorrelation:} 
    \[
    \text{CACW}(f) = -\max(\rho_0, 0) \max(\rho_1, 0) \sqrt{\lambda^\star (1- \lambda^\star)}
    \]
    where $\rho^i = \rho(f(I_i), I_i)$ are the Pearson correlation coefficients for $I_0 \sim Unif[0, \lambda^\star]$, $I_1 \sim Unif[\lambda^\star, 1]$.
    \item \textbf{Regression angle:} 
    \[
        \text{LR}(f) = -\beta_0^+ \beta_1^+
    \] where $\beta_0, \beta_1$ are the angles of the linear regression of $f$ on $[0, \lambda^\star]$ and $[\lambda^\star, 1]$ respectively.
    \item \textbf{Relative weighted integration:} \[
        \text{RWI}(f) = \left((1 - \lambda^\star) \int_{0}^{\lambda^\star} |f(\lambda) - f_0(\lambda)| d\lambda \right) \left(\frac{1}{1 - \lambda^\star} \int_{\lambda^\star}^1 |f(\lambda) - f_0(\lambda)| d\lambda \right) 
        \]
\end{itemize}

The metrics were independently designed to capture the intuition of a \emph{sudden drop in reward with an increased optimisation pressure}. We then generated a dataset of 40000 varied environments: 
\begin{itemize}
\item \emph{Gridworld}, \textbf{Terminal} reward, with $|S| \sim Poiss(100)$, N=1000
\item \emph{Cliff}, \textbf{Cliff} reward, with $|S| \sim Poiss(100)$, N=500
\item \emph{RandomMDP}, \textbf{Terminal} reward, $|S| \sim Poiss(100)$, $|A| \sim Poiss(6)$, N=500
\item \emph{RandomMDP}, \textbf{Terminal} reward, $|S| \sim Unif(16, 64), |A| \sim Unif(2, 16)$, N=500
\item \emph{RandomMDP}, \textbf{Uniform} reward, $|S| \sim Unif(16, 64), |A| \sim Unif(2, 16)$, N=500
\item \emph{CyclicMDP}, \textbf{Terminal} reward, $\text{depth} \sim Poiss(3)$, N=1000
\end{itemize}

We have manually verified that all metrics seem to activate strongly on graphs that we would intuitively assign a high degree of Goodharting. In~\Cref{fig:highest-scores}, we show, for each metric, the top three training curves from the dataset that each metric assigns the highest score.

We find that all of the metrics are highly correlated - see~\Cref{table:goodhart-metric-correlations}. Because of this, we believe that it is meaningful to talk about a quantitative Goodharting score. Since normalised drop height is the simplest metric, we use it as the proxy for Goodharting in the rest of the paper.

\begin{figure}
    \centering
    \includegraphics[width=0.3\textwidth]{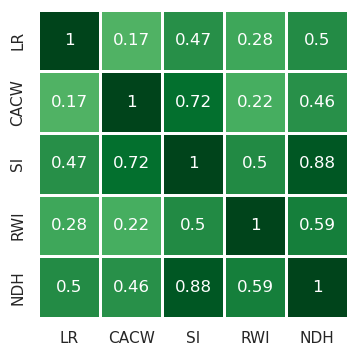}
    \caption{Correlations between different Goodharting metrics, computed over examples where the drop occurs for $\lambda > 0.3$, to avoid selecting adversarial examples.}
    \label{table:goodhart-metric-correlations}
\end{figure}

\begin{figure}[htbp]
    \centering
    \begin{subfigure}[tb]{0.6\textwidth}
            \centering
            \includegraphics[width=\textwidth]{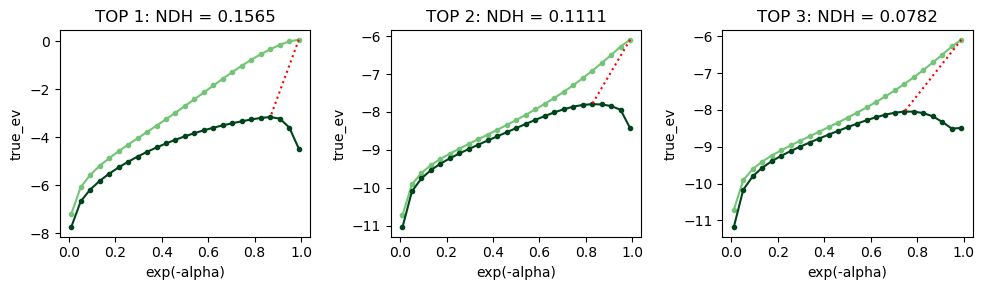}
            \caption{Top 3 curves according to NDH metric.}
            \label{fig:topmetric-ndh}
    \end{subfigure}
    \hfill
    \begin{subfigure}[tb]{0.6\textwidth}
            \centering
            \includegraphics[width=\textwidth]{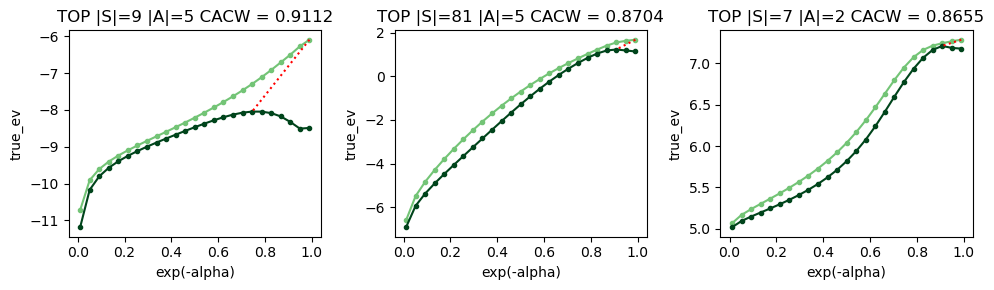}
            \caption{Top 3 curves according to CACW metric.}
            \label{fig:topmetric-cacw}
    \end{subfigure}
    \hfill
    \begin{subfigure}[tb]{0.6\textwidth}
            \centering
            \includegraphics[width=\textwidth]{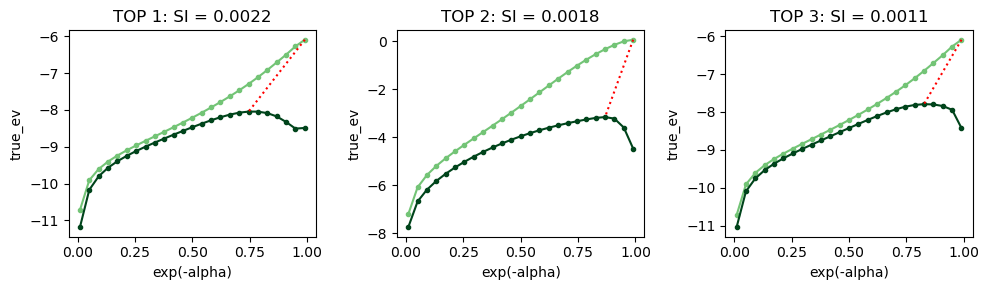}
            \caption{Top 3 curves according to SI metric.}
            \label{fig:topmetric-si}
    \end{subfigure}
    \hfill
    \begin{subfigure}[tb]{0.6\textwidth}
            \centering
            \includegraphics[width=\textwidth]{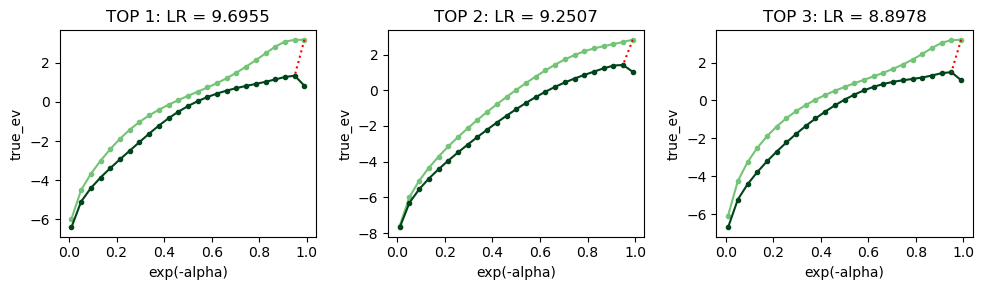}
            \caption{Top 3 curves according to LR metric.}
            \label{fig:topmetric-lr}
    \end{subfigure}
    \hfill
    \begin{subfigure}[tb]{0.6\textwidth}
            \centering
            \includegraphics[width=\textwidth]{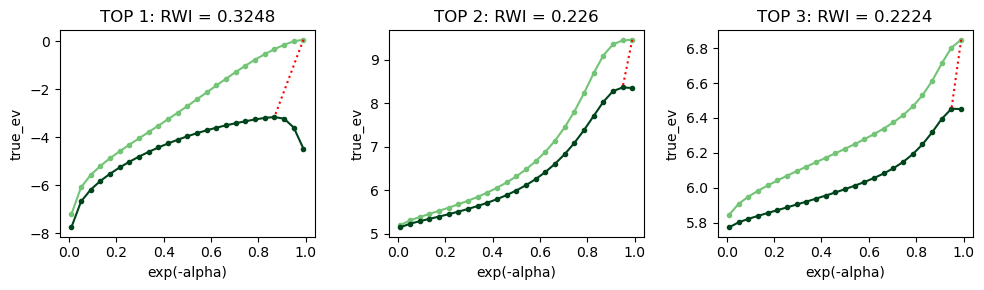}
            \caption{Top 3 curves according to RWI metric.}
            \label{fig:topmetric-rwi}
    \end{subfigure}
    \caption{Examples of training curves that obtain high Goodharting scores according to each metric.\label{fig:highest-scores}}
\end{figure}

\section{Experimental evaluation of the Early Stopping algorithm }\label{appendix:evaluation}

To sanity-check the experiments, we present an additional graph of the relationship between NDH and early stopping algorithm reward loss in~\Cref{fig:evaluation-ndh-lostreward}, and the full numerical data for~\Cref{fig:evaluation-envs}. We also show example runs of the experiment in all environments in~\Cref{fig:evaluation-all-one}.

\begin{figure}[htbp]
    \centering
    \begin{subfigure}[b]{0.55\textwidth}
        \centering
        \includegraphics[width=\textwidth]{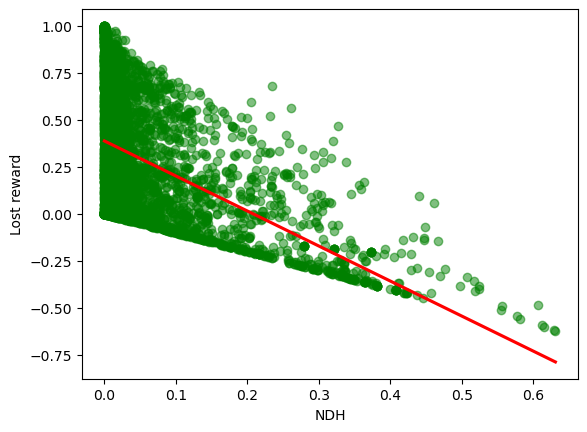}
    \end{subfigure}
    \caption{The relationship between the amount of Goodharting, measured by NDH, and the amount of reward that is lost due to pessimistic stopping. As Goodharting increases (measured by an increase in NDH), the potential for \emph{gaining} reward by early stopping increases (fitted linear regression shown as the red line. $y = -1.863x  + 0.388$, 95\% CI for slope: $[-1.951, -1.775]$, 95\% CI for intercept: $[0.380, 0.396]$, $R^2 = 0.23$, $p < 1e-10$). Only the points with NDH > 0 are shown in the plot.}
    \label{fig:evaluation-ndh-lostreward}
\end{figure}

\begin{table}[htbp]
\centering
\label{tab:results}
\begin{tabular}{llrrrrrrrr}
\toprule
Environment & &   count &   mean &    std &    min &    25\% &    50\% &    75\% &     max \\
\midrule
Cliff & BR &  2600.0 &  43.82 &  32.89 &  -4.09 &  12.22 &  42.34 &  71.42 &  100.00 \\
        & MCE &  2600.0 &  44.49 &  32.88 & -14.42 &  12.89 &  44.38 &  71.84 &  100.00 \\
Gridworld & BR &  2600.0 &  30.95 &  30.25 & -19.69 &   3.23 &  21.74 &  53.54 &  100.00 \\
        & MCE &  2600.0 &  28.83 &  28.23 & -20.57 &   3.70 &  21.06 &  46.45 &   99.99 \\
Path & BR &  2600.0 &  40.52 &  34.25 & -60.02 &   7.12 &  37.17 &  67.93 &  100.00 \\
        & MCE &  2600.0 &  41.17 &  35.01 & -62.37 &   7.64 &  36.09 &  70.00 &  100.00 \\
RandomMDP & BR &  5320.0 &  10.09 &  19.47 & -42.17 &   0.00 &   0.10 &  14.32 &   99.84 \\
        & MCE &  5320.0 &  10.36 &  19.67 & -42.96 &   0.00 &   0.13 &  14.94 &   99.84 \\
TreeMDP & BR &  1920.0 &  21.16 &  26.83 & -44.59 &   0.11 &   9.14 &  38.92 &  100.00 \\
        & MCE &  1920.0 &  20.11 &  25.95 & -48.52 &   0.08 &   9.56 &  35.61 &  100.00 \\
\bottomrule
\end{tabular}

\caption{A full breakdown of the true reward lost due to early stopping, with respect to the type of environment and training method used. See~\Cref{sec:quantifying_goodharting} for the descriptions of environments and reward sampling methods. 320 missing datapoints are cases where numerical instability in our early stopping algorithm implementation resulted in NaN values.}
\end{table}

\begin{figure}[htbp]
    \centering
    \begin{subfigure}[b]{0.95\textwidth}
        \centering
        \includegraphics[width=\textwidth]{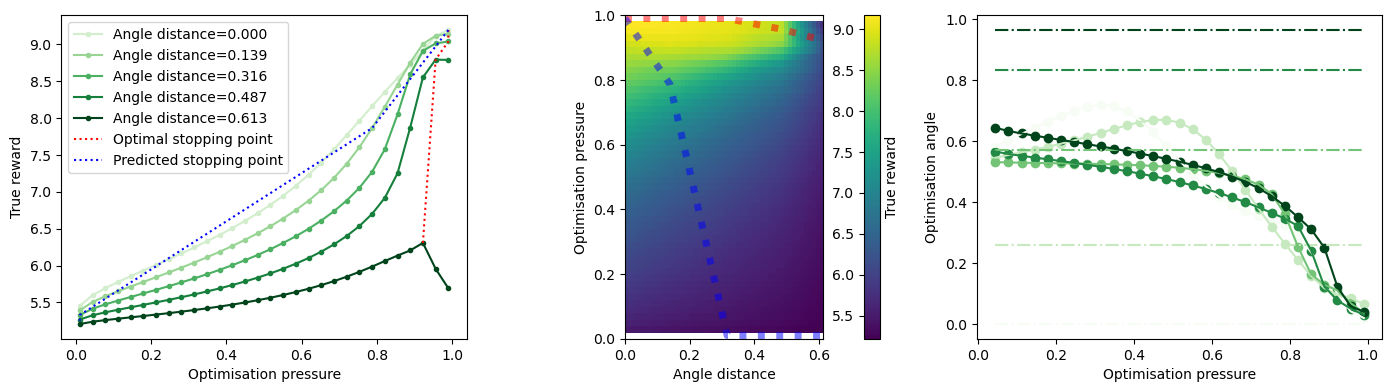}
        \caption{\emph{Gridworld} of size 4x4.}
    \end{subfigure}
    \vspace{0.8cm}
    \begin{subfigure}[b]{0.95\textwidth}
        \centering
        \includegraphics[width=\textwidth]{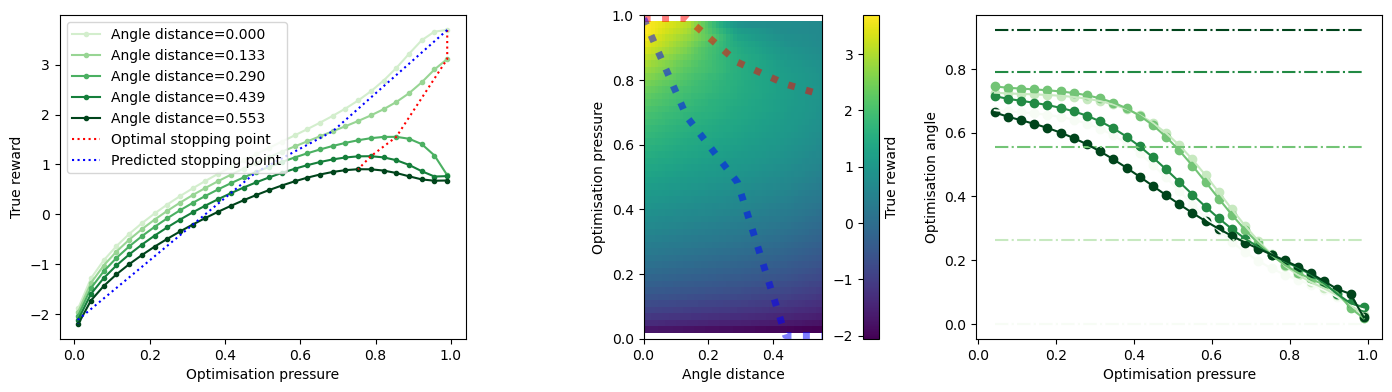}
        \caption{\emph{Path} environment of size 4x4.}
    \end{subfigure}
    \vspace{0.8cm}
    \begin{subfigure}[b]{0.95\textwidth}
        \centering
        \includegraphics[width=\textwidth]{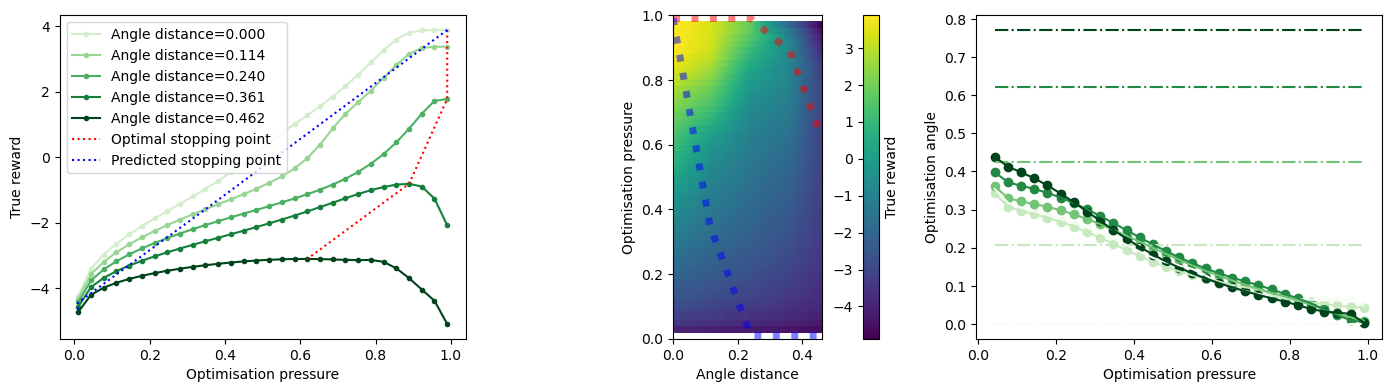}
        \caption{\emph{Cliff} environment of size 4x4, with a probability of slipping=0.5.}
    \end{subfigure}
    \caption{(Cont. below)}
\end{figure}

\begin{figure}[htbp]\ContinuedFloat
    \centering
    \begin{subfigure}[b]{0.95\textwidth}
        \centering
        \includegraphics[width=\textwidth]{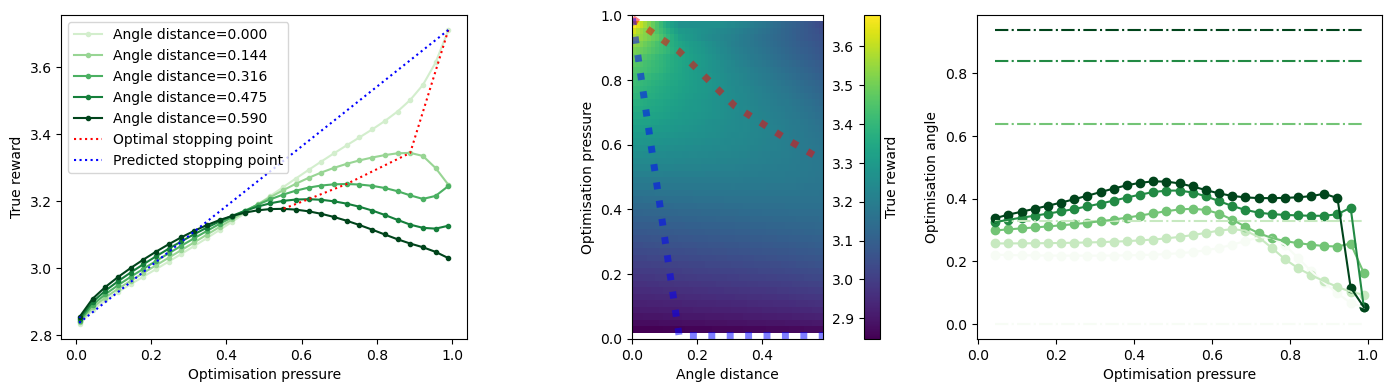}
        \caption{\emph{TreeMDP} of depth=3 and width=2, where terminal states are the 1st and 2nd leaves.}
    \end{subfigure}
    \vspace{0.8cm}
    \begin{subfigure}[b]{0.95\textwidth}
        \centering
        \includegraphics[width=\textwidth]{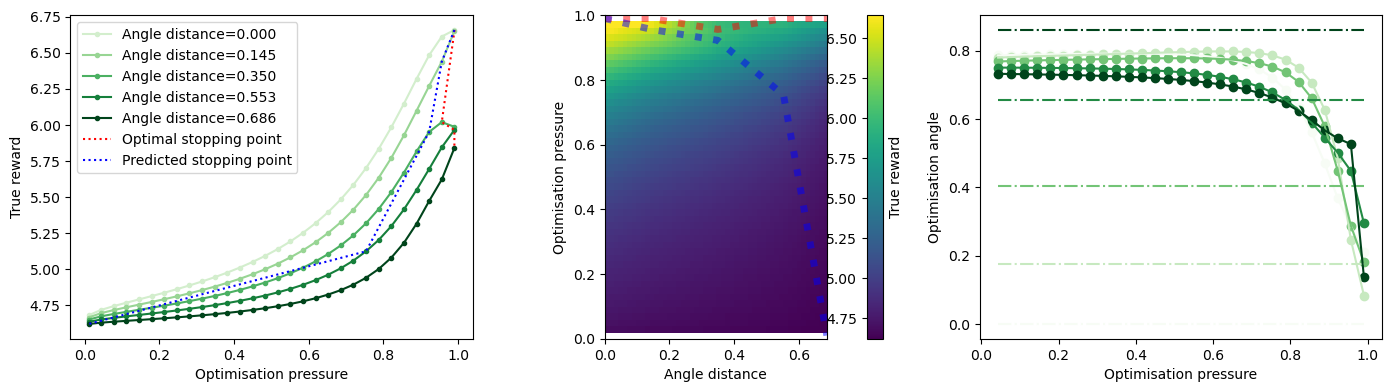}
        \caption{\emph{RandomMPD} of size=16, with 2 terminal states, and 3 actions.}
    \end{subfigure}
    \vspace{0.8cm}
    \caption{Example runs of the Early Stopping algorithm on different kinds of environments. The left column shows the true reward obtained by training a policy on different proxy rewards, under increasing optimisation pressures.
    The middle column depicts the same plot under a different spatial projection, which makes it easier to see how much the optimal stopping point differs from the pessimistic one recommended by the Early Stopping algorithm.
    The right column shows how the optimisation angle (cosine similarity) changes over increasing optimisation pressure for each proxy reward (for a detailed explanation of this type of plot, see~\Cref{appendix:three-d-example}).\label{fig:evaluation-all-one}}
\end{figure}

\section{A Simple Example of Goodharting}
\label{appendix:simple_goodharting_example}

In ~\Cref{sec:explain_goodhart}, we motivated our explanation of Goodhart's law using a simple MDP $\mathcal{M}_{2, 2}$, with 2 states and 2 actions, which is depicted in ~\Cref{fig:mdp_graph}. We assumed $\gamma=0.9$,  and uniform initial state distribution $\m$.

\begin{figure}[h]
    \centering
        \begin{tikzpicture}[shorten >=1pt, node distance=3cm, on grid, auto]
            % States
            \node[state] (S0) {$S_0$};
            \node[state, right=of S0] (S1) {$S_1$};

            % Edges
            % For state 0
            \path[->, orange] (S0) edge[loop left, looseness=8] node[left] {$\frac{9}{10}$} (S0); % a=0, s=0->s=0
            \path[->, orange] (S0) edge[bend left, pos=0.5] node[above] {$\frac{1}{10}$} (S1); % a=0, s=0->s=1
            \path[->, purple, dotted] (S0) edge[loop right, looseness=8] node[right] {$\frac{1}{10}$} (S0); % a=1, s=0->s=0
            \path[->, purple, dotted] (S0) edge[bend right=120, pos=0.5] node[below] {$\frac{9}{10}$} (S1); % a=1, s=0->s=1

            % For state 1
            \path[->, orange] (S1) edge[bend left, pos=0.5] node[below] {$\frac{5}{10}$} (S0); % a=0, s=1->s=0
            \path[->, orange] (S1) edge[loop left, looseness=8] node[left] {$\frac{5}{10}$} (S1); % a=0, s=1->s=1
            \path[->, purple, dotted] (S1) edge[bend right=120, pos=0.5] node[above] {$\frac{8}{10}$} (S0); % a=1, s=1->s=0
            \path[->, purple, dotted] (S1) edge[loop right, looseness=8] node[right] {$\frac{2}{10}$} (S1); % a=1, s=1->s=1
    \end{tikzpicture}
    \caption{MDP $\mathcal{M}_{2, 2}$ with 2 states and 2 actions. Edges corresponding to the action $a_0$ are orange and solid, and edges corresponding to $a_1$ are purple and dotted.}
    \label{fig:mdp_graph}
\end{figure}
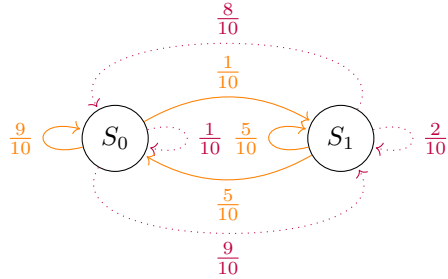
We have sampled three rewards $\R_0, \R_1, \R_2: \SxA \to \RR$, implicitly assuming that $\R(s, a, s') = \R(s, a, s'')$ for all $s', s'' \in \S$.

\begin{figure}[h]
    \centering
    \begin{subfigure}[t]{0.25\textwidth}
    \centering
    \begin{tabular}{|c|>{\color{orange}}c|>{\color{purple}}c|}
    \hline
    $\R_0$ & $a_0$ & $a_1$ \\
    \hline
    $S_0$ & 0.170 & 0.228 \\
    \hline
    $S_1$ & 0.538 & 0.064 \\
    \hline
    \end{tabular}
    \caption{Reward 0}
    \end{subfigure}
    \hspace{0.1\textwidth}
    \begin{subfigure}[t]{0.3\textwidth}
    \centering
    \begin{tabular}{|c|>{\color{orange}}c|>{\color{purple}}c|}
    \hline
    $\R_1$ & $a_0$ & $a_1$ \\
    \hline
    $S_0$ & 0.248 & 0.196 \\
    \hline
    $S_1$ & 0.467 & 0.089 \\
    \hline
    \end{tabular}
    \caption{Reward 1}
    \end{subfigure}
    \hspace{0.1\textwidth}
    \begin{subfigure}[t]{0.3\textwidth}
    \centering
    \begin{tabular}{|c|>{\color{orange}}c|>{\color{purple}}c|}
    \hline
    $\R_2$ & $a_0$ & $a_1$ \\
    \hline
    $S_0$ & 0.325 & 0.165 \\
    \hline
    $S_1$ & 0.396 & 0.114 \\
    \hline
    \end{tabular}
    \caption{Reward 2}
    \end{subfigure}
\caption{Reward tables for $\R_0, \R_1,\R_2$}
\end{figure}

We used 30 equidistant optimisation pressures in $[0.01, 0.99]$ for numerical stability. The hyperparameter $\theta$ for the value iteration algorithm (used in the implementation of MCE) was set to $0.0001$.
\section{Iterative Improvement}\label{appendix:iterative_improvement}
One potential application of \Cref{thm:optimal_stopping} is that when we have a computationally expensive method of evaluating the true reward $\TrueR$, we can design practical training regimes that provably avoid Goodharting. 
Typically training regimes for such reward functions involve iteratively training on a low-cost proxy reward function and fine-tuning on true reward using human feedback~\citep{paulus2017deep}. We can use true reward function to approximate $\Angle{\TrueR}{\ProxR}$, and then optimal stopping gives the optimal amount of time before a ``branching point'' where possible reward training curves diverge (thus, creating Goodharting). 

Specifically, let us assume that we have access to an oracle $\textrm{ORACLE}_{\R^\star}(\R_i, \theta_i)$ which produces increasingly accurate approximations of some true reward $\R^\star$: when called with a proxy reward $\R_i$ and a bound $\theta_i > \Angle{\R^\star}{\R_{i}}$, it returns $\R_{i+1}$ and $\theta_{i+1}$ such that $\theta_{i+1} > \Angle{\R^\star}{\R_{i + 1}}$ and $\lim_{i\to \infty}\theta_i = 0$. Then \cref{alg:iterative-improvement} is an iterative feedback algorithm that avoids Goodharting and terminates at the optimal policy for $\R^\star$. 

\begin{algorithm}
  \caption{Iterative improvement algorithm}\label{alg:iterative-improvement}
  \begin{algorithmic}[1]
   \Procedure{IterativeImprovement}{$\S, \A, \T$}
      \State $\VecR \sim Unif[\RR^{\SxA}]$
      \State $\policy \gets Unif[\RR^{\SxA}]$
      \State $\vec{\occmeaselem}_{-1} = \vec{\occmeaselem}_0 \gets \occmeas{\policy}$
      \State $\theta \gets \frac{\pi}{2}$
      \State $\TangentVec_0 \gets \amax_{\TangentVec \in \ConeTangent(\vec{\occmeaselem}_0)} \TangentVec \cdot \VecR$
      \While{$ \vec{t}_i \neq \vec{0}$}

        \While{$  \vec{\occmeaselem_i} \cdot \TangentVec_i \leq \AngBound$}
          \State $\R, \CosBound \gets$ ORACLE$_{\R^\star}(\R, \CosBound$)
          \State $\VecR \gets \R$
        \EndWhile

        \State $\lambda \gets \max \{\lambda:\vec{\occmeaselem}_i+\lambda \TangentVec_i \in \Polytope\}$
        \State $\vec{\occmeaselem}_{i+1} \gets \vec{\occmeaselem}_i + \lambda \TangentVec_i$
        \State $\TangentVec_{i+1} \gets \amax_{\TangentVec \in \ConeTangent(\vec{\occmeaselem}_{i+1})} \TangentVec \cdot \VecR$
        \State $i \gets i + 1$
      \EndWhile
      \State \Return $\occmeas{\vec{\occmeaselem}_i}^{-1}$
   \EndProcedure
  \end{algorithmic}
\end{algorithm}

\begin{proposition}
\Cref{alg:iterative-improvement} is a valid optimisation procedure, that is, it terminates at the policy $\policy^\star$ which is optimal for the true reward $\R^\star$. 
\end{proposition}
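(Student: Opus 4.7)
The plan is to establish two invariants and then combine them with a termination argument to show that \Cref{alg:iterative-improvement} never Goodharts on $\R^\star$ and halts at an $\R^\star$-optimal vertex of $\Polytope$. Throughout, the oracle guarantees the standing invariant $\theta_i > \Angle{\R^\star}{\R_i}$, so $\R^\star$ always lies in the angular cone used to apply \Cref{thm:optimal_stopping}.

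First (no-Goodharting invariant). At outer iteration $i$, when the inner loop exits, the intended early-stopping criterion (read as in \Cref{alg:early-stopping-code}) fails, giving $\R_i \cdot \TangentVec_i > \sin(\theta_i)\|\QProj_\T \R_i\|$. Since $\TangentVec_i$ is a unit vector and $\vec{\occmeaselem}_{i+1}-\vec{\occmeaselem}_i = \lambda\TangentVec_i$, this is exactly the negation of the inequality in \Cref{thm:optimal_stopping} applied to $\policy_A = \policy_i$, $\policy_B = \policy_{i+1}$, proxy $\R_i$, and bound $\theta_i$. Hence no reward $\R$ in the cone $\{\R:\Angle{\R}{\R_i}\leq \theta_i\}$ can satisfy $\J_\R(\policy_i) > \J_\R(\policy_{i+1})$; in particular, $\J_{\R^\star}(\policy_{i+1}) \geq \J_{\R^\star}(\policy_i)$. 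Induction on $i$ shows that $\J_{\R^\star}(\policy_i)$ is monotone non-decreasing.

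Second (termination and optimality). Each outer step moves from $\vec{\occmeaselem}_i$ along $\TangentVec_i \in \ConeTangent(\vec{\occmeaselem}_i)$ by the maximal $\lambda$ keeping the iterate in $\Polytope$, which, by standard polytope geometry, lands on a new vertex of $\Polytope$ (or on a higher-dimensional face from which the next iterate reaches one). Combined with strict monotonicity of $\J_{\R^\star}$ across distinct vertices, no vertex is revisited, so by \Cref{prop:convex_hull} the outer loop terminates in at most $|\DetPolicies|$ iterations. Termination occurs when $\TangentVec_i = \vec{0}$, meaning no element of $\ConeTangent(\vec{\occmeaselem}_i)$ has positive dot product with $\R_i$; equivalently, $\vec{\occmeaselem}_i$ maximises $\R_i \cdot \vec{\occmeaselem}$ over $\Polytope$. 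Since $\theta_i$ has by then been driven arbitrarily small, the policy orderings induced by $\R_i$ and $\R^\star$ coincide on the finitely many edges of $\Polytope$ incident to $\vec{\occmeaselem}_i$ (by the proposition characterising $\Angle{\TrueR}{\ProxR}=0$ together with a continuity argument over a finite combinatorial set). Consequently, $\vec{\occmeaselem}_i$ also maximises $\R^\star$, and the returned $\policy^\star$ is optimal for $\R^\star$. For the inner loop, since $\theta_j \to 0$ shrinks the threshold $\sin(\theta_j)\|\QProj_\T \R_j\|$ to zero while $\R_j$ converges in direction to $\R^\star$, whenever some direction in $\ConeTangent(\vec{\occmeaselem}_i)$ strictly improves $\R^\star$, the (eventually recomputed) $\TangentVec_i$ satisfies $\R_j \cdot \TangentVec_i > 0$ uniformly in $j$, so the inner condition must fail.

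The main obstacle is the subtle interaction between oracle refinement and the tangent direction: in the literal pseudocode, $\TangentVec_i$ is computed outside the inner loop and held fixed while $\R$ is refined, which leaves open a degenerate case where $\TangentVec_i$ is improving for the initial $\R_i$ but orthogonal to $\R^\star$, so that the inner loop's stopping threshold never drops below $\R_j \cdot \TangentVec_i \to \R^\star \cdot \TangentVec_i = 0$. The cleanest remedy is to recompute $\TangentVec_i$ inside the inner loop after each oracle call; one then argues that only finitely many directions are ever extreme in $\ConeTangent(\vec{\occmeaselem}_i)$, so for $\theta$ small enough the maximiser of $\vec{t}\cdot \R_j$ stabilises to the maximiser of $\vec{t}\cdot \R^\star$, and the non-Goodharting inequality holds strictly. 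Alternatively, a generic-position argument on the oracle's outputs rules out exact orthogonality. Either way, the crux is the finite combinatorial structure of $\Polytope$: the oracle's convergence $\theta_j\to 0$ eventually resolves all combinatorial decisions in favour of $\R^\star$-steepest ascent, after which the argument above completes.
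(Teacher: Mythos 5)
Your proof takes essentially the same route as the paper's: monotonicity of $\J_{\R^\star}$ across iterates via \Cref{thm:optimal_stopping}, termination of the outer loop from the finite face structure of $\Omega$, and finiteness of oracle calls from the observation that only finitely many projected steepest-ascent directions can ever arise, so a uniform angular threshold suffices once $\theta_i$ is small enough. If anything you are more explicit than the paper, whose own proof silently assumes $\TangentVec_i$ is kept consistent with the refined reward inside the inner loop and simply asserts that $\vec{t}_i = \vec{0}$ can only occur at the $\R^\star$-optimum; the degenerate cases you flag (the fixed tangent during refinement, and halting while $\theta_i$ is still large) are weaknesses of the paper's argument as well, not of yours.
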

\begin{proof}
By~\Cref{thm:optimal_stopping}, the inner loop of the algorithm maintains that $\J_{\TrueR}(\policy_{i+1}) \geq \J_{\TrueR}(\policy_i)$. If the algorithm terminates, then it must be that $\vec{t_i} = 0$, and the only point that this can happen is in a point $\occmeaselem^{\pi^\star}$. 

Since steepest ascent terminates, showing \cref{alg:iterative-improvement} terminates reduces to showing we only make finitely many calls to $\text{ORACLE}_{\R^\star}$. It can be shown that for any $\TangentVec_i$ produced by steepest ascent on $\R$, $\TangentVec_i = \proj_{\plane}(\R)$ for some linear subspace $\plane$ on the boundary of $\Polytope$ formed by a subset of boundary conditions. Since there are finitely many such $\plane$, there is some $\epsilon$ so that for all $\R, \R^\star$ with $\Angle{\R}{\R^\star} < \epsilon$, $\Angle{\proj_{\plane}(\R)}{\proj_{\plane}(\R^\star)} < \frac{\pi}{2}$ for all $\plane$.

Because we have assumed that $\lim_{i \to \infty} \theta_i = 0$, $\lim_{i \to \infty} \Angle{\R_i}{\R^\star} = 0$ and $\text{ORACLE}_{\R^\star}$ will only be called until $\Angle{\R_i}{\R^\star} = 0$. Then the number of calls is finite and so the algorithm terminates. \end{proof}

We expect this algorithm forms theoretical basis for a  training regime that avoids Goodharting and can be used in practice.

\section{Further empirical investigation of Goodharting}\label{appendix: results}

\subsection{Additional plots for the Goodharting prevalence experiment}

\begin{figure}[htbp]
    \centering
    \begin{subfigure}[tb]{0.45\textwidth}
        \centering
        \includegraphics[width=0.9\textwidth]{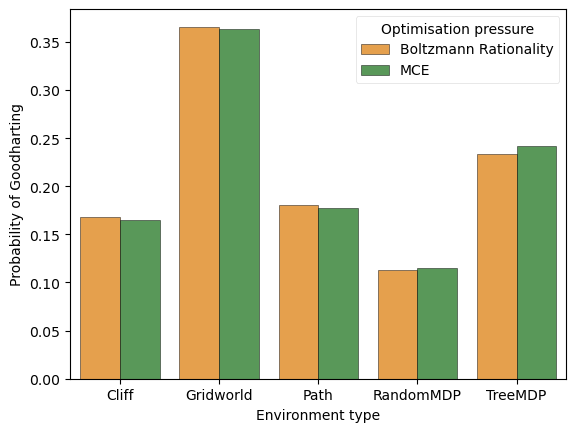}
        \caption{The relationship between the type of environment and the probability of Goodharting.\label{fig:type-prob-goodharting}}
    \end{subfigure}
    \hfill
    \begin{subfigure}[tb]{0.45\textwidth}
        \centering
        \includegraphics[width=0.9\textwidth]{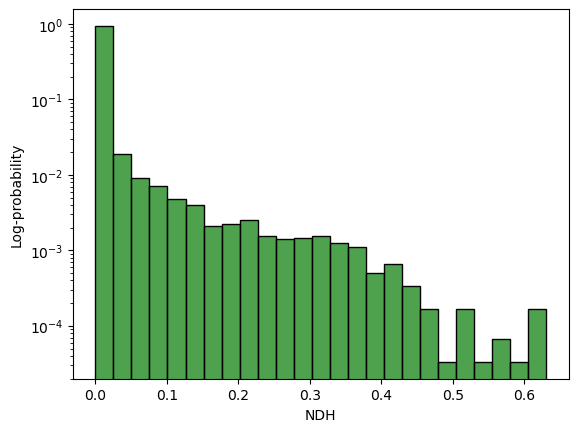}
        \caption{Log-scale histogram of the distribution of \textbf{NDH metric} in the dataset.\label{fig:distribution-of-ndh}}
    \end{subfigure}
    \caption{Summary of the experiment described in~\Cref{sec:quantifying_goodharting}. (a) The choice of operationalisation of optimisation pressure does not seem to change results in any significant way. (b) NDH metric follows roughly exponential distribution when restricted to cases when NDH > 0.}
\end{figure}

\subsection{Examining the impact of key environment parameters on Goodharting}
To further understand the conditions that produce Goodharting, we investigate the correlations between key parameters of the MDP, such as the number of states or temporal discount factor $\gamma$, and NDH. Doing it directly over the dataset described in~\Cref{sec:quantifying_goodharting} does not yield good results, as there is not enough data points for each dimension, and there are confounding cross-correlations between different parameters changing at the same time.

To address those issues, we opted to replace the grid-search method that produced the datasets for~\Cref{sec:quantifying_goodharting} and ~\Cref{sec:evaluation}. We have first picked a base distribution over representative environments, and then created a separate dataset for each of the key parameters, where only that parameter is varied.\footnote{Since this is impossible when comparing between environment types, we use the original dataset from~\Cref{sec:quantifying_goodharting} in~\Cref{fig:ndh-type}.}

Specifically, the base distributions is given over \emph{RandomMDP} with $|S|$ sampled uniformly between $8$ and $64$, $|A|$ sampled uniformly between $2$ and $16$, and the number of terminal states sampled uniformly between $1$ and $4$, where $\gamma = 0.9$, $\sigma = 1$, and where we use 25 $\lambda$'s spaced equally (on a log scale) between $0.01$ and $0.99$. Then, in each of the runs we modify this base distribution of environments along a single axis, such as sampling $\gamma$ uniformly across $(0, 1)$ shown in~\Cref{fig:ndh-gamma}.

\paragraph{Key findings (positive):} The number of actions $|A|$ seems to have a suprisingly significant impact on NDH (\Cref{fig:ndh-a}). The further away is the proxy reward function from the true reward, the more Goodharting is observed (\Cref{fig:ndh-distance}), which corroborates the explanation given at the end of~\Cref{sec:explain_goodhart}. We note that in many examples (for example in~\Cref{fig:cherrypicked} or in any of graphs in~\Cref{fig:evaluation-all-one}) the closer the proxy reward is, the later the Goodhart "hump" appeared - this positive correlation is presented in~\Cref{fig:hump-distance}. We also find that Goodharting is less significant in the case of myopic agents, that is, there is a positive correlation between $\gamma$ and NDH (\Cref{fig:ndh-gamma}). Type of environment seems to have impact on observed NDH, but note that this might be a spurious correlation.

\paragraph{Key findings (negative):} The number of states $|S|$ does not seem to significantly impact the amount of Goodharting, as measured by NDH (\Cref{fig:ndh-s-small,fig:ndh-s-large}). This suggests that having proper methods of addressing Goodhart's law is important as we scale to more realistic scenarios, and also partially explains the existence of the wide variety of literature on reward gaming (see~\Cref{sec:related-work}). The determinism of the environment (as measured by the Shannon entropy of the transition matrix $\tau$) does not seem to play any role.

\begin{figure}[htbp]
    \centering
    \begin{subfigure}[tb]{0.4\textwidth}
        \centering
        \includegraphics[width=0.9\textwidth]{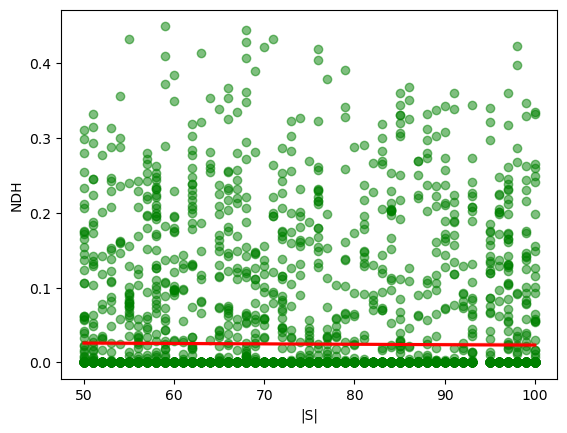}
    \end{subfigure}
    \hfill
    \begin{subfigure}[tb]{0.4\textwidth}
        \centering
        \includegraphics[width=0.9\textwidth]{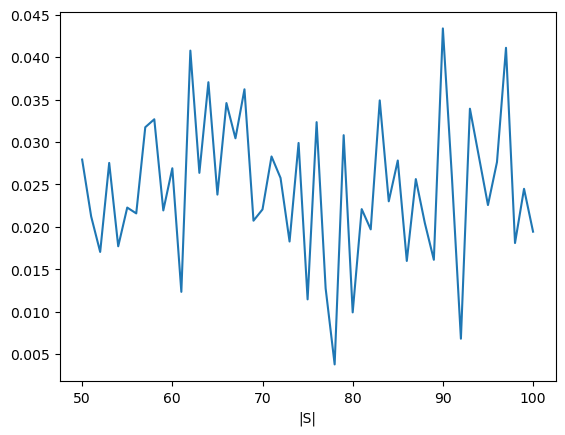}
    \end{subfigure}
    \caption{Small negative correlation (not statistically sifnificant) between |S| and NDH: $y = -5e-05x + 0.02852$, 95\% CI for slope: $[-0.00018, 7.23e-05]$, 95\% CI for intercept: $[0.01892, 0.03813]$, Pearson's $r = -0.0119$, $R^2 = 0.0$, $p = 0.4058$. On the left: scatter plot, with the least-squares regression fitted line shown in red. On the right: mean NDH per |S|, smoothed with rolling average, window size=10. \textbf{Below, we have repeated the experiment for larger |S| to investigate asymptotic behaviour.} N=1000.\label{fig:ndh-s-small}}
\end{figure}

\begin{figure}[htbp]
    \centering
    \begin{subfigure}[tb]{0.4\textwidth}
        \centering
        \includegraphics[width=0.9\textwidth]{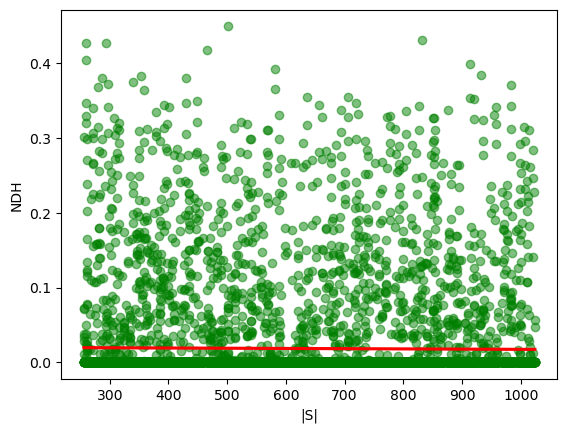}
    \end{subfigure}
    \hfill
    \begin{subfigure}[tb]{0.4\textwidth}
        \centering
        \includegraphics[width=0.9\textwidth]{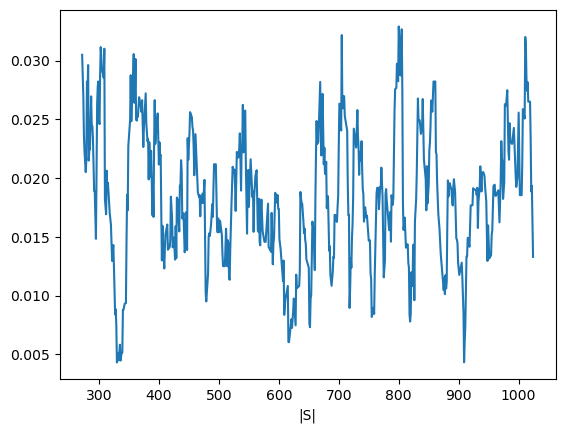}
    \end{subfigure}
    \caption{Small negative correlation (not statistically significant) between the number of states in the environment and NDH: $y = -0.0x  + 0.02047$, 95\% CI for slope: $[-8.2529e-06, 2.2416e-06]$, 95\% CI for intercept: $[0.017, 0.0239]$, Pearson's $r = -0.0116$, $R^2 = 0.0$, $p = 0.261$. On the left: scatter plot, with the least-squares regression fitted line shown in red. On the right: mean NDH per |S|, smoothed with rolling average, window size=10. N=1000.\label{fig:ndh-s-large}}
\end{figure}

\begin{figure}[htbp]
    \centering
    \begin{subfigure}[tb]{0.4\textwidth}
        \centering
        \includegraphics[width=0.9\textwidth]{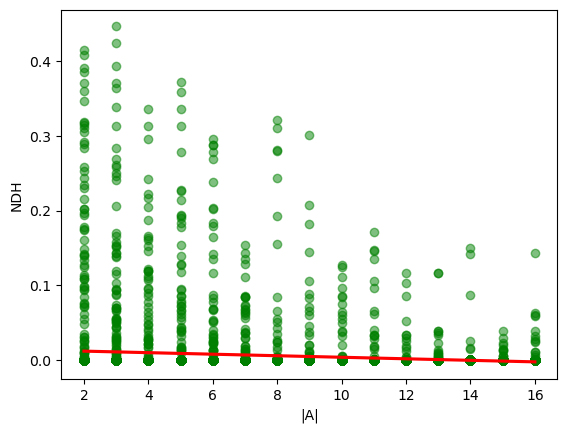}
    \end{subfigure}
    \hfill
    \begin{subfigure}[tb]{0.4\textwidth}
        \centering
        \includegraphics[width=0.9\textwidth]{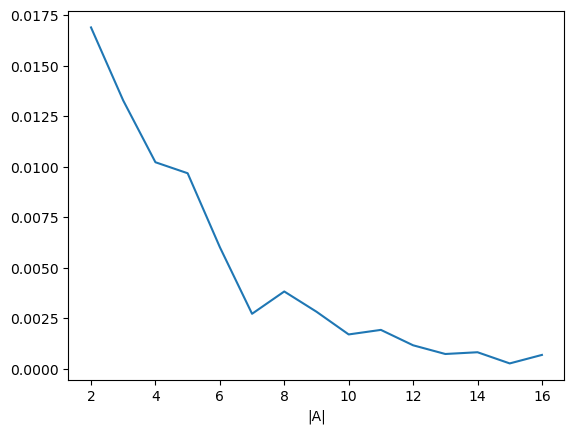}
    \end{subfigure}
    \caption{Correlation between |A| and NDH: $y = -0.00011x + 0.00871$, 95\% CI for slope: $[-0.00015, -7.28e-05]$, 95\% CI for intercept: $[0.00723, 0.01019]$, Pearson's $r = -0.0604$, $R^2 = 0.0$, $p < 1e-08$. On the left: scatter plot, with the least-squares regression fitted line shown in red. On the right: mean NDH per |A|. N=1000.\label{fig:ndh-a}}
\end{figure}

\begin{figure}[htbp]
    \centering
    \begin{subfigure}[tb]{0.45\textwidth}
        \centering
        \includegraphics[width=0.9\textwidth]{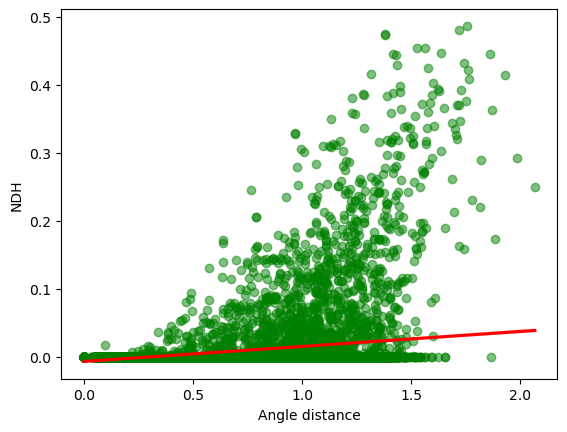}
        \caption{Correlation between the angle distance to the proxy reward and \textbf{NDH metric}: $y = 0.02389x - 0.00776$, 95\% CI for slope: $[0.02232, 0.02546]$, 95\% CI for intercept: $[-0.00883, -0.00670]$, Pearson's $r = 0.2895$, $R^2 = 0.08$, $p < 1.2853e-187$.\\\label{fig:ndh-distance}}
        \label{fig: dropvsdims1}
    \end{subfigure}
    \hfill
    \begin{subfigure}[tb]{0.45\textwidth}
        \centering
        \includegraphics[width=0.9\textwidth]{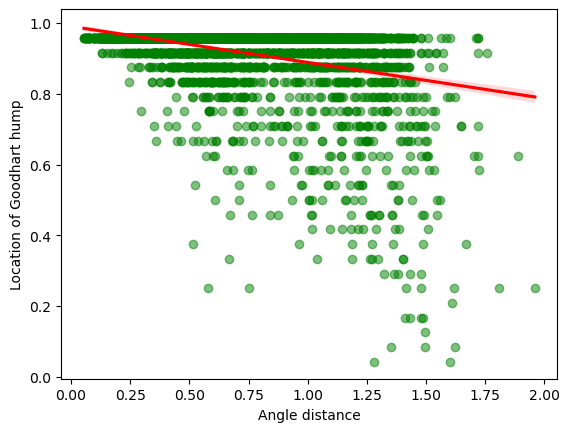}
        \caption{Correlation between the distance to the proxy reward, and the \textbf{location of the Goodhart's hump}: $y = -0.10169x + 0.99048$, 95\% CI for slope: $[-0.11005, -0.09334]$, 95\% CI for intercept: $[0.98360, 0.99736]$, Pearson's $r = -0.3422$, $R^2 = 0.12$, $p < 2.7400e-118$.\label{fig:hump-distance}}
    \end{subfigure}
    \hfill
    \begin{subfigure}[tb]{0.45\textwidth}
        \centering
        \includegraphics[width=0.9\textwidth]{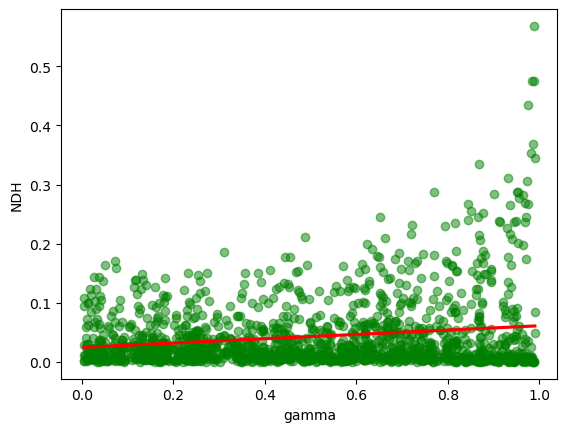}
        \caption{Correlation between $\gamma$ and the \textbf{NDH metric}: $y = 0.00696x + 0.00326$, 95\% CI for slope: $[0.00504, 0.00888]$, 95\% CI for intercept: $[0.00217, 0.00436]$, Pearson's $r = 0.07137$, $R^2 = 0.01$, $p < 1.2581e-12$. \label{fig:ndh-gamma}}
    \end{subfigure}
    \hfill
    \begin{subfigure}[tb]{0.45\textwidth}
        \centering
        \includegraphics[width=0.9\textwidth]{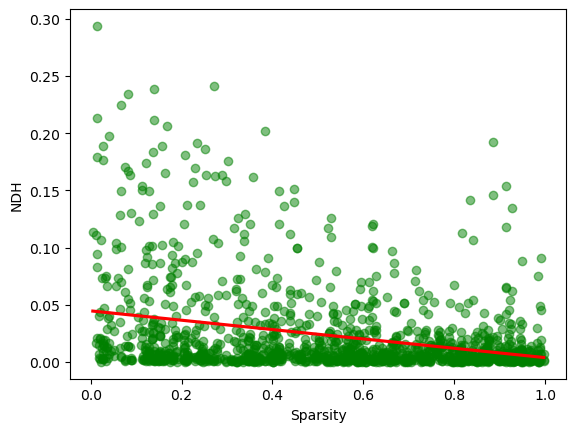}
        \caption{Correlation between the sparsity of the reward, and the \textbf{NDH metric}: $y = -0.00701x + 0.00937$, 95\% CI for slope: $[-0.00852, -0.00550]$, 95\% CI for intercept: $[0.00850, 0.01024]$, Pearson's $r = -0.09090$, $R^2 = 0.01$, $p < 9.5146e-20$.\label{fig:ndh-sparsity}}
    \end{subfigure}
    \hfill
    \begin{subfigure}[tb]{0.45\textwidth}
        \centering
        \includegraphics[width=0.9\textwidth]{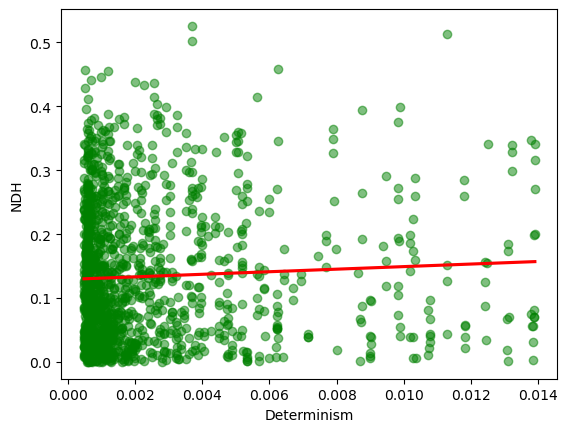}
        \caption{Correlation between the determinism of the environment, and the \textbf{NDH metric}: $y = 0.77221x + 0.01727$, 95\% CI for slope: $[0.31058, 1.23384]$, 95\% CI for intercept: $[0.01570, 0.01884]$, Pearson's $r = 0.03278$, $R^2 = 0.01$, $p = 0.0010$.\label{fig:ndh-determinism}}
    \end{subfigure}
    \hfill
    \begin{subfigure}[tb]{0.45\textwidth}
        \centering
        \includegraphics[width=0.9\textwidth]{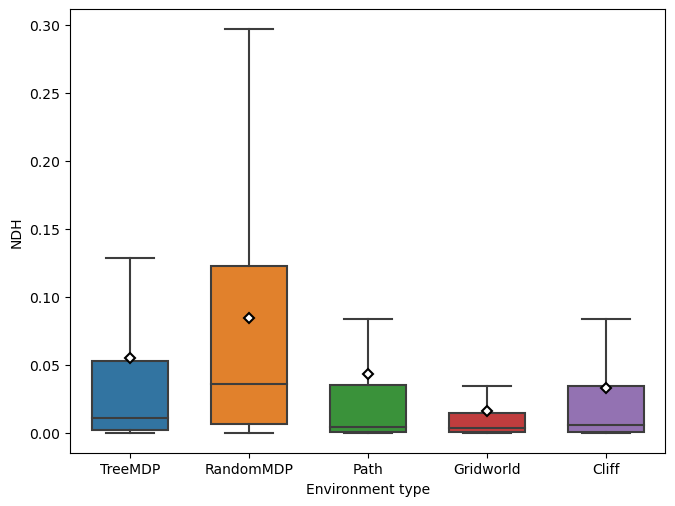}
        \label{fig: dropvsdims2}
        \caption{Distribution of NDH metric for different kinds of environments. Note that other parameters are \emph{not} kept constant across environments, which might introduce cross-correlations.\\ \\ \label{fig:ndh-type}}
    \end{subfigure}
    \caption{Correlation plots for different parameters of MDPs. N=1000 for all graphs above except for~\Cref{fig:ndh-type}, which uses the dataset from~\Cref{sec:quantifying_goodharting} where N=30400.}
    \label{fig:goodhart-trends-five}
\end{figure}

\section{Implementing the Experiments}\label{appendix: experimental details}

\subsection{Computing the Projection Matrix}

For reward $\R$, we want to find its projection $\QProj_\T\R$ onto the $|S|(|A| - 1)$-dimensional hyperplane $H = \text{span}(\Polytope)$ containing all valid policies. $H$ is defined by the linear equation $A\vec{x} = b$ corresponding to the constraints defined in $\cref{appendix:proofs}$, giving $\QProj_\T = I-A^t(AA^t)^{-1}A$ by standard linear algebra results. However, this is too computationally expensive to compute for environments with a high number of states. 

There is another potential method that we designed but did not implement. It can be shown that the subspace of vectors orthogonal to $H$ corresponds exactly to expected reward vectors generated by potential functions - that is, the set of vectors orthogonal to $H$ is exactly the vectors $$R(s, a) = \mathbb{E}_{s' \sim \tau(s, a)}[\gamma \phi(s')]-\phi(s)$$ for potential function $\phi:S \to \mathbb{R}$. Note this also gives that all vectors of shaped rewards have the same projection, so we aim to shape rewards to be orthogonal to all vectors described above.

To do this, we initialise two potential functions $\phi, \tilde{\phi}$ and consider the expected reward vectors of 
$$
R_{\parallel}(s, a) := R(s, a)+\mathbb{E}_{s' \sim \tau(s, a)}[\gamma \phi(s')]-\phi(s)$$ and $$R_{\perp}(s, a)=\mathbb{E}_{s' \sim \tau(s, a)}[\gamma \tilde{\phi}(s')]-\tilde{\phi}(s).
$$ 
We optimise $\phi$ to maximize the dot product between these vectors and $\tilde{\phi}$ to minimize it. $\phi$ converges so that $R_{\parallel}$ is orthogonal to all reward vectors $R_{\perp}$, and will thus be $\R$'s projection onto $H$.\oliver{can someone look over this entire section}

\subsection{Compute resources}

We performed our large-scale experiments on AWS. Overall, the process took about 100 hours of a \emph{c5a.16xlarge} instance with 64 cores and 128 GB RAM, as well as about 100 hours of \emph{t2.2xlarge} instance with 8 cores and 32 GB RAM.
\section{An additional example of the phase shift dynamics}
\label{appendix:three-d-example}

In the~\Cref{fig:sa-example-run}, ~\Cref{fig:2-2-mce-example} and~\Cref{appendix:simple_goodharting_example}, we have explored an example of 2-state, 2-action MDP. The image space being 2-dimensional makes the visualisation of the run easy, but the disadvantage is that we do not get to see multiple state transitions. Here, we show an example of a 3-state, 2-action MDP, which does exhibit multiple changes in the direction of the optimisation angle.

We use an an MDP $\mathcal{M}_{3, 2}$ defined by the following transition matrix $\T$:

\begin{figure}[h]
    \centering
    \begin{subfigure}{.3\textwidth}
        \centering
        \begin{tabular}{|c|>{\color{orange}}c|>{\color{purple}}c|}
            \hline
            & $a_0$ & $a_1$ \\ [0.5ex] 
            \hline
            $S_0$ & 0.9 & 0.1 \\ [0.5ex]
            \hline
            $S_1$ & 0.1 & 0.9 \\ [0.5ex] 
            \hline
            $S_2$ & 0.0 & 0.0 \\ [0.5ex]
            \hline
        \end{tabular}
        \caption{$Starting\ from\ S_0$}
    \end{subfigure}%
    \begin{subfigure}{.3\textwidth}
        \centering
        \begin{tabular}{|c|>{\color{orange}}c|>{\color{purple}}c|}
            \hline
            & $a_0$ & $a_1$ \\ [0.5ex] 
            \hline
            $S_0$ & 0.1 & 0.9 \\ 
            \hline
            $S_1$ & 0.9 & 0.1 \\ 
            \hline
            $S_2$ & 0.0 & 0.0 \\ 
            \hline
        \end{tabular}
        \caption{$Starting\ from\ S_1$}
    \end{subfigure}
    \begin{subfigure}{.3\textwidth}
        \centering
        \begin{tabular}{|c|>{\color{orange}}c|>{\color{purple}}c|}
            \hline
            & $a_0$ & $a_1$ \\ [0.5ex] 
            \hline
            $S_0$ & 0.0 & 0.0 \\ 
            \hline
            $S_1$ & 0.0 & 0.0 \\ 
            \hline
            $S_2$ & 1.0 & 1.0 \\ 
            \hline
        \end{tabular}
        \caption{$Starting\ from\ S_2$}
    \end{subfigure}
\end{figure}

with $N=5$ different proxy functions interpolated linearly between $\R_0$ and $\R_1$. We use 30 optimisation strengths spaced equally between 0.01 and 0.99.

\begin{figure}[h]
\centering

\begin{subfigure}[t]{0.45\textwidth}
\centering
\begin{tabular}{|c|>{\color{orange}}c|>{\color{purple}}c|}
\hline
 & $a_0$ & $a_1$ \\
\hline
$S_0$ & 0.290 & 0.020 \\
\hline
$S_1$ & 0.191 & 0.202 \\
\hline
$S_2$ & 0.263 & 0.034 \\
\hline
\end{tabular}
\caption{Reward $\R_0$}
\end{subfigure}
\hfill
\begin{subfigure}[t]{0.45\textwidth}
\centering
\begin{tabular}{|c|>{\color{orange}}c|>{\color{purple}}c|}
\hline
 & $a_0$ & $a_1$ \\
\hline
$S_0$ & 0.263 & 0.195 \\
\hline
$S_1$ & 0.110 & 0.090 \\
\hline
$S_2$ & 0.161 & 0.181 \\
\hline
\end{tabular}
\caption{Reward $\R_1$}
\end{subfigure}

\caption{Reward Tables}
\end{figure}

The rest of the hyperparameters are set as in the~\Cref{appendix:simple_goodharting_example}, with the difference that we are now using exactly steepest ascent with early stopping, as described in~\Cref{alg:early-stopping-code}, instead of MCE approximation to it.

\begin{figure}[htbp]
    \centering
    \begin{subfigure}[tb]{0.45\textwidth}
        \centering
        \includegraphics[width=0.8\textwidth]{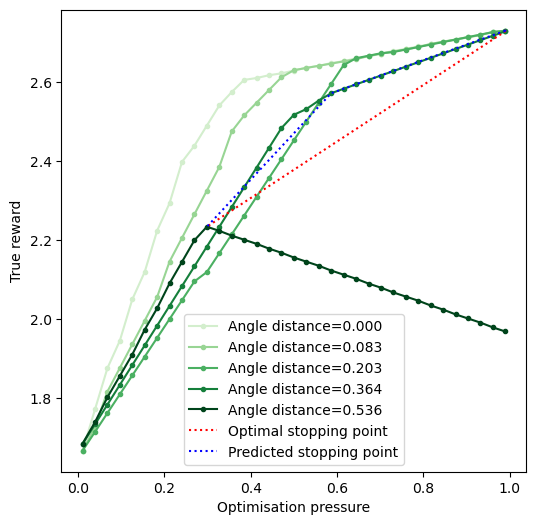}
        \caption{Goodharting behavior for $\mathcal{M}_{3, 2}$ over five reward functions. Observe that the training method is concave, in accordance with~\Cref{prop:concave_proof}. Compare to the theoretical explanation in~\Cref{appendix:extra_explanation}, in particular to the figures showing piece-wise linear plots of obtained reward over increasing optimisation pressure.}
        \label{subfig:three-d-spatial}
    \end{subfigure}
    \hfill
    \begin{subfigure}[tb]{0.45\textwidth}
        \centering
        \includegraphics[width=0.8\textwidth]{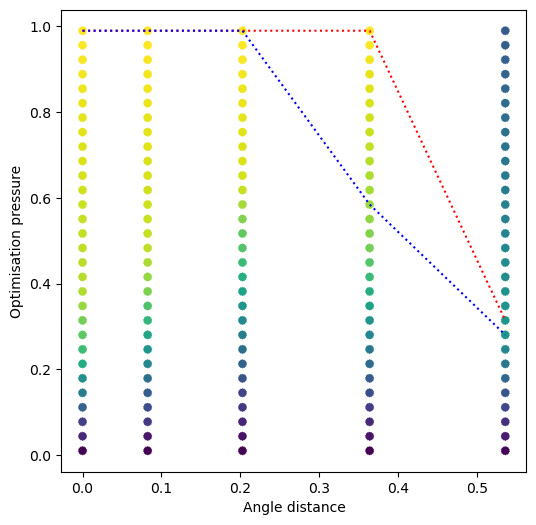}
        \caption{The same plot under a different spatial projection, which makes it easier to see how much the optimal stopping point differs from the pessimistic one recommended by the Early Stopping algorithm.\\ \\}
    \end{subfigure}
    \hfill
    \vspace{1cm}
    \begin{subfigure}[tb]{1\textwidth}
        \centering
        \includegraphics[width=0.8\textwidth]{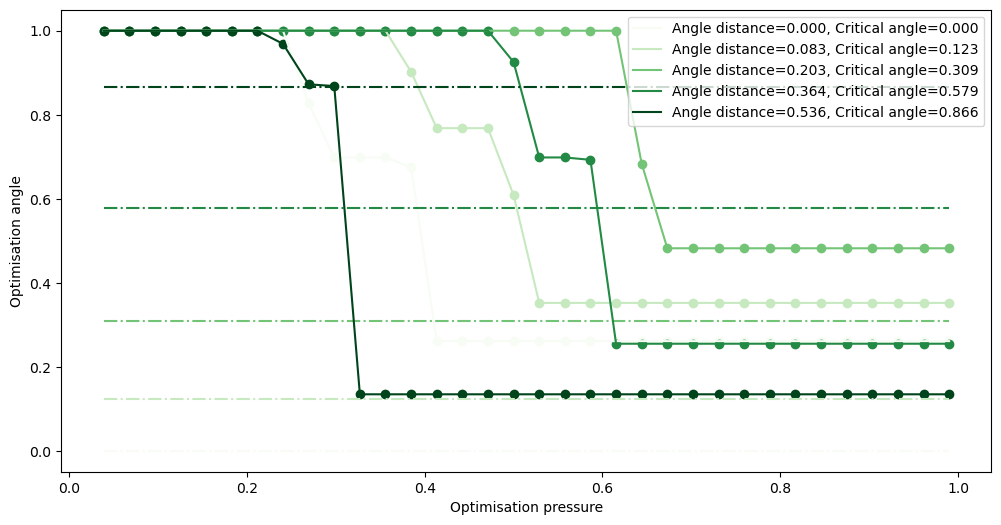}
        \caption{A visualisation of how the optimisation angle (cosine similarity) changes over increasing optimisation pressure for each proxy reward. This is the angle between the current direction of optimisation in $\Polytope$, i.e. $\left(\occmeas{\policy_{\alpha_{i+1}}}-\occmeas{\policy_{\alpha_i}}\right)$, and the proxy reward function projection $\QProj_\T\R_i$ (defined as $\CosDist{\VecProx}{\vec{d}}$ in the proof of~\Cref{thm:optimal_stopping}). Once the angle crosses the critical threshold, the algorithm stops. The critical threshold depends on the distance $\theta$ between the proxy and the true reward, and it is drawn in as a dotted line, with a color corresponding to the color of the proxy reward. Compare this plot to~\Cref{fig:goodhart-trends-five1} - we can see exact places where the phase transition happens, as the training run meets the boundary of the convex space. Also, compare to~\Cref{subfig:three-d-spatial}, where it can be seen how the algorithm stops (in blue) immediately after the training run crosses the corresponding critical angle value (in case fo the last two proxy rewards), or continues to the end (in case of the first two).}
    \end{subfigure}
    \caption{Summary plots for the Steepest Ascent training algorithm over five proxy reward functions.}
\end{figure}

\begin{figure}[htbp]
    \centering
    \begin{subfigure}[tb]{0.45\textwidth}
        \centering
        \includegraphics[width=1\textwidth]{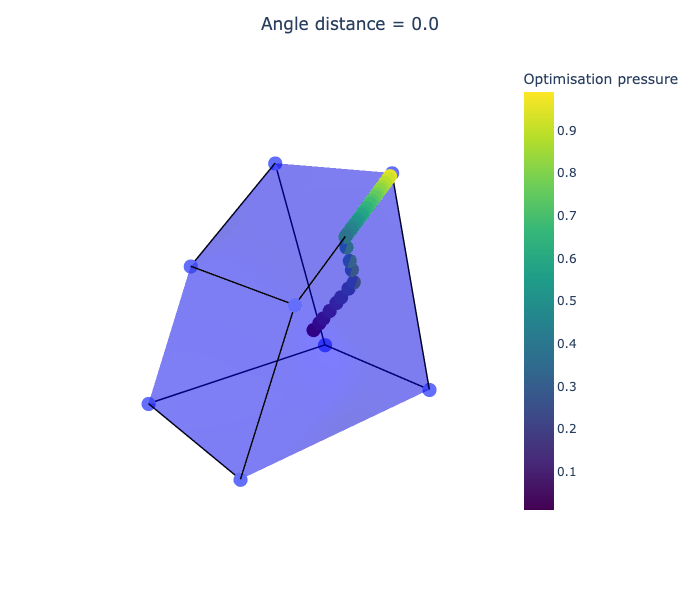}
    \end{subfigure}
    \hfill
    \begin{subfigure}[tb]{0.45\textwidth}
        \centering
        \includegraphics[width=1\textwidth]{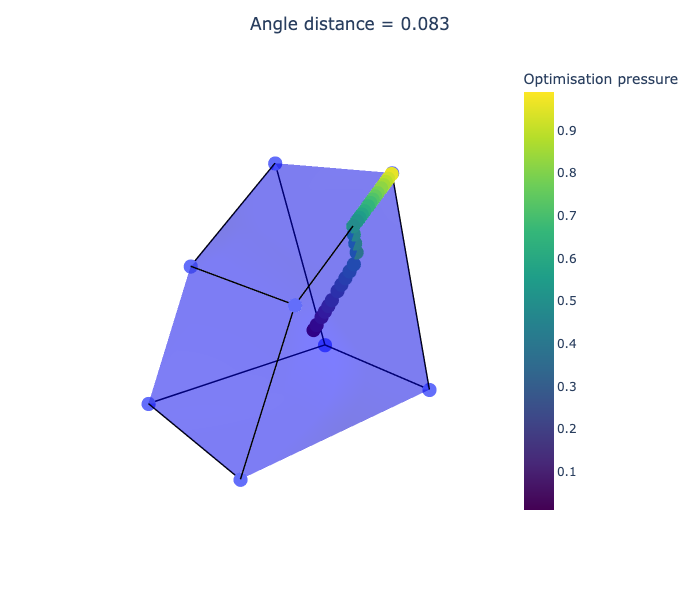}
    \end{subfigure}
    \hfill
    \begin{subfigure}[tb]{0.45\textwidth}
        \centering
        \includegraphics[width=1\textwidth]{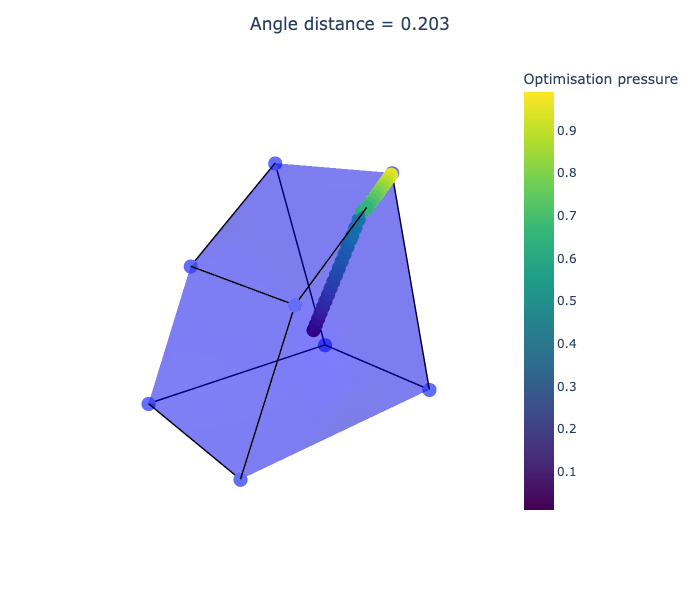}
    \end{subfigure}
    \hfill
    \begin{subfigure}[tb]{0.45\textwidth}
        \centering
        \includegraphics[width=1\textwidth]{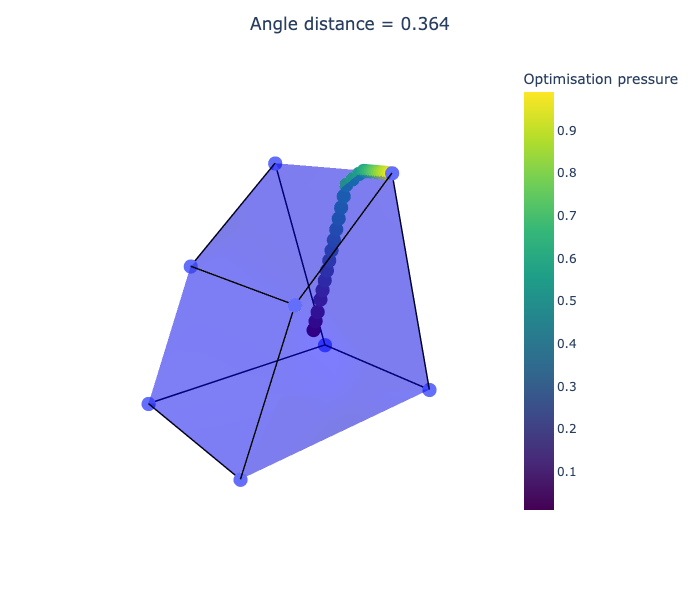}
    \end{subfigure}
    \hfill
    \begin{subfigure}[tb]{0.45\textwidth}
        \centering
        \includegraphics[width=1\textwidth]{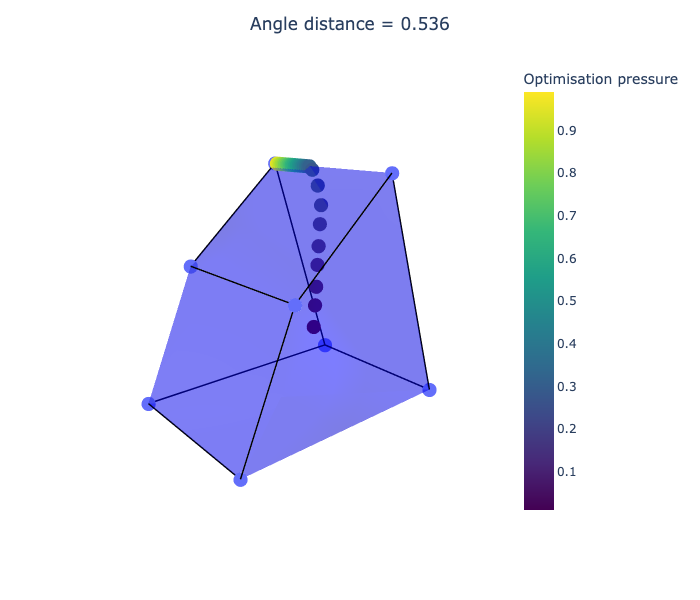}
    \end{subfigure}
    \caption{Trajectories of optimisations for different proxy rewards. Note that the occupancy measure space is $|S|(|A| - 1) = 3$-dimensional in this example, and the convex hull is over $|A|^{|S|} = 8$ deterministic policies. We hide the true/proxy reward vector fields for presentation clarity.}
    \label{fig:goodhart-trends-five1}
\end{figure}

\end{document}